\newtheorem{theorem}{Theorem}[section]
\newtheorem{lemma}[theorem]{Lemma}
\newtheorem{proposition}[theorem]{Proposition}
\newtheorem{corollary}[theorem]{Corollary}
\theoremstyle{definition}
\newtheorem{definition}[theorem]{Definition}
\newtheorem{remark}[theorem]{Remark}
\DeclareMathOperator*{\argmin}{argmin}
\numberwithin{equation}{section}
\def \bE {\mathbb{E}}
\def \bN {\mathbb{N}}
\def \bR {\mathbb{R}}
\def \cA {\mathcal{A}}
\def \cB {\mathcal{B}}
\def \cE {\mathcal{E}}
\def \cF {\mathcal{F}}
\def \cG {\mathcal{G}}
\def \cH {\mathcal{H}}
\def \cI {\mathcal{I}}
\def \cL {\mathcal{L}}
\def \cM {\mathcal{M}}
\def \cN {\mathcal{N}}
\def \cO {\mathcal{O}}
\def \cR {\mathcal{R}}
\def \cS {\mathcal{S}}
\def \cU {\mathcal{U}}
\def \cW {\mathcal{W}}
\def \Ba {{\boldsymbol{a}}}
\def \Bb {{\boldsymbol{b}}}
\def \Bi {{\boldsymbol{i}}}
\def \Bn {{\boldsymbol{n}}}
\def \Bs {{\boldsymbol{s}}}
\def \Bx {{\boldsymbol{x}}}
\def \By {{\boldsymbol{y}}}
\def \Lip {\,{\rm Lip}\,}
\def \Id {\,{\rm Id}\,}
\begin{document}
\title{Approximation bounds for norm constrained neural networks with applications to regression and GANs}
\author{
Yuling Jiao \thanks{School of Mathematics and Statistics, and
Hubei Key Laboratory of Computational Science, Wuhan University, Wuhan 430072, China. (yulingjiaomath@whu.edu.cn)}
\and
Yang Wang \thanks{Department of Mathematics, The Hong Kong University of Science and Technology, Clear Water Bay, Kowloon, Hong Kong, China. (yangwang@ust.hk)}
\and
Yunfei Yang \thanks{Department of Mathematics, The Hong Kong University of Science and Technology, Clear Water Bay, Kowloon, Hong Kong, China. (Corresponding author, yyangdc@connect.ust.hk)}
}
\date{}
\maketitle

\begin{abstract}
This paper studies the approximation capacity of ReLU neural networks with norm constraint on the weights. We prove upper and lower bounds on the approximation error of these networks for smooth function classes. The lower bound is derived through the Rademacher complexity of neural networks, which may be of independent interest. We apply these approximation bounds to analyze the convergences of regression using norm constrained neural networks and distribution estimation by GANs. In particular, we obtain convergence rates for over-parameterized neural networks. It is also shown that GANs can achieve optimal rate of learning probability distributions, when the discriminator is a properly chosen norm constrained neural network.

\smallskip
\noindent \textbf{Keywords:} Neural network, Approximation theory, Deep learning, GAN

\noindent \textbf{MSC codes:} 41A25, 62G08, 68T07
\end{abstract}

\section{Introduction}

The expressiveness and approximation capacity of neural networks has been an active research area in the past few decades. The universal approximation property of shallow neural networks with one hidden layer and various activation functions was widely discussed in the 1990s \citep{cybenko1989approximation,hornik1991approximation,pinkus1999approximation}. It was also shown that shallow neural networks can achieve attractive approximation rates for certain functions \citep{barron1993universal}. The recent breakthrough of deep learning has attracted much research on the approximation theory of deep neural networks. The approximation rates of ReLU deep neural networks have been well studied for many function classes, such as continuous functions \citep{yarotsky2017error,yarotsky2018optimal,shen2020deep}, smooth functions \citep{yarotsky2020phase,lu2021deep}, piecewise smooth functions \citep{petersen2018optimal}, shift-invariant spaces \citep{yang2022approximation} and band-limited functions \citep{montanelli2021deep}.

In practice, neural network models are trained by minimizing certain loss functions on observed data. The approximation theory provides estimates on the bias of the model, while the sample complexity of the model controls how well it can generalize to unseen data by learning from finite observed samples \citep{anthony2009neural,shalevshwartz2014understanding,mohri2018foundations}. In modern applications, the number of training samples is often smaller than the number of weights in neural networks. For the generalization performance in this case, as pointed out by \citet{bartlett1998sample}, the size of the weights is more important than the size of networks. The recent works \citep{neyshabur2015norm,bartlett2017spectrally,golowich2020size,barron2019complexity} also show that the sample complexity of deep neural networks can be controlled by certain norms of the weights. However, in the approximation theory literature, the approximation rates of deep neural networks are characterized by the number of weights \citep{yarotsky2017error,yarotsky2018optimal,yarotsky2020phase} or the number of neurons \citep{shen2020deep,lu2021deep}, rather than the size of weights.

Besides, many regularization methods have been introduced to enforce Lipschitz constraint on neural networks (for example, spectral normalization \citep{miyato2018spectral} and weight penalty \citep{brock2019large}). It has been demonstrated that the Lipschitz constraint on neural networks can improve robustness to adversarial examples \citep{cisse2017parseval}, and stabilize the training of Generative Adversarial Networks (GAN, \citep{goodfellow2014generative,arjovsky2017towards,arjovsky2017wasserstein}). However, these regularization methods often make explicit or implicit restrictions on some norms of the weights, which largely reduce the expressive power of the models. For instance, \citet{huster2019limitations} showed that ReLU neural networks with certain constraints on the weights cannot represent some simple functions, such as the absolute value function. Hence, it is desirable to study how norm constrains on the weights affect the approximation capacity of neural networks.

In this paper, we give upper and lower bounds on the approximation error of ReLU neural networks with certain norm constrain on the weights for smooth function classes. To be concrete, let $\phi_\theta:\bR^d \to \bR$ be a function computed by a multi-layer ReLU neural network with width $W$ and depth $L$, where $\theta$ represents the collection of weights. In the $\ell$-th layer, the neural network computes an affine transformation $T_\ell(\Bx) = A_\ell \Bx +\Bb_\ell$ and then applies the ReLU activation function element-wise (no activation in the output layer). When all the biases $\Bb_\ell =\boldsymbol{0}$, it is natural to consider the constraint on the product of matrix norm $\prod_{\ell =0}^L \|A_\ell\| \le K$, which controls the generalization ability \citep{bartlett2017spectrally,golowich2020size}. We generalize this idea to general bias $\Bb_\ell$ and define the norm constraint $\kappa(\theta) \le K$ as (\ref{norm constraint}), which suitably constrains the bias. Our main results estimate the approximation error for H\"older continuous function $f\in \cH^\alpha(\bR^d)$ with smoothness index $\alpha>0$. We show that if the width $W$ and depth $L$ are sufficiently large ($W$ needs to grow with $K$), then it holds that
\[
\sup_{f\in \cH^\alpha} \inf_{\kappa(\theta)\le K} \| f - \phi_\theta\|_{C([0,1]^d)} \lesssim K^{-\alpha/(d+1)}.
\]
In addition, if $d>2\alpha$, then for any neural networks with width $W\ge 2$ and depth $L$,
\[
\sup_{f\in \cH^\alpha} \inf_{\kappa(\theta)\le K} \| f - \phi_\theta\|_{C([0,1]^d)} \gtrsim (K \sqrt{L})^{-2\alpha/(d-2\alpha)}.
\]

The advantage of our approximation upper bound is that it only depends on the norm constraint so that it can be combined with the generalization bounds in \citep{bartlett2017spectrally,golowich2020size} and applied to over-parameterized neural networks. For comparison, in \citep{shen2020deep,lu2021deep}, the approximation error is bounded by the width and depth, but there is no restriction on the weights. \citet{yarotsky2017error,yarotsky2018optimal,yarotsky2020phase} obtained approximation bounds in terms of the number of non-zero weights. Although this can be regarded as the sum of zero-norm of the weights, it is more like a constraint on the network architecture, rather than a constraint on the size of the weights. In \citep{petersen2018optimal,boelcskei2019optimal,schmidthieber2021kolmogorov}, the authors also provide approximation results of deep neural networks  with bound  on the maximum value of the weights. But these bounds can not directly control the generalization. On the contrary, our norm constraint provides a bound on the Rademacher complexity of the network (see Lemma \ref{Rademacher bound} and \citet{golowich2020size}).

To illustrate the application of the approximation bounds, we study the regression problem of estimating an unknown function $f_0\in \cH^\alpha$ from its noisy samples. Combining the empirical process theory with our approximation bounds, we can estimate the convergence rate of the empirical risk minimization using norm constrained neural networks. In particular, we obtain convergence rates for over-parameterized neural networks, which give statistical guarantee for neural networks used in practice. We also apply our results to generative adversarial networks. It is shown that, if a properly chosen norm constrained neural network is used as the discriminator, GAN is able to achieve the optimal convergence rate of learning probability distributions.

The rest of the paper is organized as follows. In Section \ref{sec: NN}, we define the norm constraint on neural networks and give some preliminary results. Section \ref{sec: approximation} presents and proves our main results on the approximation bounds for norm constrained neural networks. In Section \ref{sec: application}, we apply our results to study the convergence rates of two machine learning algorithms. Finally, Section \ref{sec: conclusion} concludes this paper with a discussion on possible future directions of research.

\subsection{Notation}

The set of positive integers is denoted by $\bN:=\{1,2,\dots\}$. For convenience, we also use the notation $\bN_0:= \bN \cup \{0\}$.
The cardinality of a set $S$ is denoted by $|S|$.
We use $\|\Bx\|_p$ to denote the $p$-norm of a vector $\Bx\in\bR^d$.
For a multi-index $\Bs=(s_1,\dots,s_d)\in \bN_0^d$, the symbol $\partial^\Bs$ denotes the partial differential operator $\partial^\Bs:= (\frac{\partial}{\partial x_1})^{s_1} \dots (\frac{\partial}{\partial x_d})^{s_d}$ and we use the convention that $\partial^\Bs$ is the identity operator when $\Bs=\boldsymbol{0}$.
If $X$ and $Y$ are two quantities, we denote $X\land Y:= \min\{X,Y\}$ and $X\lor Y:= \max\{X,Y\}$. We use $X \lesssim Y$ or $Y \gtrsim X$ to denote the statement that $X\le CY$ for some constant $C>0$. We denote $X \asymp Y$ when $X \lesssim Y \lesssim X$.
Finally, we introduce the covering number and packing number to measure the complexity of a set in a metric space.

\begin{definition}[Covering and Packing numbers]
Let $\rho$ be a metric on $\cM$ and $S\subseteq \cM$. For $\epsilon>0$, a set $T \subseteq \cM$ is called an $\epsilon$-covering (or $\epsilon$-net) of $S$ if for any $x\in S$ there exists $y\in T$ such that $\rho(x,y)\le \epsilon$. A subset $U \subseteq S$ is called an $\epsilon$-packing of $S$ (or $\epsilon$-separated) if any two elements $x\neq y$ in $U$ satisfy $\rho(x,y)>\epsilon$. The $\epsilon$-covering and $\epsilon$-packing numbers of $S$ are denoted respectively by
\begin{align*}
\cN_c(S,\rho,\epsilon) &:= \min\{|T|: T \mbox{ is an $\epsilon$-covering of } S \}, \\
\cN_p(S,\rho,\epsilon) &:= \max\{|U|: U \mbox{ is an $\epsilon$-packing of } S \}.
\end{align*}
\end{definition}
It is not hard to check that $\cN_p(S,\rho,2\epsilon) \le \cN_c(S,\rho,\epsilon) \le \cN_p(S,\rho,\epsilon)$.

\section{Neural networks with norm constraints}\label{sec: NN}

Let $L,N_1,\dots, N_L \in \bN$. We consider the function $\phi:\bR^{d} \to \bR^{k}$ that can be parameterized by a ReLU neural network of the form
\begin{equation}\label{NN standard form}
\begin{aligned}
\phi_0(\Bx) &= \Bx, \\
\phi_{\ell+1}(\Bx) &= \sigma(A_{\ell} \phi_{\ell}(\Bx)+\Bb_\ell), \quad \ell = 0,\dots,L-1, \\
\phi(\Bx) &= A_L \phi_L(\Bx),
\end{aligned}
\end{equation}
where $A_\ell \in \bR^{N_{\ell+1}\times N_{\ell}}$, $\Bb_\ell\in \bR^{N_{\ell+1}}$ with $N_0 =d$ and $N_{L+1} =k$. The activation function $\sigma(x) := x\lor 0$ is the Rectified Linear Unit function (ReLU, \citep{nair2010rectified}) and it is applied element-wise. The numbers $W:=\max\{N_1,\dots,N_L\}$ and $L$ are called the width and depth of the neural network, respectively. We denote by $\cN\cN_{d,k}(W,L)$ the set of functions that can be parameterized by ReLU neural networks with width $W$ and depth $L$. When the input dimension $d$ and output dimension $k$ are clear from contexts, we simply denote it by $\cN\cN(W,L)$. Sometimes, we will use the notation $\phi_\theta \in \cN\cN(W,L)$ to emphasize that the neural network function $\phi_\theta$ is parameterized by
\[
\theta:= ((A_0,\Bb_0),\dots,(A_{L-1},\Bb_{L-1}), A_L).
\]

Next, we introduce a special class of neural network functions $\cS\cN\cN(W,L)$ which contains functions of the form
\begin{equation}\label{NN special form}
\tilde{\phi}(\Bx) = \tilde{A}_L \sigma(\tilde{A}_{L-1}\sigma(\cdots \sigma(\tilde{A}_0 \tilde{\Bx}) ) ), \quad \tilde{\Bx} :=
\begin{pmatrix}
\Bx \\
1
\end{pmatrix},
\end{equation}
where $\tilde{A}_\ell \in \bR^{N_{\ell+1} \times N_\ell}$ with $N_0=d+1$ and $\max\{N_1,\dots,N_L\}= W$. Since these functions can also be written in the form (\ref{NN standard form}) with $\Bb_\ell =\boldsymbol{0}$ for all $1\le \ell \le L-1$, we know that $\cS\cN\cN(W,L)\subseteq \cN\cN(W,L)$. There is a natural way to introduce norm constraint on the weights: for any $K\ge 0$, we denote by $\cS\cN\cN(W,L,K)$ the set of functions in the form (\ref{NN special form}) that satisfies
\[
\prod_{\ell=0}^L \|\tilde{A}_\ell\| \le K,
\]
where $\|A\|$ is some norm of a matrix $A = (a_{i,j}) \in \bR^{m\times n}$ and, for simplicity, we only consider the operator norm defined by $\| A\| := \sup_{\|\Bx\|_\infty \le 1} \|A\Bx\|_\infty$ in this paper. It is well-known that $\| A\|$ is the maximum $1$-norm of the rows of $A$:
\[
\| A\| = \max_{1\le i\le m} \sum_{j=1}^{n} |a_{i,j}|.
\]
Hence, we make a constraint on the $1$-norm of the incoming weights of each neuron.

To introduce norm constraint for the class $\cN\cN(W,L)$, we observe that any $\phi \in \cN\cN(W,L)$ parameterized as (\ref{NN standard form}) can be written in the form (\ref{NN special form}) with
\[
\tilde{A}_L = (A_L,\boldsymbol{0}), \quad
\tilde{A}_\ell =
\begin{pmatrix}
A_\ell & \Bb_\ell \\
\boldsymbol{0} & 1
\end{pmatrix}
,\ \ell = 0,\dots,L-1,
\]
and
\begin{equation}\label{norm relation}
\prod_{\ell=0}^L \|\tilde{A}_\ell\| = \|A_L\| \prod_{\ell =0}^{L-1} \max\{\| (A_\ell,\Bb_\ell)\|,1\}.
\end{equation}
Hence, we define the norm constrained neural network $\cN\cN(W,L,K)$ as the set of functions $\phi_\theta \in \cN\cN(W,L)$ of the form (\ref{NN standard form}) that satisfies the following norm constraint on the weights
\begin{equation}\label{norm constraint}
\kappa(\theta) := \|A_L\| \prod_{\ell =0}^{L-1} \max\{\| (A_\ell,\Bb_\ell)\|,1\} \le K.
\end{equation}
The following proposition summarizes the relation between the two neural network classes $\cN\cN(W,L,K)$ and $\cS\cN\cN(W,L,K)$. It shows that we can essentially regard these two classes as the same when studying their expressiveness.

\begin{proposition}\label{network class relation}
$\cS\cN\cN(W,L,K) \subseteq \cN\cN(W,L,K) \subseteq \cS\cN\cN(W+1,L,K)$.
\end{proposition}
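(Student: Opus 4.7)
The plan is to handle the two inclusions separately, relying on identity (\ref{norm relation}) throughout and, for the harder direction, on the positive homogeneity $\sigma(\gamma x)=\gamma\sigma(x)$ for $\gamma\ge 0$.

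For the second inclusion $\cN\cN(W,L,K)\subseteq\cS\cN\cN(W+1,L,K)$, I would use the canonical embedding already set up in the excerpt: given $\phi_\theta$ parameterized as in (\ref{NN standard form}), set
\[
\tilde{A}_\ell=\begin{pmatrix} A_\ell & \Bb_\ell \\ \boldsymbol{0} & 1\end{pmatrix},\ 0\le \ell< L,\qquad \tilde{A}_L=(A_L,\boldsymbol{0}).
\]
A direct computation shows that the resulting function in special form (\ref{NN special form}) equals $\phi_\theta$; the hidden widths increase from $N_\ell$ to $N_\ell+1$, accounting for the $W+1$; and identity (\ref{norm relation}) yields $\prod_\ell\|\tilde{A}_\ell\|=\kappa(\theta)\le K$.

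The first inclusion $\cS\cN\cN(W,L,K)\subseteq\cN\cN(W,L,K)$ is more delicate. Given $\tilde\phi\in\cS\cN\cN(W,L,K)$, the naive conversion---split $\tilde{A}_0=(A_0,\Bb_0)$, take $\Bb_\ell=\boldsymbol{0}$ and $A_\ell=\tilde{A}_\ell$ for $1\le\ell<L$, and $A_L=\tilde{A}_L$---produces a $\phi_\theta$ equal to $\tilde\phi$ with the same widths, but only the bound
\[
\kappa(\theta)=\|\tilde{A}_L\|\prod_{\ell=0}^{L-1}\max\{\|\tilde{A}_\ell\|,1\},
\]
which can exceed $\prod_\ell\|\tilde{A}_\ell\|\le K$ whenever some $\|\tilde{A}_\ell\|<1$. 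The remedy is to first rescale: replace each $\tilde{A}_\ell$ by $\gamma_\ell\tilde{A}_\ell$ for positive factors with $\prod_\ell\gamma_\ell=1$, which by positive homogeneity of $\sigma$ leaves $\tilde\phi$ unchanged. Writing $c_\ell:=\|\tilde{A}_\ell\|$ and assuming all $c_\ell>0$ (otherwise $\tilde\phi\equiv 0$ and the inclusion is trivial), I would choose $\gamma_\ell=1/c_\ell$ for $\ell<L$ and $\gamma_L=\prod_{\ell<L}c_\ell$, thereby normalizing every hidden-layer norm to $1$ and concentrating the total weight in the new output norm $\prod_\ell c_\ell\le K$. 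Applying the naive conversion to the rescaled network then gives $\kappa(\theta)=\prod_\ell c_\ell\le K$, as required.

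The only real obstacle is the $\max\{\cdot,1\}$ clipping in the definition of $\kappa(\theta)$: it makes the naive conversion lossy. Positive homogeneity of the ReLU provides exactly the rescaling freedom needed to neutralize this clipping, by normalizing each hidden-layer norm to $1$ and absorbing all of the weight size into the unclipped factor $\|A_L\|$.
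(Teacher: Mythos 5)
Your proof is correct and takes essentially the same route as the paper's: the canonical block-matrix embedding for $\cN\cN(W,L,K)\subseteq\cS\cN\cN(W+1,L,K)$, and a positive-homogeneity rescaling that sets every hidden-layer norm to $1$ (so the $\max\{\cdot,1\}$ clipping in $\kappa(\theta)$ becomes harmless) while pushing the full product $\prod_\ell\|\tilde{A}_\ell\|\le K$ into the output matrix for the reverse inclusion. The only difference is that you spell out the rescaling factors and the degenerate case where some $\|\tilde{A}_\ell\|=0$, both of which the paper leaves implicit.
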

\begin{proof}
By the definition (\ref{norm constraint}) and the relation (\ref{norm relation}), it is easy to see that $\cN\cN(W,L,K) \subseteq \cS\cN\cN(W+1,L,K)$. Conversely, for any $\tilde{\phi}\in \cS\cN\cN(W,L,K)$ of the form (\ref{NN special form}), by the absolute homogeneity of the ReLU function, we can always rescale $\tilde{A}_\ell$ such that $\|\tilde{A}_L\| \le K$ and $\|\tilde{A}_\ell\|=1$ for $\ell \neq L$. Since the function $\tilde{\phi}$ can also be parameterized in the form (\ref{NN standard form}) with $\theta = ( \tilde{A}_0,  (\tilde{A}_1,\boldsymbol{0}),\dots,(\tilde{A}_{L-1},\boldsymbol{0}), \tilde{A}_L)$ and $\kappa(\theta) = \prod_{\ell=0}^L \|\tilde{A}_\ell\|\le K$, we have $\tilde{\phi}\in \cN\cN(W,L,K)$.
\end{proof}

The sample complexity of $\cS\cN\cN(W,L,K)$ has been studied in the recent works \citep{neyshabur2015norm,neyshabur2018pac,bartlett2017spectrally,golowich2020size}. By Proposition \ref{network class relation}, these sample complexity bounds can also be applied to $\cN\cN(W,L,K)$. We will use the Rademacher complexity to derive lower bounds for the approximation capacity of norm constrained neural networks.

\begin{definition}[Rademacher complexity]
Given a set $S\subseteq \bR^n$, the Rademacher complexity of $S$ is denoted by
\[
\cR_n(S) := \bE_{\xi_{1:n}} \left[ \sup_{(s_1,\dots,s_n)\in S} \frac{1}{n} \sum_{i=1}^n \xi_i s_i  \right],
\]
where $\xi_{1:n} = \{\xi_i\}_{i=1}^n$ is a sequence of i.i.d. Rademacher random variables which take the values $1$ and $-1$ with equal probability $1/2$.
\end{definition}

\begin{lemma}\label{Rademacher bound}
For any $\Bx_1,\dots,\Bx_n \in [-B,B]^d$ with $B\ge 1$, let $S:= \{(\phi(\Bx_1),\dots,\phi(\Bx_n)) :\phi \in \cS\cN\cN_{d,1}(W,L,K) \} \subseteq \bR^n$, then
\[
\cR_n(S) \le \frac{1}{n} K\sqrt{2(L+2+\log(d+1))} \max_{1\le j\le d+1} \sqrt{ \sum_{i=1}^n x_{i,j}^2} \le \frac{B K\sqrt{2(L+2+\log(d+1))}}{\sqrt{n}},
\]
where $x_{i,j}$ is the $j$-th coordinate of the vector $\tilde{\Bx}_i=(\Bx_i^\intercal,1)^\intercal\in \bR^{d+1}$. When $W\ge 2$,
\[
\cR_n(S) \ge \frac{K}{2\sqrt{2}n} \max_{1\le j\le d+1} \sqrt{ \sum_{i=1}^n x_{i,j}^2} \ge \frac{K}{2\sqrt{2n}}.
\]
\end{lemma}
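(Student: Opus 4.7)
\textbf{Plan for Lemma~\ref{Rademacher bound}.} The two bounds are proved separately. The upper bound is a Golowich--Rakhlin--Shamir style moment-generating-function (MGF) peeling argument, and the lower bound is obtained by constructing an explicit network in $\cS\cN\cN_{d,1}(W,L,K)$ realising a signed coordinate projection via the ReLU identity $\sigma(t)-\sigma(-t)=t$.

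\textbf{Upper bound.} First, using the positive homogeneity $\sigma(cz)=c\sigma(z)$ for $c\ge 0$, I can pull positive scalars through each ReLU and renormalize so that $\|\tilde A_\ell\|\le 1$ for $0\le \ell\le L-1$ and $\|\tilde A_L\|\le K$, without enlarging the product $\prod_\ell\|\tilde A_\ell\|\le K$. For any $\lambda>0$, Jensen's inequality applied twice (through $\exp$ and $\log$) gives
\[
\lambda n\,\cR_n(S)\;\le\;\log\bE_\xi\sup_\phi\exp\!\Bigl(\lambda\sum_{i=1}^n \xi_i\phi(\Bx_i)\Bigr).
\]
I then peel layers top-down. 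The top layer is peeled by $\sup_{\|\tilde A_L\|_1\le K}\tilde A_L v = K\max_{j}|v_j|$, and the max-absolute-value is turned into a sum over $2N_L$ signed indices via $\exp(\lambda K\max_j|v_j|)\le\sum_{j,\epsilon\in\{\pm 1\}}\exp(\epsilon\lambda K v_j)$. Each ReLU layer is then peeled with the GRS identity $\sigma(z)=(z+|z|)/2$ combined with Cauchy--Schwarz under the $\xi$-expectation, which contributes a factor $2$ per layer while preserving the $\ell_1$ bounds on the rows of the remaining $\tilde A_\ell$. After $L$ peelings, what remains is the MGF of a linear functional $\lambda K\, a^\intercal\sum_i\xi_i\tilde\Bx_i$ over $\|a\|_1\le 1$, and a Hoeffding/Khintchine bound for signed $\pm 1$ sums yields at most $\exp\!\bigl(\tfrac12\lambda^2 K^2\max_j\sum_i x_{i,j}^2\bigr)$. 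Combining accumulated factors gives
\[
\lambda n\,\cR_n(S)\;\le\;\log\!\bigl(2^{L+1}(d+1)\bigr)\;+\;\tfrac12\lambda^2 K^2\max_{1\le j\le d+1}\sum_{i=1}^n x_{i,j}^2,
\]
and optimising $\lambda$ equates the two terms and (using $2\log 2<2$) yields the constant $\sqrt{2(L+2+\log(d+1))}$. The second inequality in the claim is immediate from $|x_{i,j}|\le B$ together with $B\ge 1$.

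\textbf{Lower bound.} For each $j\in\{1,\dots,d+1\}$, I construct a width-$2$ network $\phi_j\in\cS\cN\cN_{d,1}(W,L,K)$ with
\[
\tilde A_0=\begin{pmatrix}\Be_j^\intercal\\[1pt] -\Be_j^\intercal\end{pmatrix},\qquad \tilde A_\ell=I_2\ (1\le \ell\le L-1),\qquad \tilde A_L=\bigl(K/2,\ -K/2\bigr),
\]
whose operator norms are $1,1,\dots,1,K$, so the product equals $K$. The first ReLU produces the non-negative pair $(\sigma(\tilde x_j),\sigma(-\tilde x_j))$, which the identity layers preserve, and applying $\tilde A_L$ gives $(K/2)(\sigma(\tilde x_j)-\sigma(-\tilde x_j))=(K/2)\tilde x_j$; hence $\phi_j(\Bx_i)=(K/2)x_{i,j}$. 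Since $-\phi_j$ also belongs to the class,
\[
\cR_n(S)\;\ge\;\frac{1}{n}\bE_\xi\max_{1\le j\le d+1}\Bigl|\sum_{i=1}^n\xi_i\phi_j(\Bx_i)\Bigr|\;=\;\frac{K}{2n}\bE_\xi\max_{1\le j\le d+1}\Bigl|\sum_{i=1}^n\xi_i x_{i,j}\Bigr|.
\]
Using $\bE\max\ge\max\bE$ and the sharp Khintchine inequality $\bE|\sum_i\xi_i a_i|\ge(\sum_i a_i^2)^{1/2}/\sqrt 2$ establishes the first lower bound. The second follows on choosing $j=d+1$, where $x_{i,d+1}=1$ makes $\sum_i x_{i,d+1}^2=n$.

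\textbf{Main obstacle.} The delicate point is the upper bound: a naive Ledoux--Talagrand contraction at each ReLU contributes a multiplicative factor $2$, compounding to a useless $2^L$. Exponentiating \emph{before} peeling converts these factors into an \emph{additive} $L\log 2$ inside the logarithm, which, after optimising $\lambda$, becomes only $\sqrt L$. Keeping careful track of the Khintchine constant $1/\sqrt 2$ (rather than $1/2$) in the lower bound is what gives the denominator $2\sqrt 2$ in the claim.
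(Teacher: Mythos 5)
Your proposal is correct and follows essentially the same route as the paper: the paper simply cites \citet[Theorem 3.2]{golowich2020size} for the upper bound (whose MGF-peeling proof you correctly reconstruct, including the key point that exponentiating before peeling turns the per-layer factor $2$ into an additive $L\log 2$), and for the lower bound the paper embeds the linear class $\{\Bx\mapsto\Ba^\intercal\tilde\Bx:\|\Ba\|_1\le K/2\}$ via $\Ba^\intercal\tilde\Bx=\sigma(\Ba^\intercal\tilde\Bx)-\sigma(-\Ba^\intercal\tilde\Bx)$ and uses $\sup_{\|\Ba\|_1\le K/2}\sum_i\xi_i\Ba^\intercal\tilde\Bx_i=\tfrac{K}{2}\|\sum_i\xi_i\tilde\Bx_i\|_\infty$ followed by $\bE\max\ge\max\bE$ and Khintchine, which is exactly your argument restricted to the extreme points $\pm\tfrac{K}{2}\Be_j$ of that $\ell_1$ ball. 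Your explicit identity-layer padding plays the role of the paper's appeal to Proposition \ref{basic construct} for the inclusion into $\cS\cN\cN(W,L,K)$, so the two proofs coincide in substance.
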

\begin{proof}
The upper bound is from \citet[Theorem 3.2]{golowich2020size}.

For the lower bound, we consider the linear function class $\cF :=\{\Bx\mapsto \Ba^\intercal \tilde{\Bx}: \Ba\in \bR^{d+1}, \|\Ba\|_1\le K/2 \}$. Observing that $\Ba^\intercal \tilde{\Bx} = \sigma(\Ba^\intercal \tilde{\Bx}) - \sigma(-\Ba^\intercal \tilde{\Bx})$, we conclude that $\cF\subseteq \cS\cN\cN(2,1,K) \subseteq \cS\cN\cN(W,L,K)$, where the last inclusion follows from Proposition \ref{basic construct}. Therefore,
\begin{align*}
\cR_n(S) &= \frac{1}{n} \bE_{\xi_{1:n}} \left[ \sup_{\phi\in \cS\cN\cN(W,L,K)} \sum_{i=1}^n \xi_i \phi(\Bx_i) \right] \ge \frac{1}{n} \bE_{\xi_{1:n}} \left[ \sup_{\|\Ba\|_1\le K/2} \sum_{i=1}^n \xi_i \Ba^\intercal \tilde{\Bx}_i \right] \\
& = \frac{K}{2n} \bE_{\xi_{1:n}} \left\| \sum_{i=1}^n \xi_i \tilde{\Bx}_i \right\|_\infty = \frac{K}{2n} \bE_{\xi_{1:n}} \max_{1\le j\le d+1} \left| \sum_{i=1}^n \xi_i x_{i,j} \right| \\
&\ge \frac{K}{2n} \max_{1\le j\le d+1} \bE_{\xi_{1:n}} \left| \sum_{i=1}^n \xi_i x_{i,j} \right| \\
&\ge \frac{K}{2\sqrt{2}n} \max_{1\le j\le d+1} \sqrt{ \sum_{i=1}^n x_{i,j}^2},
\end{align*}
where the last inequality is due to Khintchine inequality, see \citet[Lemma 4.1]{ledoux1991probability} and \citet{haagerup1981best}.
\end{proof}

The next proposition shows that we can always normalize the weights of $\phi \in \cN\cN(W,L,K)$ such that the norm of each weight matrix in the hidden layers is at most one.

\begin{proposition}[Rescaling]\label{normalize}
Every $\phi\in \cN\cN(W,L,K)$ can be written in the form (\ref{NN standard form}) such that $\|A_L\|\le K$ and $\| (A_\ell,\Bb_\ell)\|\le 1$ for $0\le \ell \le L-1$.
\end{proposition}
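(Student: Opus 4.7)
The plan is to exploit the positive homogeneity of ReLU, $\sigma(cx) = c\sigma(x)$ for $c \ge 0$, in order to push all of the ``excess'' weight mass from the hidden layers into the output matrix $A_L$ while keeping the represented function unchanged. Concretely, I would introduce scaling factors $r_0, r_1, \dots, r_L > 0$ with $r_0 = 1$, and observe that replacing each hidden activation $\phi_\ell$ by $r_\ell \phi_\ell$ is realized by the reparametrization
\[
A_\ell' = \frac{r_{\ell+1}}{r_\ell} A_\ell, \qquad \Bb_\ell' = r_{\ell+1} \Bb_\ell \quad (\ell = 0, \dots, L-1), \qquad A_L' = \frac{A_L}{r_L},
\]
which one checks by induction on $\ell$ preserves the output $\phi(\Bx)$ exactly.

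Next I would make the specific choice $\mu_\ell := \max\{\|(A_\ell, \Bb_\ell)\|, 1\} \ge 1$ for $0 \le \ell \le L-1$ and $r_\ell := \prod_{j=0}^{\ell-1} \mu_j^{-1}$, so that $r_{\ell+1}/r_\ell = 1/\mu_\ell$ and $r_L = 1 / \prod_{j=0}^{L-1}\mu_j$. With this choice the output-layer bound is immediate from the definition (\ref{norm constraint}):
\[
\|A_L'\| = \|A_L\| \prod_{j=0}^{L-1} \mu_j = \kappa(\theta) \le K.
\]
For the hidden layers, using the explicit row-$1$-norm expression for $\|\cdot\|$ together with $1/(\mu_0 \cdots \mu_\ell) \le 1/\mu_\ell$ (each $\mu_j \ge 1$), a short computation gives
\[
\|(A_\ell', \Bb_\ell')\| \le \frac{1}{\mu_\ell}\,\|(A_\ell, \Bb_\ell)\| \le 1,
\]
as required.

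The main pitfall I expect, and the reason a purely $\cS\cN\cN$-style argument does not work directly, is that one might try to pass to the representation from Proposition \ref{network class relation} and uniformly rescale each $\tilde{A}_\ell$ by a scalar $c_\ell$ with $\prod_\ell c_\ell = 1$. Such a rescaling corrupts the bottom row $(\boldsymbol{0}, 1)$ of $\tilde{A}_\ell$, which is precisely the structural feature one needs to convert back to the standard form (\ref{NN standard form}). The fix, as above, is to rescale $A_\ell$ and $\Bb_\ell$ by different factors ($r_{\ell+1}/r_\ell$ versus $r_{\ell+1}$), so that the hidden biases are shrunk even more aggressively than the weight matrices. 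This asymmetry is harmless exactly because each $\mu_\ell \ge 1$, and it is what keeps the entire argument inside the class $\cN\cN(W, L)$.
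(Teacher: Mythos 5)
Your proposal is correct and follows essentially the same route as the paper's proof: the scaling factors $r_\ell = \prod_{j<\ell}\mu_j^{-1}$ reproduce exactly the paper's reparametrization $\tilde{A}_\ell = A_\ell/k_\ell$, $\tilde{\Bb}_\ell = \Bb_\ell/\prod_{i\le\ell}k_i$, $\tilde{A}_L = A_L\prod_{i<L}k_i$, with the same use of positive homogeneity of ReLU and the same norm estimates. The only difference is organizational (you set up general scalings first and then specialize, while the paper verifies the equivalence by a direct induction on the layers).
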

\begin{proof}
We first parameterize $\phi$ in the form (\ref{NN standard form}) and denote $k_\ell:=\max \{\| (A_\ell,\Bb_\ell)\|,1\}$ for all $0\le \ell \le L-1$. We let $\tilde{A}_\ell = A_\ell/k_\ell$, $\tilde{\Bb}_\ell = \Bb_\ell/(\prod_{i=0}^{\ell}k_i)$, $\tilde{A}_L=A_L \prod_{i=0}^{L-1}k_i$ and consider the new parameterization of $\phi$:
\[
\tilde{\phi}_{\ell+1}(\Bx) = \sigma(\tilde{A}_\ell \tilde{\phi}_\ell(\Bx)+\tilde{\Bb}_\ell), \quad \tilde{\phi}_0(\Bx)=\Bx.
\]
It is easy to check that $\|\tilde{A}_L\|\le K$ and
\[
\| (\tilde{A}_\ell, \tilde{\Bb}_\ell) \| = \frac{1}{k_\ell} \left\| \left(A_\ell, \frac{\Bb_\ell}{\prod_{i=0}^{\ell-1}k_i} \right) \right\| \le \frac{1}{k_\ell} \| (A_\ell, \Bb_\ell) \| \le 1,
\]
where the second inequality is due to $k_i\ge 1$.

Next, we show that $\phi_\ell(\Bx) = \left( \prod_{i=0}^{\ell-1} k_i \right) \tilde{\phi}_{\ell}(\Bx)$ by induction. For $\ell =1$, by the absolute homogeneity of the ReLU function,
\[
\phi_1(\Bx) = \sigma(A_0 \Bx+\Bb_0) = k_0 \sigma(\tilde{A}_0 \Bx+\tilde{\Bb}_0) = k_0 \tilde{\phi}_1(\Bx).
\]
Inductively, one can conclude that
\begin{align*}
\phi_{\ell+1}(\Bx) &= \sigma(A_{\ell} \phi_{\ell}(\Bx)+\Bb_\ell) = \left( \prod_{i=0}^\ell k_i \right) \sigma\left( \tilde{A}_\ell \frac{\phi_\ell(\Bx)}{\prod_{i=0}^{\ell-1} k_i} + \tilde{\Bb}_\ell \right) \\
&= \left( \prod_{i=0}^\ell k_i \right)  \sigma\left( \tilde{A}_\ell \tilde{\phi}_\ell(\Bx) + \tilde{\Bb}_\ell \right) = \left( \prod_{i=0}^\ell k_i \right) \tilde{\phi}_{\ell+1}(\Bx),
\end{align*}
where the third equality is due to induction. Therefore,
\[
\phi(\Bx) = A_L\phi_L(\Bx) = A_L \left( \prod_{i=0}^{L-1} k_i \right) \tilde{\phi}_{L}(\Bx) = \tilde{A}_L \tilde{\phi}_L (\Bx),
\]
which means $\phi$ can be parameterized by $((\tilde{A}_0,\tilde{\Bb}_0),\dots,(\tilde{A}_{L-1},\tilde{\Bb}_{L-1}), \tilde{A}_L)$ and we finish the proof.
\end{proof}

In the following proposition, we summarize some basic operations on neural networks. These operations will be useful for the construction of neural networks, when we study the approximation capacity.

\begin{proposition}\label{basic construct}
Let $\phi_1\in \cN\cN_{d_1,k_1}(W_1,L_1,K_1)$ and $\phi_2\in \cN\cN_{d_2,k_2}(W_2,L_2,K_2)$.
\begin{enumerate}[label=\textnormal{(\roman*)},parsep=0pt]
\item If $d_1=d_2$, $k_1=k_2$, $W_1\le W_2$, $L_1\le L_2$ and $K_1\le K_2$, then $\cN\cN_{d_1,k_1}(W_1,L_1,K_1) \subseteq \cN\cN_{d_2,k_2}(W_2,L_2,K_2)$.

\item \textnormal{\textbf{(Composition)}} If $k_1 = d_2$, then $\phi_2 \circ \phi_1 \in \cN\cN_{d_1,k_2}(\max\{W_1,W_2\},L_1+L_2, K_2\max\{K_1,1\})$.
Let $A\in \bR^{d_2\times d_1}$ and $\Bb\in \bR^{d_2}$. Define the function $\phi(\Bx) :=\phi_2(A\Bx+\Bb)$ for $\Bx\in \bR^{d_1}$, then $\phi\in \cN\cN_{d_1,k_2}(W_2,L_2, K_2 \max\{\|(A,\Bb)\|,1\})$.

\item \textnormal{\textbf{(Concatenation)}} If $d_1=d_2$, define $\phi(\Bx):=(\phi_1(\Bx),\phi_2(\Bx))$, then $\phi\in \cN\cN_{d_1,k_1+k_2}(W_1+W_2,\max\{L_1,L_2\},\max\{K_1,K_2\})$.

\item \textnormal{\textbf{(Linear Combination)}} If $d_1=d_2$ and $k_1=k_2$, then, for any $c_1,c_2\in\bR$, $c_1\phi_1 + c_2\phi_2 \in \cN\cN_{d_1,k_1}(W_1+W_2, \max\{L_1,L_2\}, |c_1|K_1+|c_2|K_2)$.
\end{enumerate}
\end{proposition}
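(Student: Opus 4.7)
The plan is to handle the four parts separately, with a common preparation for (ii)--(iv): apply Proposition \ref{normalize} to each input network so that every hidden-layer block $(A_\ell^{(i)}, \Bb_\ell^{(i)})$ has operator norm at most $1$ while the full budget $K_i$ is absorbed into the output matrix $A_{L_i}^{(i)}$. With this normalization, $\kappa(\theta)$ of the constructed network essentially reduces to the norm of its output matrix, so in each construction I only need to control one factor carefully.

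Part (i) is two padding tricks. For width inflation, pad each weight matrix and bias with extra rows/columns of zeros to reach width $W_2$; since $\|\cdot\|$ is the max row $\ell_1$-norm, this does not change any norm or the represented function. For depth inflation, insert $L_2-L_1$ identity hidden layers between the last original hidden layer and the output: since $\phi_{L_1}(\Bx)$ is a ReLU output and hence $\ge\boldsymbol{0}$ entrywise, one has $\sigma(I\phi_{L_1}(\Bx)+\boldsymbol{0})=\phi_{L_1}(\Bx)$, and each inserted block contributes the factor $\|(I,\boldsymbol{0})\|=1$ to $\kappa$. Monotonicity in $K$ is then immediate.

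For (ii) composition, stack the layers of $\phi_1$ followed by those of $\phi_2$ and fuse the final linear map of $\phi_1$ with the first affine map of $\phi_2$ into a single affine block with weight $A_0^{(2)} A_{L_1}^{(1)}$ and bias $\Bb_0^{(2)}$. Using the factorization $\bigl(A_0^{(2)} A_{L_1}^{(1)},\Bb_0^{(2)}\bigr) = \bigl(A_0^{(2)},\Bb_0^{(2)}\bigr)\bigl(\begin{smallmatrix}A_{L_1}^{(1)} & \boldsymbol{0}\\ \boldsymbol{0} & 1\end{smallmatrix}\bigr)$ and submultiplicativity of $\|\cdot\|$, the fused block has norm at most $\|(A_0^{(2)},\Bb_0^{(2)})\|\cdot\max(\|A_{L_1}^{(1)}\|,1)\le\max(K_1,1)$ after rescaling; combined with $\|A_{L_2}^{(2)}\|\le K_2$ and unit-norm hidden blocks, this gives $\kappa\le K_2\max(K_1,1)$. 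The affine variant $\phi_2(A\Bx+\Bb)$ is handled identically by absorbing $(A,\Bb)$ into the first layer of $\phi_2$, so the middle factor becomes $\max(\|(A,\Bb)\|,1)$ in the argument above.

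For (iii) concatenation and (iv) linear combination, after first padding to a common depth $\max(L_1,L_2)$ via (i), run both networks in parallel: the first hidden layer uses the vertical stack $\bigl(\begin{smallmatrix}A_0^{(1)}\\ A_0^{(2)}\end{smallmatrix}\bigr)$ with stacked bias, and each subsequent hidden layer uses the block-diagonal matrix $\bigl(\begin{smallmatrix}A_\ell^{(1)} & \boldsymbol{0}\\ \boldsymbol{0} & A_\ell^{(2)}\end{smallmatrix}\bigr)$ with stacked bias. Both types of blocks have max row $\ell_1$-norm equal to the max of their block norms, so after rescaling each hidden block still has norm $\le 1$. For (iii) the output layer is block-diagonal with norm $\max(\|A_L^{(1)}\|,\|A_L^{(2)}\|)\le\max(K_1,K_2)$; for (iv) the output is the horizontal concatenation $(c_1 A_L^{(1)},\, c_2 A_L^{(2)})$, whose max row $\ell_1$-norm is at most $|c_1|\|A_L^{(1)}\|+|c_2|\|A_L^{(2)}\|\le |c_1|K_1+|c_2|K_2$. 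The main obstacle throughout is the product structure of $\kappa$: without the preliminary rescaling, block-diagonal hidden layers would yield a product $\prod_\ell \max\bigl(\|(A_\ell^{(1)},\Bb_\ell^{(1)})\|,\|(A_\ell^{(2)},\Bb_\ell^{(2)})\|\bigr)$ that can strictly exceed $\max(K_1,K_2)$ or $|c_1|K_1+|c_2|K_2$, so concentrating the entire norm budget onto the output via Proposition \ref{normalize} is essential for the clean bounds claimed.
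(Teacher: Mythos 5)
Your proposal is correct and follows essentially the same route as the paper's proof: normalize via Proposition \ref{normalize} so the entire norm budget sits in the output matrix, then pad with zero blocks and identity layers for (i), fuse the output of $\phi_1$ with the first affine map of $\phi_2$ via the same block factorization for (ii), and run the networks in parallel with vertically stacked first layers, block-diagonal hidden layers, and the appropriate output matrix for (iii) and (iv). All the norm bounds you give match the paper's, so there is nothing to add.
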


\begin{proof}
By Proposition \ref{normalize}, we can parameterize $\phi_i$, $i=1,2$, in the form (\ref{NN standard form}) with parameters $((A^{(i)}_0,\Bb^{(i)}_0),\dots,(A^{(i)}_{L_i-1},\Bb^{(i)}_{L_i-1}),A^{(i)}_{L_i})$ such that $\| A^{(i)}_{L_i}\| \le K_i$ and $\| (A^{(i)}_\ell, \Bb^{(i)}_\ell)\| \le 1$ for $\ell \neq L_i$.

(i) We can assume that $A^{(1)}_\ell \in \bR^{W_2\times W_2}$ and $\Bb^{(1)}_\ell \in \bR^{W_2}$, $0\le \ell \le L_1-1$, by adding suitable zero rows and columns to $A^{(1)}_\ell$ and $\Bb^{(1)}_\ell$ if necessary (this operation does not change the norm). Then, $\phi_1$ can also be parameterized by the parameters
\[
\left( \left(A^{(1)}_0, \Bb^{(1)}_0\right),\dots, \left(A^{(1)}_{L_1-1},\Bb^{(1)}_{L_1-1}\right), \underbrace{\left(\Id,\boldsymbol{0} \right),\dots, \left(\Id,\boldsymbol{0} \right)}_{L_2-L_1 \mbox{ times }}, A^{(1)}_{L_1} \right),
\]
where $\Id$ is the identity matrix. Hence, $\phi_1 \in \cN\cN_{d_2,k_2}(W_2,L_2,K_2)$.

(ii) By (i), we can assume $W_1=W_2$ without loss of generality. Then, $\phi_2 \circ \phi_1$ can be parameterized by
\[
\left( \left(A^{(1)}_0, \Bb^{(1)}_0\right),\dots, \left(A^{(1)}_{L_1-1},\Bb^{(1)}_{L_1-1}\right), \left(A^{(2)}_0 A^{(1)}_{L_1}, \Bb^{(2)}_0\right), \left(A^{(2)}_1, \Bb^{(2)}_1\right),\dots, \left(A^{(2)}_{L_2-1},\Bb^{(2)}_{L_2-1}\right), A^{(2)}_{L_2} \right).
\]
We observe that
\[
\left\| \left(A^{(2)}_0 A^{(1)}_{L_1}, \Bb^{(2)}_0\right)\right\| = \left\| \left( A^{(2)}_0, \Bb^{(2)}_0 \right)
\begin{pmatrix}
A^{(1)}_{L_1} & \boldsymbol{0} \\
\boldsymbol{0} & 1
\end{pmatrix} \right\|
\le \left\| \left( A^{(2)}_0, \Bb^{(2)}_0 \right) \right\| \left\|
\begin{pmatrix}
A^{(1)}_{L_1} & \boldsymbol{0} \\
\boldsymbol{0} & 1
\end{pmatrix} \right\|
\le \max\{K_1,1\}.
\]
Hence, $\phi_2 \circ \phi_1 \in \cN\cN_{d_1,k_2}(W_1,L_1+L_2, K_2\max\{K_1,1\})$.

For the function $\phi(\Bx) :=\phi_2(A\Bx+\Bb)$, we can similarly parameterize it by
\[
\left( \left(A^{(2)}_0 A, A^{(2)}_0 \Bb + \Bb^{(2)}_0\right), \left(A^{(2)}_1, \Bb^{(2)}_1\right),\dots, \left(A^{(2)}_{L_2-1},\Bb^{(2)}_{L_2-1}\right), A^{(2)}_{L_2} \right).
\]
Using
\[
\left\| \left(A^{(2)}_0 A, A^{(2)}_0 \Bb + \Bb^{(2)}_0\right) \right\| = \left\| \left( A^{(2)}_0, \Bb^{(2)}_0 \right)
\begin{pmatrix}
A & \Bb \\
\boldsymbol{0} & 1
\end{pmatrix} \right\|
\le \max\{\|(A,\Bb)\|,1\},
\]
we conclude that $\phi\in \cN\cN(W_2,L_2, K_2 \max\{\|(A,\Bb)\|,1\})$.

(iii) By (i), we can assume that $L_1=L_2$. Then, $\phi$ can be parameterized by the parameters $((A_0,\Bb_0),\dots,(A_{L_1-1},\Bb_{L_1-1}),A_{L_1})$ where
\[
A_0 :=
\begin{pmatrix}
A^{(1)}_0   \\
A^{(2)}_0
\end{pmatrix},
\Bb_0 :=
\begin{pmatrix}
\Bb^{(1)}_0 \\
\Bb^{(2)}_0
\end{pmatrix},
\quad A_\ell :=
\begin{pmatrix}
A^{(1)}_\ell & \boldsymbol{0}  \\
\boldsymbol{0} & A^{(2)}_\ell
\end{pmatrix},
\Bb_\ell :=
\begin{pmatrix}
\Bb^{(1)}_\ell \\
\Bb^{(2)}_\ell
\end{pmatrix},
\ell \neq 0.
\]
Notice that $\|A_{L_1}\| = \max\{ \|A^{(1)}_{L_1}\|, \|A^{(2)}_{L_1}\| \}\le \max\{K_1,K_2 \}$ and
\begin{align*}
\| (A_0,\Bb_0)\| &= \left\|
\begin{pmatrix}
A^{(1)}_0 &  \Bb^{(1)}_0  \\
A^{(2)}_0 & \Bb^{(2)}_0
\end{pmatrix} \right\| \le 1,\\
\| (A_\ell,\Bb_\ell)\| &= \left\|
\begin{pmatrix}
A^{(1)}_\ell & \boldsymbol{0} & \Bb^{(1)}_\ell  \\
\boldsymbol{0} & A^{(2)}_\ell & \Bb^{(2)}_\ell
\end{pmatrix} \right\| \le 1, \quad 0<\ell<L_1.
\end{align*}

(iv) Replacing the matrix $A_{L_1}$ in (iii) by $(c_1A^{(1)}_{L_1}, c_2A^{(2)}_{L_1} )$, the conclusion follows from
\[
\left\| \left(c_1A^{(1)}_{L_1}, c_2A^{(2)}_{L_1} \right)\right\|\le |c_1| \left\|A^{(1)}_{L_1}\right\| + |c_2| \left\|A^{(2)}_{L_1}\right\|\le |c_1|K_1+|c_2|K_2. \qedhere
\]
\end{proof}

\section{Approximation of smooth functions}\label{sec: approximation}

In this section, we study how well norm constrained neural networks approximate smooth functions. To begin with, let us introduce the notion of regularity of functions.

\begin{definition}[H\"older classes]
Let $d\in \bN$ and $\alpha = r+\beta>0$, where $r\in \bN_0 $ and $\beta\in (0,1]$. We denote the H\"older class $\cH^\alpha(\bR^d)$ as
\[
\cH^\alpha(\bR^d) := \left\{ f:\bR^d\to \bR, \max_{\|\Bs\|_1\le r} \sup_{\Bx\in \bR^d} |\partial^\Bs f(\Bx)| \le 1, \max_{\|\Bs\|_1=r} \sup_{\Bx\neq \By} \frac{|\partial^\Bs f(\Bx)- \partial^\Bs f(\By)|}{\|\Bx-\By\|_\infty^\beta}\le 1 \right\},
\]
where the multi-index $\Bs\in \bN_0^d$. Denote $\cH^\alpha := \{ f:[0,1]^d\to \bR, f\in \cH^\alpha(\bR^d) \}$ as the restriction of $\cH^\alpha(\bR^d)$ to $[0,1]^d$.
\end{definition}

It should be noticed that for $\alpha=r+1$, we do not assume that $f\in C^{r+1}$. Instead, we only require that $f\in C^r$ and its derivatives of order $r$ are Lipschitz continuous. In particular, when $\alpha = 1$, $\cH^1$ is the set of bounded $1$-Lipschitz continuous functions:
\[
\|f\|_{L^\infty} \le 1 \mbox{ and } \Lip (f) := \sup_{\Bx\neq \By} \frac{|f(\Bx)-f(\By)|}{\|\Bx-\By\|_\infty} \le 1.
\]
We will also denote $\Lip 1:= \{f:\Lip(f)\le 1 \}$ for convenience. Thus, $\cH^1 \subseteq \Lip 1$.

Since the ReLU function is $1$-Lipschitz, it is easy to see that, for any $\phi_\theta\in \cN\cN(W,L,K)$,
\[
\Lip (\phi_\theta) \le \kappa(\theta) \le K.
\]
However, it was shown by \citet{huster2019limitations} that some simple $1$-Lipschitz functions, such as $f(x) = |x|$, can not be represented by $\cN\cN(W,L,K)$ for any $K<2$. Their result implies that norm constrained neural networks have a restrictive expressive power. Nevertheless, since two-layer neural networks are universal, $\cN\cN(W,L,K)$ can approximate any continuous functions when $W$ and $K$ are sufficiently large. In the following, we will try to quantify the approximation error
\[
\cE(\cH^\alpha, \cN\cN(W,L,K)) := \sup_{f\in \cH^\alpha} \inf_{\phi \in \cN\cN(W,L,K)} \| f- \phi\|_{C ([0,1]^d)},
\]
where $C ([0,1]^d)$ is the space of continuous functions on $[0,1]^d$ equipped with the sup-norm. Our main results can be summarized in the following theorem.

\begin{theorem}\label{app bounds}
Let $d\in \bN$ and $\alpha = r+\beta>0$, where $r\in \bN_0 $ and $\beta\in (0,1]$.
\begin{enumerate}[label=\textnormal{(\arabic*)},parsep=0pt]

\item There exists $c>0$ such that for any $K\ge 1$, any $W\ge c K^{(2d+\alpha)/(2d+2)}$ and $L \ge 2\lceil \log_2 (d+r) \rceil+2$,
\[
\cE(\cH^\alpha, \cN\cN(W,L,K)) \lesssim K^{-\alpha/(d+1)}.
\]

\item If $d>2\alpha$, then for any $W,L \in \bN$, $W\ge 2$ and $K \ge 1$,
\[
\cE(\cH^\alpha, \cN\cN(W,L,K)) \gtrsim (K \sqrt{L})^{-2\alpha/(d-2\alpha)}.
\]
\end{enumerate}
\end{theorem}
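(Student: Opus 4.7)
\textbf{Proof plan for Theorem~\ref{app bounds}.}

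For Part~(1) (upper bound), the plan is a constructive local polynomial approximation with explicit norm bookkeeping. I would partition $[0,1]^d$ into $N^d$ cubes of side $1/N$ with centers $\{\Bx_k\}_{k\in\{1,\dots,N\}^d}$ and set $\tilde f(\Bx) := \sum_k \psi_k(\Bx) P_k(\Bx)$, where $P_k$ is the degree-$r$ Taylor polynomial of $f$ at $\Bx_k$ and $\{\psi_k\}$ is a partition of unity built as the tensor product of one-dimensional trapezoidal hat functions of slope $\mathcal{O}(N)$. A standard Taylor remainder estimate yields $\|f-\tilde f\|_{C([0,1]^d)} \lesssim N^{-\alpha}$ uniformly on $\cH^\alpha$. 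Realizing $\tilde f$ as an element of $\cN\cN(W,L,K)$ then requires three gadgets: (i) each trapezoidal hat is produced in a single layer at Lipschitz cost $\mathcal{O}(N)$; (ii) the product $\psi_k P_k$, which is a product of at most $d+r$ bounded factors, is implemented via the identity $xy = \tfrac14((x+y)^2-(x-y)^2)$ and the standard approximate-square subnetwork of depth $\mathcal{O}(\log(d+r))$, contributing only $\mathcal{O}(1)$ to the norm because every input is bounded by Proposition~\ref{basic construct}(ii); and (iii) the $N^d$ products are summed via the linear combination operation from Proposition~\ref{basic construct}(iv), whose norm budget is additive. The hats dominate, producing $\kappa(\theta) \lesssim N^{d+1}$; setting $N \asymp K^{1/(d+1)}$ balances the budget and delivers error $N^{-\alpha} \asymp K^{-\alpha/(d+1)}$. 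The width bound $W \gtrsim K^{(2d+\alpha)/(2d+2)}$ arises from running $N^d$ patches in parallel together with a modest width for the multiplier subnetwork, and the depth $2\lceil \log_2(d+r) \rceil + 2$ is exactly the depth of that multiplier.

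For Part~(2) (lower bound), the plan is to combine a Rademacher lower bound from a localized bump family with the Rademacher upper bound in Lemma~\ref{Rademacher bound}. Let $\epsilon := \cE(\cH^\alpha,\cN\cN(W,L,K))$, fix $\delta\in(0,1)$ to be optimized, and choose a $\delta$-separated grid $\Bx_1,\dots,\Bx_n \in [0,1]^d$ with $n\asymp \delta^{-d}$. Fix a bump $\psi_0\in\cH^\alpha(\bR^d)$ supported in $(-\tfrac12,\tfrac12)^d$ with $\psi_0(\boldsymbol{0})=1$, and for each $\sigma\in\{-1,1\}^n$ define
\[
f_\sigma(\Bx) := c\,\delta^\alpha \sum_{i=1}^n \sigma_i\,\psi_0\!\left(\tfrac{\Bx-\Bx_i}{\delta}\right),
\]
where $c>0$ is a small fixed constant depending only on $\psi_0$ chosen so that $f_\sigma\in\cH^\alpha$ for every $\sigma$ and $\delta\in(0,1]$; disjointness of the supports reduces the H\"older seminorm check to a single bump. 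By hypothesis there is $\phi_\sigma\in\cN\cN(W,L,K)\subseteq \cS\cN\cN(W+1,L,K)$ (via Proposition~\ref{network class relation}) with $\|f_\sigma-\phi_\sigma\|_\infty\le\epsilon$, and since $f_\sigma(\Bx_i) = c\,\delta^\alpha\sigma_i$ by disjointness we have $\sigma_i\phi_\sigma(\Bx_i) \ge c\,\delta^\alpha - \epsilon$ for every $i$. Replacing the deterministic $\sigma$ by a Rademacher draw $\xi$ and taking supremum over the whole class,
\[
\cR_n(S) \,\ge\, \bE_\xi \frac{1}{n}\sum_{i=1}^n \xi_i\,\phi_\xi(\Bx_i) \,\ge\, c\,\delta^\alpha - \epsilon,
\]
where $S := \{(\phi(\Bx_1),\dots,\phi(\Bx_n)):\phi\in\cS\cN\cN(W+1,L,K)\}$. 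The upper part of Lemma~\ref{Rademacher bound} gives $\cR_n(S) \lesssim K\sqrt{L}/\sqrt{n}\asymp K\sqrt{L}\,\delta^{d/2}$. In the regime $\epsilon\le \tfrac{c}{2}\delta^\alpha$ this forces $\delta^\alpha \lesssim K\sqrt{L}\,\delta^{d/2}$; using $d>2\alpha$ and picking $\delta$ at the threshold $\delta\asymp (K\sqrt{L})^{-2/(d-2\alpha)}$ produces $\epsilon\gtrsim (K\sqrt{L})^{-2\alpha/(d-2\alpha)}$, the advertised bound.

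The hardest step is the norm accounting in the upper bound: the $N^d$ local patches must contribute \emph{additively} (through concatenation and linear combination, which only enlarge width and add norms) rather than \emph{multiplicatively} (through composition, which multiplies norms), and the $\mathcal O(N)$ cost of each hat must be balanced against the $\mathcal O(1)$ cost carried by each Taylor polynomial factor in the manner made explicit by Proposition~\ref{basic construct}(ii). In the lower bound the only delicate point is the substitution of the random sign pattern $\xi$ for the deterministic $\sigma$, which is legal precisely because the bound $c\,\delta^\alpha-\epsilon$ holds for every sign pattern and the supremum inside $\cR_n(S)$ ranges over the whole class, not over any single $\phi_\sigma$.
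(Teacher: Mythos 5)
Your argument for part (2) is correct and takes a genuinely different, more direct route than the paper. The paper lower-bounds the packing number of $\cH^\alpha(\Lambda_N)$ via a combinatorial (Gilbert--Varshamov-type) lemma, upper-bounds the packing number of $\cN\cN(W,L,K)(\Lambda_N)$ via Sudakov minoration applied to the Rademacher bound, and derives a contradiction. You instead lower-bound $\cR_n(S)$ directly: for every sign pattern $\xi$ the approximator $\phi_\xi$ of the bump-sum $f_\xi$ satisfies $\frac1n\sum_i\xi_i\phi_\xi(\Bx_i)\ge c\delta^\alpha-\epsilon$, and since the supremum defining $\cR_n(S)$ dominates the value at $\phi_\xi$ for each realization, $\cR_n(S)\ge c\delta^\alpha-\epsilon$. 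Comparing with the width- and sample-dependent upper bound $\cR_n(S)\lesssim K\sqrt{L}\,\delta^{d/2}$ and tuning $\delta$ gives the claimed rate. This avoids both the combinatorial lemma and Sudakov minoration, at the cost of yielding no explicit constants; the exchange of the deterministic $\sigma$ for the random $\xi$ is legitimate for exactly the reason you state. (Minor point: since $\cE$ is an infimum you should take $\phi_\sigma$ with error $\le\epsilon+\eta$ and let $\eta\to0$.)

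Part (1), however, has a genuine gap at its central step: the claim that the approximate-multiplication gadget contributes only $\mathcal{O}(1)$ to the norm $\kappa(\theta)$ while having constant depth and accuracy sufficient to match $N^{-\alpha}$. The ``standard'' approximate-square network (Yarotsky's, built from composed teeth functions) achieves error $4^{-m}$ only at depth $m$ and, as the paper's remark after Lemma \ref{square} computes, has norm growing like $(7/4)^{m}$ --- polynomially in the inverse accuracy, not $\mathcal{O}(1)$. Boundedness of the inputs and Proposition \ref{basic construct}(ii) do not rescue this: composition multiplies the norm budgets. Plugging such a gadget into your construction inflates $K$ by a factor polynomial in $N^{\alpha}$ per multiplication level and destroys the exponent $-\alpha/(d+1)$. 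The missing ingredient is the paper's Lemma \ref{square}: a \emph{constant-depth, constant-norm} square approximator built from the Riemann sum of the integral representation $x^2=\int_0^1 2\sigma(x-b)\,db$, which has norm exactly $3$ and error $1/(2k^2)$ where $k$ is its \emph{width}. With that gadget each patch needs an extra width $k\asymp N^{\alpha/2}$ (so that $k^{-2}\asymp N^{-\alpha}$), which is precisely why the correct width requirement is $W\gtrsim N^{d+\alpha/2}\asymp K^{(2d+\alpha)/(2d+2)}$ rather than the $N^d\asymp K^{d/(d+1)}$ your ``$N^d$ patches plus a modest multiplier'' accounting would give. Your exponent is right, but the mechanism producing it --- trading depth for width in the square gadget to keep the norm constant --- is absent from your plan and is the actual content of the upper bound.
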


We note that the (implied) constants in the theorem only depend on $d$ and $\alpha$. We also note that the lower bound is derived from the upper bound of Rademacher complexity in Lemma \ref{Rademacher bound}, which is independent of the width $W$. Notice that the lower bound of Rademacher complexity in Lemma \ref{Rademacher bound} is also independent of the depth $L$. When assuming more control over Schatten norm of the parameter matrices, \citet{golowich2020size} obtained sample complexity upper bounds that are independent of the size of neural networks. Consequently, one can obtain size-independent lower bound of approximation error for such neural networks.

\subsection{Upper bounds}

The upper bound in Theorem \ref{app bounds} is proved by an explicit construction of norm constrained neural networks that approximate the local Taylor polynomials. Following the constructions in \citep{yarotsky2017error,yarotsky2018optimal,yarotsky2020phase,lu2021deep}, we first consider the approximation of the quadratic function $f(x)=x^2$ and then extend the approximation to monomials.

\begin{lemma}\label{square}
For any $k\in \bN$, there exists $\phi_k \in \cN\cN(k,1,3)$ such that $\phi_k(x)=0$ for $x\le 0$, $\phi_k(x)\in[0,1]$ for $x\in [0,1]$ and
\[
\left|x^2 - \phi_k(x) \right| \le \frac{1}{2k^2}, \quad x\in [0,1].
\]
\end{lemma}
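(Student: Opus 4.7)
The plan is to construct $\phi_k$ as the piecewise linear interpolant of $x^2$ at the equispaced knots $j/k$, $j=0,1,\dots,k$, on $[0,1]$. Since $x^2$ is convex with slope $(2j+1)/k$ on $[j/k,(j+1)/k]$, its PL interpolant (extended by the appropriate initial slope on $(-\infty,0]$ and last slope on $[1,\infty)$) can be written in the standard ReLU form
$$\phi_k(x) = \tfrac{1}{k}\sigma(x) + \tfrac{2}{k}\sum_{j=1}^{k-1}\sigma(x-j/k),$$
which uses exactly $k$ hidden ReLUs, so width $W=k$ and depth $L=1$. The three qualitative properties then follow immediately: $\phi_k(x)=0$ for $x\le 0$ from the ReLU supports; $\phi_k(x)\in[0,1]$ on $[0,1]$ because $\phi_k$ is monotone with values $(j/k)^2$ at the breakpoints; and on each subinterval a direct computation with $u=x-j/k\in[0,1/k]$ gives $\phi_k(x)-x^2 = u(1/k-u)$, whose maximum $1/(4k^2) \le 1/(2k^2)$ is attained at the midpoint.

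The nontrivial step is to certify $\kappa(\theta)\le 3$. The naive parameterization (all input slopes $=1$, biases $0,-1/k,\dots,-(k-1)/k$, output weights $1/k$ and $2/k$) yields $\|(A_0,\Bb_0)\|=(2k-1)/k$ and $\|A_1\|=(2k-1)/k$, whose product tends to $4$ as $k\to\infty$, violating the bound. The key trick is individual per-row rescaling: by positive homogeneity of $\sigma$, multiplying row $j$ of $(A_0,\Bb_0)$ by $\lambda_j>0$ and dividing the $j$-th entry of $A_1$ by $\lambda_j$ leaves $\phi_k$ unchanged. Choosing $\lambda_j = 1/(1+j/k) = k/(k+j)$ normalizes every row to have $\ell_1$-length exactly one, so that $\|(A_0,\Bb_0)\|=1$, and a short computation gives
$$\|A_1\| = \tfrac{1}{k}(1+0) + \sum_{j=1}^{k-1}\tfrac{2}{k}\bigl(1+j/k\bigr) = \tfrac{1}{k} + \tfrac{3(k-1)}{k} = \tfrac{3k-2}{k}.$$
Hence $\kappa(\theta)=\|A_1\|\cdot\max\{\|(A_0,\Bb_0)\|,1\}=(3k-2)/k\le 3$, as required.

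The only subtle point, and what I would flag as the main obstacle, is recognizing that a uniform rescaling of $(A_0,\Bb_0)$ is cancelled out by the inverse rescaling of $A_1$, so that the product $\|A_1\|\cdot\|(A_0,\Bb_0)\|$ is invariant and stuck near $4$. Per-row rescaling circumvents this: it effectively replaces the bound $(\sum_j|c_j|)\cdot\max_j(|a_j|+|b_j|)$ by the strictly sharper $\sum_j|c_j|(|a_j|+|b_j|)$, which for our $c_j$'s and $t_j=j/k$ evaluates to $3-2/k$, comfortably under the target $3$. Without this observation the constant $3$ in the lemma statement looks unreachable.
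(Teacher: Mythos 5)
Your proof is correct and takes essentially the same approach as the paper: a one-hidden-layer network with $k$ equispaced kinks realizing a piecewise-linear approximation of $x^2$, with the norm certified by the per-neuron accounting $\sum_j |c_j|\max\{|a_j|+|b_j|,1\}$ (which is exactly what the paper's linear-combination rule, Proposition \ref{basic construct}(iv), packages, via the same row-rescaling mechanism you describe explicitly). The only difference is the choice of breakpoints: you use the interpolant at $j/k$ (giving error $1/(4k^2)$ and $\kappa=3-2/k$), while the paper uses the midpoint Riemann sum of $\int_0^1 2\sigma(x-b)\,db$ with kinks at $(2i-1)/(2k)$ (giving error $1/(2k^2)$ and $\kappa=3$ exactly).
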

\begin{proof}
The construction is based on the integral representation of $x^2$:
\begin{equation}\label{x^2 representation}
x^2 = \int_0^x 2x-2b db = \int_0^x 2\sigma(x-b) db = \int_0^1 2\sigma(x-b) db, \quad x\in [0,1].
\end{equation}
We can approximate the integral by Riemann sum. For any $k\in \bN$, define
\[
\phi_k(x) = \frac{1}{k} \sum_{i=1}^k 2 \sigma \left(x-\frac{2i-1}{2k} \right).
\]
Then, by Proposition \ref{basic construct}, $\phi_k\in \cN\cN(k,1,K)$ with
\[
K =  \sum_{i=1}^k \frac{2}{k} \left(1+\frac{2i-1}{2k}\right) = 3.
\]
It is easy to see that $\phi_k(x)=0$ for $x\le 0$. Since $\phi_k$ is an increasing function, we have $0=\phi_k(0)\le \phi_k(x) \le \phi_k(1) =1$ for $x\in[0,1]$.

For any $x\in(0,1]$, let us denote $i_x = \lceil kx \rceil \in \{1,\dots,k\}$, then $x\in ((i_x-1)/k,i_x/k]$. If $i<i_x$, then
\[
\int_{(i-1)/k}^{i/k} 2\sigma(x-b) db = \int_{(i-1)/k}^{i/k} 2x-2b db = \frac{2x}{k} - \frac{2i-1}{k^2} = \frac{2}{k} \sigma\left(x-\frac{2i-1}{2k} \right) .
\]
If $i>i_x$, then
\[
\int_{(i-1)/k}^{i/k} 2\sigma(x-b) db = 0 = \frac{2}{k} \sigma\left(x-\frac{2i-1}{2k} \right) .
\]
Therefore,
\begin{align*}
\left|x^2 - \phi_k(x)\right| &= \left| \sum_{i=1}^k \int_{(i-1)/k}^{i/k} 2\sigma(x-b) db -  \sum_{i=1}^k\frac{2}{k}\sigma \left(x-\frac{2i-1}{2k} \right) \right| \\
&= \left| \int_{(i_x-1)/k}^{i_x/k} 2\sigma(x-b)  - 2\sigma \left(x-\frac{2i_x-1}{2k} \right) db \right| \\
&\le \int_{(i_x-1)/k}^{i_x/k} 2 \left| b - \frac{2i_x-1}{2k} \right| db = \frac{1}{2k^2},
\end{align*}
where we use the Lipschitz continuity of ReLU in the inequality. 
\end{proof}

\begin{remark}
Our construction is based on the integral representation (\ref{x^2 representation}), which can be regarded as an infinite width neural network. This construction is different from the construction in \cite{yarotsky2017error}, which use the teeth function $T_{i} = T_1 \circ T_{i-1} = T_1 \circ \cdots \circ T_1$ to construct the approximator
\[
f_k(x) = x- \sum_{i=1}^k 4^{-i} T_i(x),
\]
where $T_1(x) = 2x$ for $x\in [0,1/2]$ and $T_1(x) = 2(1-x)$ for $x\in [1/2,1]$. It can be shown that $f_k$ achieves the approximation error $|x^2-f_k(x)|\le 2^{-2(k+1)}$. 
Since $T_1\in \cN\cN(2,2,7)$, by Proposition \ref{basic construct}, this compositional property implies $T_i\in \cN\cN(2,2i,7^i)$ and consequently one can show that $f_k \in \cN\cN(2k+1,2k,\frac{4}{3}(\frac{7}{4})^{k+1}-\frac{4}{3})$. Hence, in the construction of \citet{yarotsky2017error}, the approximation error decays exponentially with the depth but only polynomially with the norm constraint $K$. On the contrary, in our construction, the network has a finite norm constraint but the approximation error decays only quadratically on the width.
\end{remark}

Using the relation $xy = 2\left((\frac{x+y}{2})^2- (\frac{x}{2})^2 - (\frac{y}{2})^2 \right)$, we can approximate the product function by neural networks and then further approximate any monomials $x_1\cdots x_d$.

\begin{lemma}\label{product}
For any $k\in \bN$, there exists $\psi_k \in \cN\cN(6k,2,216)$ such that $\psi_k:[-1,1]^2 \to [-1,1]$ and
\[
|xy - \psi_k(x,y)| \le \frac{3}{k^2}, \quad x,y\in [-1,1].
\]
Furthermore, $\psi_k(x,y)=0$ if $xy=0$.
\end{lemma}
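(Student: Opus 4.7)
The plan is to exploit the polarization identity $xy = \bigl(\tfrac{x+y}{2}\bigr)^2 - \bigl(\tfrac{x-y}{2}\bigr)^2$ and approximate each of the two squares via the $\phi_k$ of Lemma \ref{square}, suitably extended to negative arguments. Define $g(z) := \phi_k(z) + \phi_k(-z)$. Since $\phi_k$ vanishes on $(-\infty, 0]$, satisfies $\phi_k([0,1]) \subseteq [0,1]$, and obeys $|\phi_k(z) - z^2| \le 1/(2k^2)$ on $[0,1]$, a direct case split on the sign of $z$ shows that $g$ is even, $g(z) \in [0,1]$, and $|g(z) - z^2| \le 1/(2k^2)$ for all $z \in [-1,1]$.

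Set $\psi_k(x, y) := g\bigl(\tfrac{x+y}{2}\bigr) - g\bigl(\tfrac{x-y}{2}\bigr)$. For $(x, y) \in [-1, 1]^2$ both arguments lie in $[-1, 1]$, so both $g$-values lie in $[0, 1]$ and hence $\psi_k(x,y) \in [-1, 1]$ without any explicit clipping. The polarization identity and the triangle inequality give
\[
|xy - \psi_k(x, y)| \le \left| g\!\left(\tfrac{x+y}{2}\right) - \left(\tfrac{x+y}{2}\right)^2 \right| + \left| g\!\left(\tfrac{x-y}{2}\right) - \left(\tfrac{x-y}{2}\right)^2 \right| \le \frac{1}{k^2} \le \frac{3}{k^2}.
\]
Evenness of $g$ yields $\psi_k(x, 0) = g(x/2) - g(x/2) = 0$ and $\psi_k(0, y) = g(y/2) - g(-y/2) = 0$. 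It remains to realize $\psi_k$ as a norm constrained network: starting from $\phi_k \in \cN\cN(k,1,3)$ and iterating Proposition \ref{basic construct}---composition with $z \mapsto -z$ (operator norm $1$), linear combination to obtain $g \in \cN\cN(2k,1,6)$, composition of $g$ with each of $(x,y)\mapsto (x\pm y)/2$ (also operator norm $1$), and a final $\pm 1$ linear combination---gives $\psi_k \in \cN\cN(4k, 1, 12) \subseteq \cN\cN(6k, 2, 216)$ by the monotonicity in part (i).

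The main design point is the choice of polarization identity. The more familiar three-term version $xy = 2((x+y)/2)^2 - 2(x/2)^2 - 2(y/2)^2$ would yield outputs in $[-4,2]$ rather than $[-1,1]$, forcing an additional clipping layer such as $z \mapsto \sigma(z+1) - \sigma(z-1) - 1$ together with some awkward handling of the constant $-1$ in a class whose output layer carries no bias. The two-term identity sidesteps both issues at once: a difference of two numbers in $[0, 1]$ is automatically in $[-1, 1]$, and evenness of $g$ instantly delivers the vanishing at $xy = 0$.
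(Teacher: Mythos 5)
Your proof is correct, and it takes a genuinely different route from the paper's. The paper uses the three-term identity $xy = 2\left(\left(\tfrac{x+y}{2}\right)^2 - \left(\tfrac{x}{2}\right)^2 - \left(\tfrac{y}{2}\right)^2\right)$, builds $\widetilde{\psi}_k$ from three copies of the symmetrized square network, and—exactly because that intermediate function ranges over roughly $[-4,2]$—must append a clipping map $\chi(z)=(z\lor -1)\land 1$, which is what pushes the architecture to depth $2$ and the norm constant to $216$ (and the error to $3/k^2$). Your two-term polarization $xy=\left(\tfrac{x+y}{2}\right)^2-\left(\tfrac{x-y}{2}\right)^2$ sidesteps the clipping entirely, since a difference of two values in $[0,1]$ already lies in $[-1,1]$, and the evenness of $g$ gives the vanishing on $\{xy=0\}$ just as cleanly as the paper's construction does. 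Your bookkeeping through Proposition \ref{basic construct} is also right: the affine maps $z\mapsto -z$ and $(x,y)\mapsto (x\pm y)/2$ all have operator norm $1$, so you land in $\cN\cN(4k,1,12)\subseteq\cN\cN(6k,2,216)$ by monotonicity. The net effect is a strictly stronger conclusion within the stated parameters—smaller width, depth $1$, norm constant $12$, and error $1/k^2$ instead of $3/k^2$—while still delivering everything the lemma claims and everything Lemma \ref{d product} needs downstream (boundedness in $[-1,1]$ and vanishing when $xy=0$). The only thing the paper's more roundabout construction "buys" is nothing essential here; the clipping gadget $\chi$ is reused later for truncation, but that is independent of this lemma.
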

\begin{proof}
Let $\phi_k \in \cN\cN(k,1,3)$ be the network in Lemma \ref{square} and define $\widetilde{\phi}_k(x) = \phi_k(x)+\phi_k(-x)$. By Proposition \ref{basic construct}, $\widetilde{\phi}_k \in \cN\cN(2k,1,6)$. Since $\phi_k(x)=0$ for $x\le 0$, we have $\widetilde{\phi}_k(x) = \phi_k(|x|)$ and the approximation error is 
\[
\left| x^2 - \widetilde{\phi}_k(x) \right| = \left| x^2 - \phi_k(|x|) \right| \le \frac{1}{2k^2}, \quad x\in [-1,1].
\]
Using the fact that $xy = 2\left((\frac{x+y}{2})^2- (\frac{x}{2})^2 - (\frac{y}{2})^2 \right)$, we consider the function
\[
\widetilde{\psi}_k(x,y) := 2\widetilde{\phi}_k\left(\frac{1}{2}x + \frac{1}{2}y \right) -  2\widetilde{\phi}_k\left(\frac{1}{2}x \right) -  2\phi_k\left(\frac{1}{2}y\right).
\]
Then, $\widetilde{\psi}_k(x,y)=0$ if $xy=0$, and, for any $x,y\in[-1,1]$,
\[
\left|xy - \widetilde{\psi}_k(x,y)\right| \le 2\left| \left(\frac{x+y}{2}\right)^2 - \widetilde{\phi}_k\left(\frac{x+y}{2}\right)\right| + 2\left| \left(\frac{x}{2}\right)^2 - \widetilde{\phi}_k\left(\frac{x}{2}\right)\right| + 2\left| \left(\frac{y}{2}\right)^2 - \widetilde{\phi}_k\left(\frac{y}{2}\right)\right| \le \frac{3}{k^2}.
\]
By Proposition \ref{basic construct}, $\widetilde{\psi}_k\in \cN\cN(6k, 1, 36)$.

Finally, let $\chi(x) = \sigma(x) - \sigma(-x) -2\sigma(\tfrac{1}{2}x-\tfrac{1}{2}) +2\sigma(-\tfrac{1}{2}x-\tfrac{1}{2})  = (x \lor -1) \land 1$, then $\chi \in \cN\cN(4,1,6)$. We construct the target function as
\[
\psi_k(x,y) = \chi (\widetilde{\psi}_k(x,y)) = (\widetilde{\psi}_k(x,y) \lor -1) \land 1.
\]
Then, for any $x,y\in[-1,1]$,
\[
|xy - \psi_k(x,y)| \le |xy - \widetilde{\psi}_k(x,y)| \le \frac{3}{k^2}.
\]
By Proposition \ref{basic construct}, $\psi_k \in \cN\cN(6k, 2, 216)$.
\end{proof}

\begin{lemma}\label{d product}
For any $d\ge 2$ and $k\in \bN$ , there exists $\phi \in \cN\cN(6d k, 2\lceil \log_2 d \rceil,6^{3\lceil \log_2 d \rceil})$ such that $\phi:[-1,1]^d \to [-1,1]$ and
\[
|x_1\cdots x_d - \phi(\Bx)| \le \frac{6d}{k^2}, \quad \Bx=(x_1,\dots,x_d)^\intercal \in [-1,1]^d.
\]
Furthermore, $\phi(\Bx)=0$ if $x_1\cdots x_d=0$.
\end{lemma}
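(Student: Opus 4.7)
The plan is to construct $\phi$ as a balanced binary tree of pairwise products, using $\psi_k$ from Lemma \ref{product} at each internal node. Set $m := \lceil \log_2 d \rceil$ and $d' := 2^m$, so $d \le d' \le 2d$. First I would pad the input to $\tilde\Bx := (x_1,\dots,x_d,1,\dots,1) \in [-1,1]^{d'}$ via a linear-plus-bias map; each row of the combined matrix $(A',\Bb')$ for this padding has $\ell^1$ norm exactly $1$, so by Proposition \ref{basic construct}(ii) the padding is absorbed into the following block at no norm cost. Then for each level $i = 1,\dots,m$, construct a block $\Phi_i : \bR^{d'/2^{i-1}} \to \bR^{d'/2^i}$ by concatenating $d'/2^i$ copies of $\psi_k$ (Proposition \ref{basic construct}(iii)), each reading a distinct adjacent pair via a $0/1$ selection matrix of operator norm $1$ (again absorbed for free). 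Each $\Phi_i$ then lies in $\cN\cN(6k \cdot d'/2^i,\,2,\,216)$.

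The second step is to glue these via Proposition \ref{basic construct}(ii), forming $\Phi_m \circ \cdots \circ \Phi_1$ composed with the padding. The depth is $2m = 2\lceil \log_2 d \rceil$; the width is $\max_{1\le i\le m} 6k \cdot d'/2^i = 3kd' \le 6kd$ (attained at $i=1$, using $d' \le 2d$); and the norm is $216^m = 6^{3\lceil \log_2 d\rceil}$, since the composition rule multiplies by $216$ at each stage (the running norm is always $\ge 1$). This matches the target class $\cN\cN(6dk,\,2\lceil\log_2 d\rceil,\,6^{3\lceil\log_2 d\rceil})$.

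The third step is the error analysis. Let $\epsilon_i$ denote the $\ell^\infty$ deviation of the level-$i$ output from the exact vector of partial products at that level. Lemma \ref{product} gives $\epsilon_1 \le 3/k^2$. For the induction, each entry at level $i+1$ is $\psi_k(\tilde a,\tilde b)$ approximating the true product $ab$, and I would estimate $|\psi_k(\tilde a,\tilde b)-ab| \le |\psi_k(\tilde a,\tilde b)-\tilde a\tilde b|+|\tilde a(\tilde b-b)|+|(\tilde a-a)b| \le 3/k^2 + 2\epsilon_i$, using $|\tilde a|,|b|\le 1$ (which holds since $\psi_k$ maps into $[-1,1]$). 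Solving $\epsilon_{i+1} \le 3/k^2 + 2\epsilon_i$ by induction yields $\epsilon_m \le (3/k^2)(2^m-1) \le 6d/k^2$. Finally, $\phi:[-1,1]^d\to[-1,1]$ follows from the same codomain property of $\psi_k$ iterated through the tree, and $\phi(\Bx)=0$ whenever $x_1\cdots x_d=0$: some $x_i = 0$ enters a level-$1$ pair giving $\psi_k(x_i,\cdot)=0$, and this zero propagates up the tree since $\psi_k(0,\cdot)=0$.

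The main bookkeeping obstacle is verifying that the padding and pair-routing linear maps all have operator norm exactly $1$, so that the factor $\max\{\|(A,\Bb)\|,1\}$ in Proposition \ref{basic construct}(ii) does not inflate the norm beyond $6^{3\lceil\log_2 d\rceil}$; and keeping the widest level ($i=1$) bounded by $6kd$, which is exactly what forces the mild padding to $d'\le 2d$.
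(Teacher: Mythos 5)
Your proposal is correct and is essentially the paper's own construction: a balanced binary tree of the pairwise-product networks from Lemma \ref{product}, with the input padded by ones to the next power of two, the same width/depth/norm accounting ($3k\cdot 2^m\le 6dk$, depth $2m$, norm $216^m$), and the same error recursion $\epsilon_{i+1}\le 3/k^2+2\epsilon_i$ yielding $(2^m-1)\cdot 3/k^2\le 6d/k^2$. The only difference is organizational (you build level-blocks $\Phi_i$ and compose them, while the paper defines $\phi_{m+1}=\phi_1(\phi_m,\phi_m)$ recursively), which does not change the argument.
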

\begin{proof}
We firstly consider the case $d=2^m$ for some $m\in\bN$. For $m=1$, by Lemma \ref{product}, there exists $\phi_1 \in \cN\cN(6k,2,216)$ such that $\phi_1:[-1,1]^2 \to [-1,1]$ and $|x_1x_2 - \phi_1(x_1,x_2)| \le 3k^{-2}$ for any $x_1,x_2\in [-1,1]$. We define $\phi_m:[-1,1]^{2^m} \to [-1,1]$ inductively by
\[
\phi_{m+1}(x_1,\dots,x_{2^{m+1}}) = \phi_1(\phi_m(x_1,\dots,x_{2^m}),\phi_m(x_{2^m+1},\dots,x_{2^{m+1}})).
\]
Then, $\phi_m(x_1,\dots,x_{2^m}) =0$ if $x_1\cdots x_{2^m}=0$ because this equation is true for $m=1$. Next, we inductively show that $\phi_m \in \cN\cN(3k 2^m, 2m,216^m)$ and
\[
|x_1\cdots x_{2^m} - \phi_m(x_1,\dots,x_{2^m})| \le (2^m - 1)\epsilon.
\]
where we denote $\epsilon:= 3k^{-2}$, i.e. the approximation error of $\phi_1$.

It is obvious that the assertion is true for $m=1$ by construction. Assume that the assertion is true for some $m\in \bN$, we will prove that it is true for $m+1$. By Proposition \ref{basic construct} and the construction of $\phi_{m+1}$, we have $\phi_{m+1} \in \cN\cN(3k 2^{m+1}, 2m+2,216^{m+1})$. For any $x_1,\dots,x_{2^{m+1}} \in [-1,1]$, we denote $s_1 := x_1\cdots x_{2^m}$, $t_1:=x_{2^m+1}\cdots x_{2^{m+1}}$, $s_2:= \phi_m(x_1,\dots,x_{2^m})$ and $t_2:=\phi_m(x_{2^m+1},\dots,x_{2^{m+1}})$, then $s_1,t_1,s_2,t_2\in [-1,1]$. By the hypothesis of induction,
\[
|s_1 - s_2|, |t_1-t_2| \le (2^m - 1)\epsilon.
\]
Therefore,
\begin{align*}
&|x_1 \cdots x_{2^{m+1}} - \phi_{m+1}(x_1,\dots,x_{2^{m+1}})| 
= |s_1t_1 - \phi_1(s_2,t_2)| \\
\le & |s_1t_1 - s_1t_2| + |s_1t_2 - s_2t_2| + |s_2t_2 - \phi_1(s_2,t_2)| \\
\le & |t_1 - t_2| + |s_1 - s_2| + \epsilon 
\le  (2^{m+1} - 1)\epsilon.
\end{align*}
Hence, the assertion is true for $m+1$.

For general $d\ge 2$, we choose $m=\lceil \log_2 d \rceil$, then $2^{m-1} < d \le 2^m$. We define the target function $\phi:[-1,1]^d \to [-1,1]$ by
\[
\phi(\Bx):= \phi_m \left(
\begin{pmatrix}
\Id_d \\
\boldsymbol{0}_{(2^m-d)\times d}
\end{pmatrix} \Bx +
\begin{pmatrix}
\boldsymbol{0}_{d\times 1} \\
\boldsymbol{1}_{(2^m-d)\times 1}
\end{pmatrix}
\right),
\]
where $\Id_d$ is the $d\times d$ identity matrix, $\boldsymbol{0}_{p\times q}$ is the $p\times q$ zero matrix and $\boldsymbol{1}_{(2^m-d)\times 1}$ is an all ones vector. By Proposition \ref{basic construct}, $\phi \in \cN\cN(3k 2^m, 2m,216^m) \subseteq \cN\cN(6d k, 2\lceil \log_2 d \rceil,6^{3\lceil \log_2 d \rceil})$ and the approximation error is
\[
|x_1\cdots x_d - \phi(\Bx)| \le (2^m - 1)\epsilon \le 2d \epsilon = 6d k^{-2}.
\]
Furthermore, $\phi(\Bx)=0$ if $x_1\cdots x_d=0$ because $\phi_m$ has such property.
\end{proof}

In Lemma \ref{d product}, we constructed neural networks to approximate monomials. We can then approximate any $f\in \cH^\alpha$ by approximating its local Taylor expansion
\begin{equation}\label{taylor}
p(\Bx) = \sum_{\Bn\in \{0,1,\dots,N\}^d} \psi_\Bn(\Bx) \sum_{\|\Bs\|_1\le r} \frac{\partial^\Bs f(\frac{\Bn}{N})}{\Bs !} \left(\Bx - \frac{\Bn}{N} \right)^\Bs,
\end{equation}
where we use the usual conventions $\Bs ! = \prod_{i=1}^d s_i !$ and $(\Bx - \frac{\Bn}{N} )^\Bs = \prod_{i=1}^d (x_i-\frac{n_i}{N})^{s_i}$. The functions $\{\psi_\Bn\}_{\Bn}$ form a partition of unity of $[0,1]^d$ and each $\psi_\Bn$ is supported on a sufficiently small neighborhood of $\Bn/N$.

\begin{theorem}\label{upper bound}
For any $N,k\in \bN$ and $h\in \cH^\alpha$ with $\alpha = r+\beta$, where $r\in \bN_0 $ and $\beta\in (0,1]$, there exists $\phi\in \cN\cN(W,L,K)$ where
\begin{align*}
W &= 6(r+1)(d+r) d^r (N+1)^d k, \\
L &= 2\lceil \log_2 (d+r) \rceil + 2, \\
K &= 6^{3\lceil \log_2 (d+r) \rceil+1}(r+1)d^rN(N+1)^d,
\end{align*}
such that
\[
\| h-\phi \|_{L^\infty([0,1]^d)} \le 2^d d^r(N^{-\alpha} + 6(r+1) (d+r) k^{-2} ).
\]
\end{theorem}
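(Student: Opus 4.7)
The plan is to approximate $h$ by its local Taylor polynomial $p$ from (\ref{taylor}) and then construct a norm-constrained network $\phi$ approximating $p$, splitting the error via $\|h-\phi\|_\infty \le \|h-p\|_\infty + \|p-\phi\|_\infty$.

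For the partition of unity in (\ref{taylor}) I would choose $\psi_\Bn(\Bx) := \prod_{i=1}^d \psi(Nx_i - n_i)$ with the ReLU-representable hat function $\psi(t) := \sigma(1-\sigma(t)-\sigma(-t)) = \max(0,1-|t|)$. This family satisfies $\sum_\Bn \psi_\Bn \equiv 1$ on $[0,1]^d$, each $\psi_\Bn$ vanishes outside $\{\|\Bx-\Bn/N\|_\infty \le 1/N\}$, and at most $2^d$ of them are positive at any point. Applying the multivariate Taylor remainder to $h\in\cH^\alpha$, together with the H\"older continuity of its order-$r$ derivatives and the multinomial identity $\sum_{\|\Bs\|_1=r} r!/\Bs! = d^r$, gives $|h(\Bx) - \sum_{\|\Bs\|_1\le r}(\partial^\Bs h(\Bn/N)/\Bs!)(\Bx-\Bn/N)^\Bs| \le (d^r/r!)\|\Bx-\Bn/N\|_\infty^\alpha$ whenever $\psi_\Bn(\Bx)>0$, so averaging against $\{\psi_\Bn\}$ yields $\|h-p\|_\infty \le d^r N^{-\alpha}$ (which is subsumed by the $2^d d^r N^{-\alpha}$ term in the claim).

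For each $(\Bn,\Bs)$ with $\|\Bs\|_1 \le r$, the summand $\psi_\Bn(\Bx)(\Bx-\Bn/N)^\Bs$ is a product of $d+\|\Bs\|_1 \le d+r$ scalar factors in $[-1,1]$: the $d$ hat values and the $\|\Bs\|_1$ signed factors $x_i-n_i/N$ counted with multiplicity, padded by the constant $1$ if $\|\Bs\|_1<r$. I would engineer a depth-$2$ preprocessing block that at layer $1$ stores $\sigma(\pm(Nx_i-n_i))$ (giving first-layer row norm $\le 2N$), at layer $2$ produces the hat values $\sigma(1-\sigma(Nx_i-n_i)-\sigma(-Nx_i+n_i))$ (row norm $3$) together with pass-through copies of the layer-$1$ neurons (row norm $1$), and at the output reads off each hat value directly and each signed factor as $\tfrac{1}{N}(\sigma(Nx_i-n_i)-\sigma(-Nx_i+n_i))$ (output row norm $\le 1$ for $N\ge 2$); this yields preprocessing norm $\kappa \le 6N$, where the $1/N$ in the signed-factor output row cancels the $N$ from the first layer. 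Composing via Proposition \ref{basic construct}(ii) with the product network from Lemma \ref{d product} for $d+r$ inputs (width $6(d+r)k$, depth $2\lceil\log_2(d+r)\rceil$, norm $6^{3\lceil\log_2(d+r)\rceil}$) then produces a subnetwork $\phi_{\Bn,\Bs}$ of width $6(d+r)k$, depth $2\lceil\log_2(d+r)\rceil+2$, and norm at most $6^{3\lceil\log_2(d+r)\rceil+1}N$, which approximates $\psi_\Bn(\Bx)(\Bx-\Bn/N)^\Bs$ within $6(d+r)k^{-2}$ and vanishes whenever any input factor does.

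Finally I would assemble $\phi := \sum_{(\Bn,\Bs)} (\partial^\Bs h(\Bn/N)/\Bs!)\,\phi_{\Bn,\Bs}$ by iterating Proposition \ref{basic construct}(iv). The coefficients have magnitude $\le 1$ and there are at most $(r+1)d^r(N+1)^d$ summands (using $\binom{d+r}{r}\le (r+1)d^r$, proved by $\binom{d+r}{r} \le d^r \prod_{i=1}^r (1+i/d)/r! \le d^r(r+1)!/r! = (r+1)d^r$), so summing widths, taking the maximum of depths, and summing $|c_{\Bn,\Bs}|K_{\Bn,\Bs}$ reproduces exactly the stated $W$, $L$, and $K$. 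The zero-preserving property of Lemma \ref{d product} forces $\phi_{\Bn,\Bs}(\Bx)=0$ whenever $\psi_\Bn(\Bx)=0$, so at each $\Bx$ only at most $2^d(r+1)d^r$ summands contribute, each with error $6(d+r)k^{-2}$, giving $\|p-\phi\|_\infty \le 2^d(r+1)d^r \cdot 6(d+r)k^{-2}$; combining with the Taylor bound yields the claimed error $2^d d^r(N^{-\alpha}+6(r+1)(d+r)k^{-2})$. The main obstacle will be the tight norm bookkeeping: the preprocessing must be engineered so that its norm is exactly $6N$ (rather than a multiple of $N$ with a looser constant), so that composition with Lemma \ref{d product} produces $6^{3\lceil\log_2(d+r)\rceil+1}N$ per subnetwork, and the iterated linear-combination rule must then contribute exactly the multiplicative factor $(r+1)d^r(N+1)^d$ without introducing extra width or depth blowup.
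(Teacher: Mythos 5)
Your proposal is correct and follows essentially the same route as the paper's proof: the same ReLU hat-function partition of unity, the same local Taylor expansion, the same composition of an affine/preprocessing block with the product network of Lemma \ref{d product}, and the same assembly via Proposition \ref{basic construct}, with matching bookkeeping for $W$, $L$, $K$ and the error constants. The only (harmless) deviations are that your weighted-average Taylor bound $d^r N^{-\alpha}$ is slightly sharper than the paper's $2^d d^r N^{-\alpha}$, and you count the multi-indices via $\binom{d+r}{r}$ rather than $\sum_{j\le r} d^j$.
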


\begin{proof}
Let
\[
\psi(t) = \sigma(1-|t|) = \sigma(1-\sigma(t)-\sigma(-t)) \in [0,1], \quad t\in \bR,
\]
then $\psi \in \cN\cN(2,2,3)$ and the support of $\psi$ is $[-1,1]$. For any $\Bn=(n_1,\dots,n_d)\in \{0,1,\dots,N\}^d$, define
\[
\psi_\Bn(\Bx) := \prod_{i=1}^{d} \psi(Nx_i-n_i), \quad \Bx=(x_1,\dots,x_d)^\intercal \in \bR^d,
\]
then $\psi_\Bn$ is supported on $\{\Bx\in \bR^d: \|\Bx-\tfrac{\Bn}{N}\|_\infty \le \tfrac{1}{N} \}$. The functions $\{\psi_\Bn\}_\Bn$ form a partition of unity of the domain $[0,1]^d$:
\[
\sum_{\Bn\in \{0,1,\dots,N\}^d} \psi_\Bn(\Bx) = \prod_{i=1}^{d} \sum_{n_i=0}^N \psi(Nx_i-n_i) \equiv 1, \quad \Bx\in [0,1]^d.
\]

Let $p(\Bx)$ be the local Taylor expansion (\ref{taylor}). For convenience, we denote $p_{\Bn,\Bs}(\Bx):= \psi_\Bn(\Bx) (\Bx-\frac{\Bn}{N})^\Bs$ and $c_{\Bn,\Bs}:=\partial^\Bs h(\frac{\Bn}{N})/\Bs !$. Then, $p_{\Bn,\Bs}$ is supported on $\{\Bx\in \bR^d: \|\Bx-\tfrac{\Bn}{N}\|_\infty \le \tfrac{1}{N} \}$ and
\[
p(\Bx) = \sum_{\Bn\in \{0,1,\dots,N\}^d} \sum_{\|\Bs\|_1\le r} c_{\Bn,\Bs} p_{\Bn,\Bs}(\Bx).
\]
Using Taylor's Theorem with integral remainder (see \citet[Lemma A.8]{petersen2018optimal} for example), it can be shown that the approximation error is
\begin{align*}
|f(\Bx) -p(\Bx)| &= \left| \sum_\Bn \psi_\Bn(\Bx) f(\Bx) - \sum_{\Bn} \psi_\Bn(x) \sum_{\|\Bs\|_1\le r} c_{\Bn,\Bs} \left(\Bx - \frac{\Bn}{N} \right)^\Bs \right| \\
&\le \sum_\Bn \psi_\Bn(\Bx) \left| f(\Bx) - \sum_{\|\Bs\|_1\le r} c_{\Bn,\Bs} \left(\Bx - \frac{\Bn}{N} \right)^\Bs \right| \\
&\le \sum_{\Bn: \|\Bx-\tfrac{\Bn}{N}\|_\infty < \tfrac{1}{N}} \left| f(\Bx) - \sum_{\|\Bs\|_1\le r} c_{\Bn,\Bs} \left(\Bx - \frac{\Bn}{N} \right)^\Bs \right| \\
&\le \sum_{\Bn: \|\Bx-\tfrac{\Bn}{N}\|_\infty < \tfrac{1}{N}} d^r \left\| \Bx - \frac{\Bn}{N} \right\|_\infty^\alpha \\
&\le 2^d d^r N^{-\alpha}.
\end{align*}

Let $\Phi_D\in \cN\cN(6D k, 2\lceil \log_2 D \rceil,6^{3\lceil \log_2 D \rceil})$ be the $D$-product function constructed in Lemma \ref{d product}. Then, we can approximate $p_{\Bn,\Bs}$ by
\[
\phi_{\Bn,\Bs}(\Bx) := \Phi_{d+\|\Bs\|_1}(\psi(Nx_1-n_1),\dots,\psi(Nx_d-n_d),\dots,x_i-\tfrac{n_i}{N},\dots),
\]
where the term $x_i-n_i/N$ appears in the input only when $s_i\neq 0$ and it repeats $s_i$ times. (When $d=1$ and $\Bs=\boldsymbol{0}$, we simply let $\phi_{n,\boldsymbol{0}}(x) =\psi(Nx-n) $.) Since $x_i-n_i/N = \sigma(x_i-n_i/N) - \sigma(-x_i+n_i/N)$ and $\|\Bs\|_1\le r$, by Proposition \ref{basic construct}, we have $\phi_{\Bn,\Bs}\in \cN\cN(6(d+r) k, 2\lceil \log_2 (d+r) \rceil + 2, 6^{3\lceil \log_2 (d+r) \rceil +1}N)$. By Lemma \ref{d product}, the approximation error is
\[
|p_{\Bn,\Bs}(\Bx) - \phi_{\Bn,\Bs}(\Bx)| \le 6(d+r) k^{-2}.
\]
Since $\Phi_D(t_1,\dots,t_D)=0$ when $t_1t_2\cdots t_D=0$, $\phi_{\Bn,\Bs}$ is supported on $\{\Bx\in \bR^d: \|\Bx-\tfrac{\Bn}{N}\|_\infty \le \tfrac{1}{N} \}$.

Now, we can approximate $p(\Bx)$ by
\[
\phi(\Bx) = \sum_{\Bn\in \{0,1,\dots,N\}^d} \sum_{\|\Bs\|_1\le r} c_{\Bn,\Bs} \phi_{\Bn,\Bs}(\Bx).
\]
Observe that $|c_{\Bn,\Bs}|=|\partial^\Bs f(\frac{\Bn}{N})/\Bs !|\le 1$ and the number of terms in the inner summation is
\[
\sum_{\|\Bs\|_1\le r} 1 = \sum_{j=0}^r \sum_{\|\Bs\|_1=j} 1 \le \sum_{j=0}^r d^j \le (r+1)d^r.
\]
The approximation error is, for any $\Bx\in [0,1]^d$,
\begin{align*}
|p(\Bx) - \phi(\Bx)| =& \left| \sum_\Bn \sum_{\|\Bs\|_1\le r} c_{\Bn,\Bs} p_{\Bn,\Bs}(\Bx) - \sum_\Bn \sum_{\|\Bs\|_1\le r} c_{\Bn,\Bs} \phi_{\Bn,\Bs}(\Bx) \right| \\
\le & \sum_\Bn \sum_{\|\Bs\|_1\le r} |c_{\Bn,\Bs}| |p_{\Bn,\Bs}(\Bx) - \phi_{\Bn,\Bs}(\Bx)| \\
\le& \sum_{\Bn: \|\Bx-\tfrac{\Bn}{N}\|_\infty < \tfrac{1}{N}} \sum_{\|\Bs\|_1\le r} |p_{\Bn,\Bs}(\Bx) - \phi_{\Bn,\Bs}(\Bx)| \\
\le & 6\cdot 2^d(r+1) (d+r) d^r k^{-2}.
\end{align*}
Hence, the total approximation error is
\[
|h(\Bx) - \phi(\Bx)| \le |h(\Bx) - p(\Bx)| + |p(\Bx) - \phi(\Bx)| \le 2^d d^r(N^{-\alpha} + 6(r+1) (d+r) k^{-2} ).
\]
Finally, by Proposition \ref{basic construct}, $\phi\in \cN\cN(6(r+1)(d+r) d^r (N+1)^d k, 2\lceil \log_2 (d+r) \rceil + 2, 6^{3\lceil \log_2 (d+r) \rceil+1}(r+1)d^rN(N+1)^d)$.
\end{proof}

Using the construction in Theorem \ref{upper bound}, we can give a proof of the approximation upper bound in Theorem \ref{app bounds}.

\begin{proof}[Proof of Theorem \ref{app bounds} (Upper bound)]
We choose $N= \lceil k^{2/\alpha} \rceil$ in the Theorem \ref{upper bound}, then it shows the existence of $\phi \in \cN\cN(W,L,K)$ with
\begin{align*}
W &= 6(r+1)(d+r) d^r (N+1)^d k \asymp k^{2d/\alpha+1}, \\
L &= 2\lceil \log_2 (d+r) \rceil + 2, \\
K &= 6^{3\lceil \log_2 (d+r) \rceil+1}(r+1)d^rN(N+1)^d \asymp k^{2(d+1)/\alpha},
\end{align*}
such that $\| h-\phi \|_{L^\infty([0,1]^d)} \le 2^d d^r(N^{-\alpha} + 6(r+1) (d+r) k^{-2} ) \lesssim k^{-2}$.
Therefore, $k \asymp K^{\alpha/(2d+2)}$, $W \asymp k^{2d/\alpha+1} \asymp K^{(2d+\alpha)/(2d+2)}$ and we have the approximation bound
\[
\| h-\phi \|_{L^\infty([0,1]^d)} \lesssim k^{-2} \lesssim K^{-\alpha/(d+1)}.
\]
Since increasing $W$ and $L$ can only decrease the approximation error, the bound holds for any $W\gtrsim K^{(2d+\alpha)/(2d+2)}$ and $L \ge 2 \lceil \log_2 (d+r) \rceil +2$.
\end{proof}

\subsection{Lower bounds}

In this section, we present two methods that give lower bounds for the approximation error using norm constrained neural networks. Both methods use the Rademacher complexity (Lemma \ref{Rademacher bound}) to lower bound the approximation capacity. The first method is inspired by \citet{maiorov1999degree}, which characterized the approximation order by pseudo-dimension (or VC dimension \citep{vapnik1971uniform}). This method compares the packing numbers of neural networks $\cN\cN(W,L,K)$ and the target function class $\cH^\alpha$ on a suitably chosen data set. The second method establishes the lower bound by finding a linear functional that distinguishes the approximator and target classes. Using the second method, we give explicit constant on the approximation lower bound in Theorem \ref{explicit lower bound}, but it only holds for $\cH^1$.

Let us begin with the estimation of the packing number of $\cH^\alpha$. We first construct a series of subsets $\cH^\alpha_N \subseteq \cH^\alpha$ with high complexity and simple structure. To this end, we choose a $C^\infty$ function $\psi:\bR^d\to [0,\infty)$ which satisfies $\psi(\boldsymbol{0})=1$ and $\psi(\Bx)=0$ for $\|\Bx\|_\infty\ge 1/4$, and let $C_{\psi,\alpha}>0$ be a constant such that $C_{\psi,\alpha} \psi \in \cH^\alpha(\bR^d)$. For any $N\in\bN$, we consider the function class
\begin{equation}\label{Halpha_N}
\cH^\alpha_N := \left\{ h_{\Ba}(\Bx) = \frac{C_{\psi,\alpha}}{N^{\alpha}} \sum_{\Bn\in \{0,\dots,N-1\}^d} a_{\Bn} \psi(N\Bx-\Bn): \Ba\in \cA_N \right\},
\end{equation}
where we denote $\cA_N:=\{ \Ba=(a_\Bn)_{\Bn \in\{0,\dots,N-1\}^d }: a_\Bn\in\{1,-1\}\}$ as the set of all sign vectors indexed by $\Bn$. Observe that, for the function $\psi_\Bn(\Bx) := \frac{C_{\psi,\alpha}}{N^{\alpha}} \psi(N\Bx-\Bn)$,
\begin{align*}
\sup_{\Bx\in \bR^d} |\partial^\Bs \psi_\Bn(\Bx)| &=  N^{\|\Bs\|_1-\alpha} C_{\psi,\alpha} \sup_{\Bx\in \bR^d} |\partial^\Bs \psi(\Bx)| \le 1, \quad &\|\Bs\|_1 \le r, \\
\sup_{\Bx\neq \By} \frac{|\partial^\Bs \psi_\Bn(\Bx)- \partial^\Bs \psi_\Bn(\By)|}{\|\Bx-\By\|_\infty^\beta} &= N^{r-\alpha} C_{\psi,\alpha} \sup_{\Bx\neq \By} \frac{|\partial^\Bs \psi(\Bx)- \partial^\Bs \psi(\By)|}{N^{-\beta} \|\Bx-\By\|_\infty^\beta} \le 1, \quad &\|\Bs\|_1 = r,
\end{align*}
where $\alpha = r+\beta>0$, with $r\in \bN_0, \beta\in (0,1]$ and we use the fact $C_{\psi,\alpha} \psi \in \cH^\alpha(\bR^d)$. Therefore, $\psi_\Bn$ is also in $\cH^\alpha(\bR^d)$. Since the functions $\psi_\Bn$ have disjoint supports and $a_{\Bn} \in \{1,-1\}$, one can check that each $h_\Ba$ is in $\cH^\alpha(\bR^d)$ and hence $\cH^\alpha_N \subseteq \cH^\alpha$.

Next, we consider the packing number of $\cH^\alpha_N$ on the set $\Lambda_N:=\{\Bn/N: \Bn\in \{0,\dots,N-1\}^d \}$. For convenience, we will denote the function values of a function class $\cF$ on $\Lambda_N$ by
\[
\cF(\Lambda_N) := \{ (f(\Bn/N))_{\Bn\in \{0,\dots,N-1\}^d} :f\in \cF \} \subseteq \bR^m,
\]
where $m=|\Lambda_N|=N^d$ is the cardinality of $\Lambda_N$. Observe that, for $h_\Ba \in \cH^\alpha_N$,
\begin{equation}\label{hs values}
h_\Ba(\Bn/N) = \frac{C_{\psi,\alpha}}{N^{\alpha}} \sum_{\Bi\in \{0,\dots,N-1\}^d} a_{\Bi} \psi(\Bn-\Bi) = \frac{C_{\psi,\alpha}}{N^{\alpha}} a_{\Bn},
\end{equation}
where the last equality is because $\psi(\Bn-\Bi)=1$ if $\Bn=\Bi$ and $\psi(\Bn-\Bi)=0$ if $\Bn\neq\Bi$. We conclude that
\[
\cH^\alpha_N(\Lambda_N) = \{ C_{\psi,\alpha} N^{-\alpha} \Ba: \Ba\in \cA_N \} = C_{\psi,\alpha} N^{-\alpha} \cA_N.
\]
We will estimate the packing number of $\cH^\alpha_N(\Lambda_N)$ under the metric
\begin{equation}\label{rho2}
\rho_2(\Bx,\By) := \left( \frac{1}{m} \sum_{i=1}^{m} (x_i-y_i)^2 \right)^{1/2} = m^{-1/2} \|\Bx-\By\|_2, \quad \Bx,\By\in \bR^m.
\end{equation}
The following combinatorial lemma is sufficient for our purpose.

\begin{lemma}\label{combination}
Let $\cA:=\{\Ba=(a_1,\dots,a_m): a_i\in\{1,-1\}\}$ be the set of all sign vectors on $\bR^m$.
For any $m\ge 8$, there exists a subset $\cB\subseteq \cA$ whose cardinality $|\cB|\ge 2^{m/4}$, such that any two sign vectors $\Ba\neq \Ba'$ in $\cB$ are different in more than $\lfloor m/8 \rfloor$ places.
\end{lemma}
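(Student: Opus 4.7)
The plan is to use a Gilbert-Varshamov style greedy argument. Identifying each sign vector $\Ba \in \cA$ with its coordinate pattern, the number of coordinates in which $\Ba$ and $\Ba'$ disagree is their Hamming distance $d_H(\Ba, \Ba')$, so the task is to construct a code $\cB \subseteq \cA$ of cardinality at least $2^{m/4}$ whose minimum Hamming distance strictly exceeds $\lfloor m/8 \rfloor$.

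The construction is greedy: set $\cA_0 := \cA$, and at each step pick an arbitrary $\Ba^{(j)} \in \cA_{j-1}$, add it to $\cB$, and let $\cA_j$ be $\cA_{j-1}$ minus the closed Hamming ball of radius $d := \lfloor m/8 \rfloor$ centered at $\Ba^{(j)}$. The procedure terminates when $\cA_j = \emptyset$, and by construction any two chosen vectors differ in more than $d$ coordinates. Since a Hamming ball of radius $d$ in $\cA$ has cardinality $V := \sum_{k=0}^{d} \binom{m}{k}$, each step removes at most $V$ elements from the pool, so $|\cB| \geq 2^m / V$.

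It then remains to show that $V \leq 2^{3m/4}$. For this I would invoke the standard entropy bound
\[
\sum_{k=0}^{\lfloor pm \rfloor} \binom{m}{k} \leq 2^{m H(p)}, \qquad p \in [0,1/2],
\]
where $H(p) = -p\log_2 p - (1-p)\log_2(1-p)$. Since $d/m \leq 1/8$ and $H$ is increasing on $[0,1/2]$, this yields $V \leq 2^{m H(1/8)}$, and an elementary numerical check shows
\[
H(1/8) = \tfrac{3}{8} + \tfrac{7}{8}\log_2(8/7) < \tfrac{3}{4}.
\]
Combining the two inequalities gives $|\cB| \geq 2^{m(1 - H(1/8))} > 2^{m/4}$, as required. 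The hypothesis $m \geq 8$ is used only to guarantee $d \geq 1$, so that the distance constraint is nontrivial. The main obstacle is essentially the numerical verification $H(1/8) < 3/4$; everything else is a textbook volume-packing calculation, and this inequality is precisely what makes the constants $1/4$ and $1/8$ in the statement match up.
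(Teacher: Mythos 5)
Your proposal is correct and follows essentially the same route as the paper: a greedy (Gilbert--Varshamov) selection combined with a volume bound on Hamming balls of radius $\lfloor m/8\rfloor$. The only difference is cosmetic --- the paper bounds the ball volume by $\bigl(\tfrac{me}{k}\bigr)^k \le (16e)^{m/8}\le 2^{3m/4}$, whereas you use the entropy bound $2^{mH(1/8)}$ with $H(1/8)<3/4$; both are standard and both suffice.
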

\begin{proof}
For any $\Ba \in \cA$, let $U(\Ba)$ be the set of all $\Ba'$ which are different from $\Ba$ in at most $k=\lfloor m/8 \rfloor$ places. Then,
\[
|U(\Ba)| \le \sum_{i=0}^{k} \binom{m}{i} \le \left( \frac{me}{k} \right)^k \le (16e)^{m/8} \le 64^{m/8} = 2^{3m/4},
\]
where the second inequality is from \citet[Exercise 0.0.5]{vershynin2018high}. We can construct the set $\cB= \{\Ba_1,\dots,\Ba_n\}$ as follows. We take $\Ba_1\in \cA$ arbitrarily. Suppose the elements $\Ba_1,\dots,\Ba_j \in \cA$ have been chosen, then $\Ba_{j+1}$ is taken arbitrarily from $\cA\setminus (\cup_{i=1}^j U(\Ba_i))$. Then, by construction, $\Ba_{j+1}$ and $\Ba_i$ ($1\le i\le j$) are different in more than $\lfloor m/8 \rfloor$ places. We do this process until the set $\cA\setminus (\cup_{i=1}^n U(\Ba_i))$ is empty. Since
\[
2^m = |\cA| \le \sum_{i=1}^n |U(\Ba_i)|\le n 2^{3m/4},
\]
we must have $|\cB|=n\ge 2^{m/4}$.
\end{proof}

By Lemma \ref{combination}, when $m=N^d\ge 8$, there exists a subset $\cB_N\subseteq \cA_N$ whose cardinality $|\cB_N|\ge 2^{m/4}$, such that any two vectors $\Ba\neq \Ba'$ in $\cB_N$ are different in more than $\lfloor m/8 \rfloor$ places. Thus,
\[
\rho_2(\Ba,\Ba') = m^{-1/2} \|\Ba-\Ba'\|_2 \ge 2 m^{-1/2} \lfloor m/8 \rfloor^{1/2} > 1/2.
\]
By equation (\ref{hs values}), this implies that
\[
\rho_2(h_\Ba(\Lambda_N), h_{\Ba'}(\Lambda_N)) > \frac{C_{\psi,\alpha}}{2 N^{\alpha}}.
\]
In other words, $\{ h_\Ba(\Lambda_N): \Ba\in \cB_N \}$ is a $\frac{1}{2}C_{\psi,\alpha}N^{-\alpha}$-packing of $\cH^\alpha_N(\Lambda_N)$ and hence we can lower bound the packing number
\begin{equation}\label{pack lower bound}
\cN_p(\cH^\alpha(\Lambda_N), \rho_2, \tfrac{1}{2}C_{\psi,\alpha}N^{-\alpha}) \ge \cN_p(\cH^\alpha_N(\Lambda_N), \rho_2, \tfrac{1}{2}C_{\psi,\alpha}N^{-\alpha}) \ge 2^{m/4} = 2^{N^d/4}.
\end{equation}

On the other hand, one can upper bound the packing number of a set in $\bR^m$ by its Rademacher complexity due to Sudakov minoration for Rademacher processes, see \citet[Corollary 4.14]{ledoux1991probability} for example.

\begin{lemma}[Sudakov minoration]\label{Sudakov}
There exists a constant $C>0$ such that for any  set $S \subseteq \bR^m$ and any $\epsilon>0$,
\[
\log \cN_p(S,\rho_2,\epsilon) \le C \frac{m \cR_m(S)^2 \log \left(2+\frac{1}{\sqrt{m} \cR_m(S)} \right)}{\epsilon^2}.
\]
\end{lemma}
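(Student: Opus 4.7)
The plan is to deduce this Sudakov-type bound for Rademacher processes from the classical Gaussian Sudakov minoration, combined with a sharp comparison between Gaussian and Rademacher widths. The main obstacle will be producing the specific logarithmic factor $\log(2+1/(\sqrt{m}\,\cR_m(S)))$ instead of a cruder $\sqrt{\log m}$.

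First, I would invoke Gaussian Sudakov. Let $g_1,\dots,g_m$ be i.i.d.\ standard Gaussians and set $X_s := \sum_{i=1}^m g_i s_i$ for $s\in S$. The intrinsic $L^2$-metric of this process is exactly $\|s-t\|_2$, so the standard Gaussian Sudakov minoration yields
\[
\log\cN_p(S,\|\cdot\|_2,\delta) \le K\,\frac{(\bE \sup_{s\in S} X_s)^2}{\delta^2}
\]
for some absolute constant $K$. Since $\rho_2 = m^{-1/2}\|\cdot\|_2$, a packing at scale $\epsilon$ in $\rho_2$ is one at scale $\sqrt{m}\,\epsilon$ in $\|\cdot\|_2$, and the bound rewrites as
\[
\log\cN_p(S,\rho_2,\epsilon) \le K\,\frac{(\bE\sup_s X_s)^2}{m\,\epsilon^2}.
\]

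Second, I would bound $\bE\sup_s X_s$ by the Rademacher supremum $m\cR_m(S) = \bE\sup_s \sum_i \xi_i s_i$. Writing $g_i = \xi_i |g_i|$ and conditioning on $|g|$, the Gaussian process becomes a weighted Rademacher process; a plain contraction argument gives $\bE\sup_s X_s \le \bE[\max_i |g_i|]\cdot m\cR_m(S) \lesssim \sqrt{\log m}\cdot m\cR_m(S)$. To upgrade the $\sqrt{\log m}$ into the sharper $\sqrt{\log(2+\sqrt{m}/(m\cR_m(S)))}$, one must restrict the maximum of $|g_i|$ to those coordinates that genuinely contribute: a Rademacher width much smaller than $\sqrt{m}$ forces $S$ to concentrate in a low-complexity strip, so only $\sim(\sqrt{m}/(m\cR_m(S)))^2$ effective directions matter. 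A clean way to implement this is to build an $\epsilon$-net for $S$ at the critical scale $\epsilon\asymp \cR_m(S)$ and apply the Gaussian maximal inequality to $\max_i|g_i|$ over that net; this is precisely the content of Corollary~4.14 in \citet{ledoux1991probability}.

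Substituting the refined comparison $\bE\sup_s X_s \lesssim m\,\cR_m(S)\,\sqrt{\log(2+1/(\sqrt{m}\,\cR_m(S)))}$ into the Sudakov bound from the first step gives
\[
\log\cN_p(S,\rho_2,\epsilon) \le C\,\frac{m\,\cR_m(S)^2\,\log\!\bigl(2+1/(\sqrt{m}\,\cR_m(S))\bigr)}{\epsilon^2},
\]
which is the claim. Everything except the second step is routine; the sole nontrivial input is the sharp Gaussian--Rademacher comparison, which replaces the crude $\log m$ factor by a ratio-dependent logarithm capturing the true effective dimensionality of $S$.
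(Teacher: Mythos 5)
The paper offers no proof of this lemma at all: it simply cites \citet[Corollary 4.14]{ledoux1991probability}, so what you are really attempting is a reconstruction of that corollary's proof. Your first step (Gaussian Sudakov minoration plus the rescaling $\rho_2=m^{-1/2}\|\cdot\|_2$) is fine, and the crude comparison $\bE\sup_s\sum_i g_is_i\lesssim\sqrt{\log m}\cdot m\cR_m(S)$ via contraction is also fine. The gap is in your second step: the ``refined'' width comparison
\[
\bE\sup_{s\in S}\sum_{i=1}^m g_is_i \;\lesssim\; m\,\cR_m(S)\,\sqrt{\log\bigl(2+1/(\sqrt{m}\,\cR_m(S))\bigr)}
\]
is false. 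Take $S=\{\boldsymbol{0}\}\cup\{\sqrt{m}\,\Be_1,\dots,\sqrt{m}\,\Be_m\}$. Then $\bE\sup_{s\in S}\sum_i g_is_i=\sqrt{m}\,\bE[(\max_j g_j)\lor 0]\asymp\sqrt{m\log m}$, while $m\cR_m(S)=\sqrt{m}\,(1-2^{-m})$, so $\sqrt{m}\,\cR_m(S)\asymp 1$ and your claimed bound gives only $\asymp\sqrt{m}$; the two sides differ by an unbounded factor $\sqrt{\log m}$. No argument of the form ``Gaussian width $\le F(\cR_m(S))$'' with a scale-free $F$ can work, because the quantity you are aiming for does not control the Gaussian width. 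Deferring this step to ``precisely the content of Corollary 4.14'' is moreover circular, since that corollary is the statement being proved.

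The same example shows something you should flag rather than try to prove: the lemma as printed is not literally correct. For that $S$ and $\epsilon=1/2$, all $m+1$ points are pairwise at $\rho_2$-distance at least $1$, so $\log\cN_p(S,\rho_2,1/2)=\log(m+1)$, whereas the stated right-hand side is $O(1)$. The actual content of Ledoux--Talagrand's Corollary 4.14 keeps the scale inside the logarithm: in Euclidean scale $\delta$ it reads $\delta\sqrt{\log\cN_p(S,\|\cdot\|_2,\delta)}\le C\,b\,\log^{1/2}(2+\delta\sqrt{m}/b)$ with $b=\bE\sup_s\sum_i\xi_is_i$, which after rescaling becomes $\log\cN_p(S,\rho_2,\epsilon)\le C\,m\,\cR_m(S)^2\,\epsilon^{-2}\log\bigl(2+\epsilon/\cR_m(S)\bigr)$. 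That $\epsilon$-dependent version is what you should prove (and is what the paper's subsequent argument actually needs, at the cost of an extra logarithmic factor there). Its proof is not a pure Gaussian--Rademacher width comparison: one truncates the Gaussians at a level $M$ chosen in terms of $\epsilon$ and $b$, bounds the truncated part by $M\cdot b$ via contraction, and absorbs the tail $\sum_i g_i\mathbf{1}_{\{|g_i|>M\}}s_i$ using Cauchy--Schwarz against the packing scale $\delta$ before invoking Gaussian Sudakov --- the scale $\epsilon$ enters through this balancing and cannot be eliminated.
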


To simplify the notation, we denote $\Phi=\cN\cN(W,L,K)$. Lemma \ref{Rademacher bound} gives upper and lower bounds for the Rademacher complexity of $\Phi(\Lambda_N)$: for $K\ge 1$ and $W\ge 2$,
\[
\frac{1}{2\sqrt{2m}}\le \frac{K}{2\sqrt{2m}} \le \cR_m (\Phi(\Lambda_N)) \le \frac{K\sqrt{2(L+2+\log(d+1))}}{\sqrt{m}}.
\]
Together with Lemma \ref{Sudakov}, we can upper bound the packing number
\begin{equation}\label{pack upper bound}
\log \cN_p(\Phi(\Lambda_N), \rho_2, \epsilon) \le C \frac{K^2 L}{\epsilon^2},
\end{equation}
for some constant $C>0$.

Now, we are ready to prove our main lower bound for approximation error in Theorem \ref{app bounds}. The idea is that, if the approximation error $\cE(\cH^\alpha, \cN\cN(W,L,K))$ is small enough, then the packing numbers of $\cH^\alpha(\Lambda_N)$ and $\Phi(\Lambda_N)$ are close, and hence we can compare the lower bound (\ref{pack lower bound}) and upper bound (\ref{pack upper bound}). We will show that this leads to a contradiction when the approximation error is too small.

\begin{proof}[Proof of Theorem \ref{app bounds} (Lower bound)]
Denote $\Lambda_N:=\{\Bn/N: \Bn\in \{0,\dots,N-1\}^d \}$ and $\Phi=\cN\cN(W,L,K)$ as above. We have shown (by (\ref{pack lower bound}) and (\ref{pack upper bound})) that, when $N^d\ge 8$, there exists $C_1,C_2>0$ such that the packing number
\begin{equation}\label{pack lower bound2}
\log_2 \cN_p(\cH^\alpha(\Lambda_N), \rho_2, 3C_1N^{-\alpha}) \ge N^d/4,
\end{equation}
and for any $\epsilon>0$,
\begin{equation}\label{pack upper bound2}
\log_2 \cN_p(\Phi(\Lambda_N), \rho_2, \epsilon) \le C_2 \frac{K^2 L}{\epsilon^2}.
\end{equation}

Assume the approximation error $\cE(\cH^\alpha, \Phi) < C_1N^{-\alpha}$, where $N\ge 8^{1/d}$ will be chosen later. Using (\ref{pack lower bound2}), let $\cF$ be a subset of $\cH^\alpha$ such that $\cF(\Lambda_N)$ is a $3C_1N^{-\alpha}$-packing of $\cH^\alpha(\Lambda_N)$ with $\log_2 |\cF(\Lambda_N)|\ge N^d/4$. By assumption, for any $f_i\in \cF$, there exists $g_i\in \Phi$ such that $\|f_i-g_i\|_\infty \le C_1N^{-\alpha}$. Let $\cG$ be the collection of all $g_i$. Then, $\log_2|\cG(\Lambda_N)| \ge N^d/4$ and, for any $i\neq j$,
\begin{align*}
&\rho_2(g_i(\Lambda_N),g_j(\Lambda_N)) \\
\ge& \rho_2(f_i(\Lambda_N),f_j(\Lambda_N)) - \rho_2(f_i(\Lambda_N),g_i(\Lambda_N)) - \rho_2(g_j(\Lambda_N),f_j(\Lambda_N)) \\
\ge& \rho_2(f_i(\Lambda_N),f_j(\Lambda_N)) - \|f_i-g_i\|_\infty - \|g_j-f_j\|_\infty \\
>& 3C_1N^{-\alpha} - C_1N^{-\alpha} - C_1N^{-\alpha} \\
=& C_1N^{-\alpha}.
\end{align*}
In other words, $\cG(\Lambda_N)$ is a $C_1N^{-\alpha}$-packing of $\Phi(\Lambda_N)$. Combining with (\ref{pack upper bound2}), we have
\[
\frac{N^d}{4} \le \log_2 \cN_p(\Phi(\Lambda_N), \rho_2, C_1N^{-\alpha}) \le C_2 \frac{K^2 L}{C_1^2 N^{-2\alpha}},
\]
which is equivalent to
\begin{equation}\label{contradiction}
N^{d-2\alpha} \le 4C_1^{-2}C_2 K^2L.
\end{equation}
Now, we choose $N=\max\{ \lceil (5C_1^{-2}C_2 K^2L)^{1/(d-2\alpha)} \rceil, \lceil 8^{1/d} \rceil \}$, then (\ref{contradiction}) is always false. This contradiction implies $\cE(\cH^\alpha, \Phi) \ge C_1N^{-\alpha} \gtrsim (K^2L)^{-\alpha/(d-2\alpha)}$.
\end{proof}

Finally, we provide an alternative method to prove the lower bound in Theorem \ref{app bounds} when $\alpha=1$. We observe that, for any $f\in \cH^1$ and $\phi \in \cN\cN(W,L,K)$, by Hahn-Banach theorem,
\[
\| f- \phi\|_{C ([0,1]^d)} = \sup_{\|T\|\neq 0} \frac{|Tf- T\phi|}{\|T\|} \ge \sup_{\|T\|\neq 0} \frac{|Tf|- |T\phi|}{\|T\|},
\]
where $T$ is any bounded linear functional on $C ([0,1]^d)$ with operator norm $\|T\|\neq 0$. Thus, for any nonzero linear functional $T$,
\begin{align*}
&\cE(\cH^1, \cN\cN(W,L,K))
\ge \sup_{f\in \cH^1} \inf_{\phi \in \cN\cN(W,L,K)}  \frac{|Tf|- |T\phi|}{\|T\|} \\
\ge& \frac{1}{\|T\|} \left(\sup_{f\in \cH^1} |T f| - \sup_{\phi \in \cN\cN(W,L,K)} |T\phi|\right) = \frac{1}{\|T\|} \left(\sup_{f\in \cH^1} T f - \sup_{\phi \in \cN\cN(W,L,K)} T\phi\right).
\end{align*}
Hence, to provide a lower bound of $\cE(\cH^1, \cN\cN(W,L,K))$, we only need to find a linear functional $T$ that distinguishes $\cH^1$ and $\cN\cN(W,L,K)$. In order to use the Rademacher complexity bounds for neural networks (Lemma \ref{Rademacher bound}), we will consider the functional
\begin{equation}\label{Tn}
T_n h := \frac{1}{n} \sum_{i=1}^{n} h(\Bx_i) - \int_{[0,1]^d} h(\Bx) d\Bx, \quad h\in C([0,1]^d),
\end{equation}
where the points $\Bx_1,\dots,\Bx_n \in [0,1]^d$ will be chosen appropriately. Notice that, when $\{\Bx_i\}_{i=1}^n$ are randomly chosen from the uniform distribution on $[0,1]^d$, $T_nh$ is the difference of empirical average and expectation. The optimal transport theory \citep{villani2008optimal} provides a lower bound for $\sup_{f\in \cH^1}T_n f$, while the Rademacher complexity upper bounds $\sup_{\phi\in \cN\cN(W,L,K)}T_n \phi$ in expectation by symmetrization argument.

\begin{theorem}\label{explicit lower bound}
For any $W,L \in \bN$, $K \ge 1$ and $d\ge 3$,
\[
\cE(\cH^1, \cN\cN(W,L,K)) \ge c_d \left(K \sqrt{L+2+\log(d+1)}\right)^{-2/(d-2)},
\]
where $c_d = (d-2) 4^{-d/(d-2)} (d+1)^{-(d+1)/(d-2)}$.
\end{theorem}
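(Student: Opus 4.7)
My plan follows the roadmap sketched in the paragraph preceding the theorem: choose the linear functional $T_n$ defined in (\ref{Tn}), bound $\sup_{f\in\cH^1} T_n f$ from below and $\sup_{\phi\in\cN\cN(W,L,K)} T_n\phi$ from above, and use $\|T_n\|\le 2$ (since $|T_n h|\le 2\|h\|_\infty$ for $h\in C([0,1]^d)$). A suitable choice of $n$ and of the points $\Bx_1,\ldots,\Bx_n\in[0,1]^d$ will then deliver the stated lower bound after a one-dimensional optimization in $n$.

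For the lower bound on $\sup_{f\in\cH^1} T_n f$, I test with the nearest-point distance $f^*(\Bx):=-\min_{1\le i\le n}\|\Bx-\Bx_i\|_\infty$, which is $1$-Lipschitz in $\ell_\infty$ and takes values in $[-1,0]$ on $[0,1]^d$, so $f^*\in\cH^1$. Then $T_n f^* = \int_{[0,1]^d}\min_i\|\Bx-\Bx_i\|_\infty\,d\Bx$, and a union bound shows that the set $\{\Bx:\min_i\|\Bx-\Bx_i\|_\infty<r\}$ has Lebesgue measure at most $n(2r)^d$, so Markov's inequality gives $T_n f^*\ge r(1-n(2r)^d)$ for every $r>0$. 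Optimizing in $r$ (the stationary point is $r=\tfrac{1}{2}(n(d+1))^{-1/d}$) produces the \emph{deterministic} lower bound $\sup_{f\in\cH^1} T_n f\ge \tfrac{d}{2(d+1)^{(d+1)/d}}\,n^{-1/d}$, valid for every configuration of the $\Bx_i$.

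For the upper bound on $\sup_{\phi} T_n\phi$, I take $\Bx_1,\dots,\Bx_n$ i.i.d.\ uniform on $[0,1]^d$ so that $\bE[T_n\phi]=0$ for each fixed $\phi$. Standard symmetrization gives $\bE\sup_\phi T_n\phi\le 2\,\bE\,\cR_n(\Phi(\Bx_{1:n}))$ with $\Phi:=\cN\cN(W,L,K)$, and Proposition \ref{network class relation} together with Lemma \ref{Rademacher bound} (applied with $B=1$) bounds this by $\tfrac{2K\sqrt{2(L+2+\log(d+1))}}{\sqrt n}$. Since the lower bound of the previous paragraph holds for \emph{every} realization of the $\Bx_i$, a first-moment argument produces a single configuration at which both the deterministic lower bound and the expectation-based upper bound are in force simultaneously.

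Assembling these into the Hahn--Banach inequality from the paragraph preceding the theorem yields, for that configuration,
\[
\cE(\cH^1,\cN\cN(W,L,K))\ \ge\ \tfrac{1}{2}\Big(\tfrac{d}{2(d+1)^{(d+1)/d}}\,n^{-1/d}\;-\;\tfrac{2K\sqrt{2(L+2+\log(d+1))}}{\sqrt n}\Big),
\]
which has the form $\tfrac12(a n^{-1/d}-b n^{-1/2})$ with $a=\tfrac{d}{2(d+1)^{(d+1)/d}}$ and $b=2^{3/2}K\sqrt{L+2+\log(d+1)}$. Setting the derivative to zero gives $n^*=(bd/(2a))^{2d/(d-2)}$, and a direct substitution collapses the exponents (the key arithmetic identity is $\tfrac{d+1}{d}+\tfrac{2(d+1)}{d(d-2)}=\tfrac{d+1}{d-2}$) to yield a bound of the claimed form $c_d\,(K\sqrt{L+2+\log(d+1)})^{-2/(d-2)}$. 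The main obstacle I foresee is purely the bookkeeping of the exponents of $2$ and $d+1$ through the optimization so as to recover the precise constant $c_d=(d-2)4^{-d/(d-2)}(d+1)^{-(d+1)/(d-2)}$, and the verification that rounding $n^*$ up to an integer only loses a factor already absorbed into $c_d$; the hypothesis $d\ge 3$ enters here to ensure $2d/(d-2)$ is finite so that the two competing terms actually balance.
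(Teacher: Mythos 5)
Your proposal is correct and follows essentially the same route as the paper: the same functional $T_n$, the same volume/Markov estimate for the lower bound on $\sup_{f\in\cH^1}T_nf$ (the paper phrases it as a Wasserstein lower bound via Kantorovich--Rubinstein, you test directly with $-\min_i\|\cdot-\Bx_i\|_\infty$, but the computation and constant are identical), the same symmetrization plus Lemma \ref{Rademacher bound} upper bound, and the same balancing of $n$. The only caveat is the final bookkeeping you already flag: the paper takes $n=\lfloor (st)^{2d/(d-2)}\rfloor$ (not the exact stationary point) and uses $st\ge 2$ to absorb the rounding, which is how the stated $c_d$ is obtained.
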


\begin{proof}
Define the functional $T_n$ on $C([0,1]^d)$ by (\ref{Tn}). It is easy to check that $\|T_n\| \le 2$. We have shown that
\[
\cE(\cH^1, \cN\cN(W,L,K)) \ge \frac{1}{2} \left(\sup_{f\in \cH^1} T_n f - \sup_{\phi \in \Phi} T_n\phi\right)
\]
where we denote $\Phi=\cN\cN(W,L,K)$ to simplify the notation. Our analysis is divided into three steps.

\textbf{Step 1}: Lower bounding $\sup T_n f$. Observe that $\cH^1\subseteq \Lip 1$ and, for any $g\in \Lip 1$, the function $f= g-\min_{\Bx\in [0,1]^d} g(\Bx) \in \cH^1$ satisfies $T_n f = T_ng$. We conclude that
\[
\sup_{f\in \cH^1} T_n f = \sup_{g\in \Lip 1} T_n g.
\]
By the Kantorovich-Rubinstein duality \citep[Remark 6.5]{villani2008optimal},
\[
\sup_{g\in \Lip 1} T_n g = \cW_1 \left(\frac{1}{n} \sum_{i=1}^n \delta_{\Bx_i}, \cU \right) := \inf_{\mu} \int_{[0,1]^d \times [0,1]^d} \|\Bx-\By\|_\infty d\mu(\Bx,\By)
\]
is the $1$-Wasserstein distance between the discrete distribution $\frac{1}{n} \sum_{i=1}^n \delta_{\Bx_i}$ and the uniform distribution $\cU$ on $[0,1]^d$, where the infimum is taken over all joint probability distribution (also called coupling) $\mu$ on $[0,1]^d \times [0,1]^d$, whose marginal distributions are $\frac{1}{n} \sum_{i=1}^n \delta_{\Bx_i}$ and $\cU$ respectively. It is enough to estimate the $1$-Wasserstein distance.

We notice that, for any $r\in [0,1/2]$,
\begin{align*}
&\cU \left( \left\{ \By\in [0,1]^d: \min_{1\le i\le n} \|\Bx_i-\By\|_\infty \ge rn^{-1/d} \right\} \right) \\
=& 1- \cU \left( \left\{ \By\in [0,1]^d: \min_{1\le i\le n} \|\Bx_i-\By\|_\infty < rn^{-1/d} \right\} \right) \\
\ge & 1 - \sum_{i=1}^n \cU \left( \left\{ \By\in [0,1]^d: \|\Bx_i-\By\|_\infty < rn^{-1/d} \right\} \right) \\
\ge& 1 - n (2rn^{-1/d})^d = 1 - 2^dr^d.
\end{align*}
Hence, for any coupling $\mu$ and $r\in [0,1/2]$,
\begin{align*}
\int_{[0,1]^d \times [0,1]^d} \|\Bx-\By\|_\infty d\mu(\Bx,\By)
=& \int_{ \cup_{i=1}^n\{\Bx_i \} \times [0,1]^d} \|\Bx-\By\|_\infty d\mu(\Bx,\By) \\
\ge& \int_{ \cup_{i=1}^n\{\Bx_i \} \times [0,1]^d} \min_{1\le i\le n}\|\Bx_i-\By\|_\infty d\mu(\Bx,\By) \\
=& \int_{[0,1]^d} \min_{1\le i\le n}\|\Bx_i-\By\|_\infty d \cU(\By) \\
\ge& (1-2^dr^d) rn^{-1/d}.
\end{align*}
As a consequence, for any $n$ points $\Bx_1,\dots,\Bx_n \in [0,1]^d$,
\[
\sup_{f\in \cH^1} T_n f \ge \sup_{r\in [0,1/2]} (1-2^dr^d) rn^{-1/d} = 2^{-1}d(d+1)^{-1-1/d} n^{-1/d},
\]
where the supremum is attained when $r=2^{-1}(d+1)^{-1/d}$.

\textbf{Step 2}: Upper bounding $\sup T_n \phi$. Let $X_{1:n} = \{X_i\}_{i=1}^n$ be $n$ i.i.d. samples from the uniform distribution $\cU$ on $[0,1]^d$. We are going to upper bound
\begin{align*}
\cI_n := \bE_{X_{1:n}} \left[ \sup_{\phi\in \Phi} \left(\frac{1}{n} \sum_{i=1}^{n} \phi(X_i) - \int_{[0,1]^d} \phi(\Bx) d\Bx\right) \right]
= \bE_{X_{1:n}} \left[ \sup_{\phi\in \Phi} \left(\frac{1}{n} \sum_{i=1}^{n} \phi(X_i) - \bE_{X\sim \cU} [\phi(X)]\right) \right].
\end{align*}
We introduce a ghost sample dataset $X'_{1:n}=\{X_i'\}_{i=1}^n$ drawn i.i.d. from $\cU$, independent of  $X_{1:n}$. Then,
\begin{align*}
\cI_n =& \bE_{X_{1:n}} \left[\sup_{\phi\in \Phi} \left(\frac{1}{n} \sum_{i=1}^n \phi(X_i) - \bE_{X'_{1:n}}  \frac{1}{n} \sum_{i=1}^n \phi(X'_i)\right) \right] \\
\le& \bE_{X_{1:n},X'_{1:n}} \left[\sup_{\phi\in \Phi} \frac{1}{n} \sum_{i=1}^n (\phi(X_i) -  \phi(X'_i)) \right].
\end{align*}
Let $\xi_{1:n} = \{\xi_i\}_{i=1}^n$ be a sequence of i.i.d. Rademacher variables independent of $X_{1:n}$ and $X'_{1:n}$. Then, by symmetry, we can bound $\cI_n$ by Rademacher complexity:
\begin{align*}
\cI_n \le& \bE_{X_{1:n},X'_{1:n},\xi_{1:n}} \left[\sup_{\phi\in \Phi} \frac{1}{n} \sum_{i=1}^n \xi_i (\phi(X_i) -  \phi(X'_i)) \right] \\
\le& \bE_{X_{1:n},X'_{1:n},\xi_{1:n}} \left[\sup_{\phi\in \Phi} \frac{1}{n} \sum_{i=1}^n \xi_i \phi(X_i) + \sup_{\phi\in \Phi} \frac{1}{n} \sum_{i=1}^n -\xi_i \phi(X'_i) \right] \\
=& 2 \bE_{X_{1:n},\xi_{1:n}} \left[\sup_{\phi\in \Phi} \frac{1}{n} \sum_{i=1}^n \xi_i \phi(X_i) \right] \\
=& 2 \bE_{X_{1:n}} \left[ \cR_n(\Phi(X_{1:n})) \right],
\end{align*}
where we denote $\Phi(X_{1:n}) := \{(\phi(X_1),\dots,\phi(X_n))\in \bR^n: \phi \in \Phi\}$ and the second last equality is due to the fact that $X_i$ and $X'_i$ have the same distribution and the fact that $\xi_i$ and $-\xi_i$ have the same distribution.

By Lemma \ref{Rademacher bound}, for any $X_{1:n} \subseteq [0,1]^d$,
\[
\cR_n(\Phi(X_{1:n})) \le \sqrt{2}K \sqrt{L+2+\log(d+1)} n^{-1/2}.
\]
Hence, there exists $\Bx_1,\dots,\Bx_n \in [0,1]^d$ such that
\[
\sup_{\phi\in \cN\cN(W,L,K)} T_n \phi \le \cI_n \le 2\sqrt{2}K \sqrt{L+2+\log(d+1)} n^{-1/2}.
\]

\textbf{Step 3}: Optimizing $n$. We have shown that there exists $T_n$ such that
\begin{align*}
\cE(\cH^1, \cN\cN(W,L,K))
&\ge \frac{1}{2} \left(\sup_{f\in \cH^1} T_n f - \sup_{\phi \in \Phi} T_n\phi\right) \\
&\ge ds^{-1} n^{-1/d} - \sqrt{2}t n^{-1/2},
\end{align*}
where $s = 4(d+1)^{1+1/d}$ and $t = K \sqrt{L+2+\log(d+1)}$. In order to optimize over $n$, we can choose
\[
n = \left\lfloor (s t)^{\frac{2d}{d-2}} \right\rfloor.
\]
Then, since $s t\ge 2$, we have $n \ge (s t)^{\frac{2d}{d-2}} -1 \ge \frac{1}{2} (s t)^{\frac{2d}{d-2}}$ and
\begin{align*}
\cE(\cH^1, \cN\cN(W,L,K)) &\ge d s^{-1} (s t)^{-2/(d-2)} - 2t(s t)^{-d/(d-2)} \\
&= (d-2) s^{-d/(d-2)} t^{-2/(d-2)} \\
&= c_d \left(K \sqrt{L+2+\log(d+1)}\right)^{-2/(d-2)},
\end{align*}
where $c_d = (d-2) 4^{-d/(d-2)} (d+1)^{-(d+1)/(d-2)}$.
\end{proof}

\section{Applications to machine learning}\label{sec: application}

In this section, we apply Theorem \ref{app bounds} to two typical machine learning algorithms: regression by neural networks and distribution estimation by GANs. For regression, the goal is to estimate an unknown function $f_0$ from its noisy samples. One of the useful and effective methods is the empirical risk minimization, which estimates $f_0$ by minimizing some risk on the observed samples over some chosen hypothesis class. When $f_0$ is in some continuous function class and the hypothesis class is a ReLU neural network, the convergence rates of this estimator have been derived by \citep{schmidthieber2020nonparametric,nakada2020adaptive}. Here, we make a norm constraint on the weights and study the convergence rate of the corresponding estimator. As a consequence, our results provide statistical guarantee for  overparameterized networks, see Theorem \ref{convergence rate ERM} and Corollary \ref{convergence rate reg ERM}.  For distribution estimation, a GAN implicitly estimates the data distribution by training a generator that transports an easy-to-sample distribution to the data distribution, and a discriminator that distinguishes samples produced by the generator from true samples. It has been shown that GANs perform extremely well in practice \citep{gulrajani2017improved,miyato2018spectral,brock2019large}. We can combine the error analysis in \citet{huang2022error} with Theorem \ref{app bounds} to derive convergence rate for GANs with norm constrained neural networks as discriminator, which gives statistical guarantee on the performance of GANs, see Theorem \ref{GAN convergence rate} and Corollary \ref{GAN convergence rate reg}.

In the statistical analysis of learning algorithms, we often require that the hypothesis class is uniformly bounded. For any $B>0$, we will use the notations
\begin{align*}
\cN\cN_{d,k}^B(W,L) &:= \{\phi\in\cN\cN_{d,k}(W,L): \phi(\Bx)\in [-B,B]^k, \forall \Bx\in \bR^d\}, \\
\cN\cN_{d,k}^B(W,L,K) &:= \{\phi\in\cN\cN_{d,k}(W,L,K): \phi(\Bx)\in [-B,B]^k, \forall \Bx\in \bR^d\},
\end{align*}
which represent the neural network classes uniformly bounded by $B$. Note that we can truncate the output of $\phi\in \cN\cN_{d,k}(W,L,K)$ by applying $\chi_B(x) = (x \lor -B) \land B$ element-wise. Since
\[
\chi_B(x) = \sigma(x) - \sigma(-x) -(B+1)\sigma(\tfrac{x}{B+1}-\tfrac{B}{B+1}) +(B+1)\sigma(-\tfrac{x}{B+1}-\tfrac{B}{B+1}),
\]
it is not hard to see that $\chi_B\circ\phi \in \cN\cN_{d,k}^B(\max\{W,4k\},L+1,(2B+4)\max\{K,1\})$ by Proposition \ref{basic construct}. Therefore, the approximation upper bound in Theorem \ref{app bounds} also holds true for $\cN\cN^1(W,L,K)$ when $L \ge 2\lceil \log_2 (d+r) \rceil+3$.

\subsection{Regression}

Suppose we have a set of $n$ samples $S_n = \{(X_i,Y_i)\}_{i=1}^n \subseteq [0,1]^d \times \bR$ which are independently and identically generated from the regression model
\[
Y_i = f_0(X_i) + \eta_i, \quad X_i \sim \mu, \quad i=1,\dots,n,
\]
where $\mu$ is the marginal distribution of the covariates $X_i$ supported on $[0,1]^d$, and $\eta_i$ is an i.i.d. Gaussian noise independent of $X_i$ with $\bE[\eta_i]=0$ and $\bE[\eta_i^2] = V^2$, where $V\ge 0$. We aim to estimate the unknown target function $f_0\in \cH^\alpha$ by the empirical risk minimizer (ERM)
\begin{equation}\label{ERM}
\argmin_{\phi_\theta\in \cN\cN^1(W,L,K)} \cL_n(\phi_\theta) := \argmin_{\phi_\theta\in \cN\cN^1(W,L,K)} \frac{1}{n} \sum_{i=1}^n (\phi_\theta(X_i)- Y_i)^2.
\end{equation}
The performance of the estimation is measured by the expected risk
\[
\cL(\phi_\theta) := \bE_{(X,Y)} [(\phi_\theta(X)-Y)^2] = \bE_{X\sim \mu} [(\phi_\theta(X) - f_0(X))^2]  + V^2.
\]
It is equivalent to evaluate the estimator by the excess risk
\[
\| \phi_\theta - f_0\|^2_{L^2(\mu)} = \cL(\phi_\theta) - \cL(f_0).
\]

In deep learning, the optimization problem (\ref{ERM}) is generally solved by first order methods such as gradient descent or stochastic gradient descent  on the parameters $\theta$. Assume that $\widehat{\phi}_n$ is the output of a solver, say stochastic gradient descent, with optimization error $\epsilon_{opt}\ge 0$, i.e., 
\begin{equation}\label{phi_n}
\cL_n(\widehat{\phi}_n) \le \inf_{\phi_\theta\in \cN\cN^1(W,L,K)} \cL_n(\phi_\theta) + \epsilon_{opt}.
\end{equation}
Then, for any $\phi_\theta\in \cN\cN^1(W,L,K)$,
\begin{align*}
&\| \widehat{\phi}_n - f_0\|^2_{L^2(\mu)} = \cL(\widehat{\phi}_n) - \cL(f_0) \\
=& \left[\cL(\widehat{\phi}_n) - \cL_n(\widehat{\phi}_n) \right] + \left[\cL_n(\widehat{\phi}_n) - \cL_n(\phi_\theta)\right] + \left[\cL_n(\phi_\theta) - \cL(\phi_\theta)\right] + \left[\cL(\phi_\theta) - \cL(f_0)\right] \\
\le& \left[\cL(\widehat{\phi}_n) - \cL_n(\widehat{\phi}_n)\right] + \epsilon_{opt} + \left[\cL_n(\phi_\theta) - \cL(\phi_\theta)\right]  + \| \phi_\theta - f_0\|^2_{L^2(\mu)}.
\end{align*}
Observing that $\bE_{S_n} \cL_n(\phi_\theta) = \cL(\phi_\theta)$ and taking the infimum over $\phi_\theta$, we get
\begin{equation}\label{err decomposition}
\bE_{S_n} \left[\| \widehat{\phi}_n - f_0\|^2_{L^2(\mu)} \right] \le \inf_{\phi \in \cN\cN^1(W,L,K)} \| \phi - f_0\|^2_{L^2(\mu)} + \bE_{S_n}\left[\cL(\widehat{\phi}_n) - \cL_n(\widehat{\phi}_n) \right] + \epsilon_{opt},
\end{equation}
where we decompose the excess risk into three terms: approximation error $\inf_{\phi} \| \phi - f_0\|^2_{L^2(\mu)}$, statistical (generalization) error $\bE_{S_n}[\cL(\widehat{\phi}_n) - \cL_n(\widehat{\phi}_n)]$ and optimization error $\epsilon_{opt}$.

\begin{theorem}\label{convergence rate ERM}
Assume $f_0\in \cH^\alpha$ with $\alpha = r+\beta>0$, where $r\in \bN_0 $ and $\beta\in (0,1]$. There exists $c>0$ such that for any $W\ge c K^{(2d+\alpha)/(2d+2)}$ and any $L \ge 2 \lceil \log_2 (d+r) \rceil+3$ independent of $n$, if we choose
\[
K\asymp n^{(d+1)/(2d+4\alpha+2)},
\]
then, for any estimator $\widehat{\phi}_n \in \cN\cN^1(W,L,K)$ satisfying (\ref{phi_n}),
\[
\bE_{S_n} \left[\| \widehat{\phi}_n - f_0\|^2_{L^2(\mu)} \right] -\epsilon_{opt} \lesssim n^{-\alpha/(d+2\alpha+1)} \log n.
\]
\end{theorem}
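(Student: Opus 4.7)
The starting point is the error decomposition (\ref{err decomposition}), which expresses the expected excess risk of $\widehat{\phi}_n$ minus the optimization error $\epsilon_{opt}$ as the sum of an approximation term $\inf_{\phi\in\cN\cN^1(W,L,K)}\|\phi-f_0\|^2_{L^2(\mu)}$ and a statistical term $\bE_{S_n}[\cL(\widehat{\phi}_n)-\cL_n(\widehat{\phi}_n)]$. The plan is to bound these two terms separately as functions of $K$ (with $W$ chosen large enough to trigger the approximation theorem and $L$ fixed independent of $n$), and then tune $K$ to balance them.

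For the approximation term, I would invoke Theorem \ref{app bounds}(1) together with the truncation remark preceding this subsection, which extends the upper bound to the uniformly bounded class $\cN\cN^1(W,L,K)$ as soon as $W\ge cK^{(2d+\alpha)/(2d+2)}$ and $L\ge 2\lceil\log_2(d+r)\rceil+3$. Since both $f_0$ and every element of $\cN\cN^1(W,L,K)$ are bounded on $[0,1]^d$, the $L^2(\mu)$ error is dominated by the sup-norm error, giving
\[
\inf_{\phi\in\cN\cN^1(W,L,K)}\|\phi-f_0\|^2_{L^2(\mu)} \lesssim K^{-2\alpha/(d+1)}.
\]

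The statistical term is where the real work lies. I would first upper bound it by $\bE_{S_n}\sup_{\phi\in\cN\cN^1(W,L,K)}|\cL(\phi)-\cL_n(\phi)|$ and apply the standard symmetrization argument to rewrite this as (twice) the expected Rademacher complexity of the squared-loss class $\{(x,y)\mapsto(\phi(x)-y)^2:\phi\in \cN\cN^1(W,L,K)\}$. The main obstacle is that $Y_i=f_0(X_i)+\eta_i$ is unbounded because $\eta_i$ is Gaussian, so the squared loss is neither uniformly bounded nor globally Lipschitz in $\phi$. I would handle this by truncating the noise at level $\tau\asymp\sqrt{\log n}$: on the event $\{\max_i|\eta_i|\le\tau\}$, which fails with probability $O(n^{-c})$ for $c$ as large as desired, the labels lie in $[-(1+\tau),1+\tau]$ and the squared loss is $O(\tau)$-Lipschitz in $\phi(x)\in[-1,1]$. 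Talagrand's contraction inequality then controls the Rademacher complexity of the truncated loss class by $O(\tau)$ times that of $\cN\cN^1(W,L,K)$, which by Proposition \ref{network class relation} and Lemma \ref{Rademacher bound} is $O(K\sqrt{L+\log d}/\sqrt{n})$. Since $\eta_i$ has all moments, the excluded event contributes negligibly. Altogether,
\[
\bE_{S_n}[\cL(\widehat{\phi}_n)-\cL_n(\widehat{\phi}_n)] \lesssim \frac{K\sqrt{\log n}}{\sqrt{n}},
\]
hiding constants depending on $d$, $L$, and $V$.

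Plugging both bounds into (\ref{err decomposition}) and balancing $K^{-2\alpha/(d+1)}$ against $K\sqrt{\log n/n}$ yields the optimal scaling $K\asymp n^{(d+1)/(2d+4\alpha+2)}$ up to logarithmic factors, and substituting back produces an excess risk of order $n^{-\alpha/(d+2\alpha+1)}$ times a logarithmic factor in $n$, matching the claimed rate. The sole non-routine step is the truncation-plus-contraction argument, since the unbounded Gaussian labels prevent a direct application of the bounded-output concentration bounds underlying Lemma \ref{Rademacher bound}; once that is dispatched, the remainder is a routine oracle inequality combining the norm-controlled approximation bound with the norm-controlled Rademacher bound.
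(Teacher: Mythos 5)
Your proposal follows the paper's overall architecture exactly — the oracle decomposition (\ref{err decomposition}), the approximation bound $K^{-2\alpha/(d+1)}$ from Theorem \ref{app bounds} via the truncated class, and Lemma \ref{Rademacher bound} to control complexity in terms of $K$, then balancing — but it diverges on the one genuinely non-routine point, the unbounded Gaussian labels. The paper does not truncate the noise. Instead it expands the squared loss exactly, so that $\cL(\widehat{\phi}_n)-\cL_n(\widehat{\phi}_n)$ splits into (a) a uniform deviation of the \emph{bounded} class $\{(\phi-f_0)^2\}$, handled by symmetrization plus the contraction/structural properties of Rademacher complexity for squares of functions bounded by $2$, and (b) the cross term $\frac{2}{n}\sum_i\eta_i(\widehat{\phi}_n(X_i)-f_0(X_i))$, which is literally a Gaussian complexity of $\cF=\{\phi-f_0\}$ and is bounded by the Rademacher complexity times $\log n$ via \citet[Lemma 4]{bartlett2002rademacher}; the $\eta_i^2$ terms cancel against $V^2$ in expectation. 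Your truncation-at-$\tau\asymp\sqrt{\log n}$ plus Talagrand contraction route also works — note that $\cL(\widehat{\phi}_n)-\cL_n(\widehat{\phi}_n)\le 4+V^2$ deterministically since $\cL_n\ge 0$ and $\|\widehat{\phi}_n-f_0\|_\infty\le 2$, so the bad event indeed contributes only $O(n^{-c})$ — and it even yields a slightly better $\sqrt{\log n}$ factor. The one detail you should spell out is that conditioning on $\{\max_i|\eta_i|\le\tau\}$ (or replacing $\eta_i$ by its truncation) perturbs the i.i.d.\ law under which symmetrization is applied and introduces a small bias $\bE[\eta_i\mathbf{1}_{|\eta_i|\le\tau}]\neq 0$; these are exponentially small and routine to control, but the paper's exact decomposition sidesteps them entirely, which is what it buys in exchange for the extra $\sqrt{\log n}$.
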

\begin{proof}
Using the error decomposition (\ref{err decomposition}), we only need to estimate the approximation error and stochastic error. For the approximation error, by Theorem \ref{app bounds} and the choice of $W$ and $L$,
\[
\inf_{\phi \in \cN\cN^1(W,L,K)} \| \phi - f_0\|^2_{L^2(\mu)} \lesssim K^{-2\alpha/(d+1)}.
\]
For the statistical error,
\begin{align*}
&\bE_{S_n}\left[\cL(\widehat{\phi}_n) - \cL_n(\widehat{\phi}_n) \right] \\
=& \bE_{S_n} \left[ \| \widehat{\phi}_n - f_0\|^2_{L^2(\mu)} + V^2 - \left(\frac{1}{n} \sum_{i=1}^n (\widehat{\phi}_n(X_i)- f_0(X_i))^2 -2 \eta_i(\widehat{\phi}_n(X_i)- f_0(X_i)) + \eta_i^2\right) \right] \\
=& \bE_{X_{1:n}}  \left[ \| \widehat{\phi}_n - f_0\|^2_{L^2(\mu)}  - \frac{1}{n} \sum_{i=1}^n (\widehat{\phi}_n(X_i)- f_0(X_i))^2 \right] +2 \bE_{S_n}\left[ \frac{1}{n} \sum_{i=1}^n \eta_i(\widehat{\phi}_n(X_i)- f_0(X_i)) \right] \\
\le& \bE_{X_{1:n}} \left[ \sup_{f\in \cF} \bE_X[f^2(X)] - \frac{1}{n} \sum_{i=1}^n f^2(X_i) \right] + 2 \bE_{X_{1:n}} \bE_{\eta_{1:n}} \left[ \sup_{f\in \cF} \frac{1}{n} \sum_{i=1}^n \eta_i f(X_i) \right]
\end{align*}
where $\cF:= \{ \phi-f_0: \phi\in \cN\cN^1(W,L,K) \}$ and $X_{1:n}=\{X_i\}_{i=1}^n$ is the sequence of samples. By a standard symmetrization argument (similar to step 2 in the proof of Theorem \ref{explicit lower bound}), one can obtain
\[
\bE_{X_{1:n}} \left[ \sup_{f\in \cF} \bE_X[f^2(X)] - \frac{1}{n} \sum_{i=1}^n f^2(X_i) \right] \le 2 \bE_{X_{1:n}} \left[ \cR_n(\cF^2(X_{1:n})) \right],
\]
where we denote $\cF^2(X_{1:n}):= \{ (f^2(X_1),\dots,f^2(X_n)) \in \bR^n: f\in \cF \} \subseteq \bR^n$. Since $\|f_0\|_\infty\le 1$ and $\|f\|_\infty = \|\phi-f_0\|_\infty \le 2$ for any $f\in \cF$, by the structural properties of Rademacher complexity (see \citet[Theorem 12]{bartlett2002rademacher}), we have
\[
\bE_{X_{1:n}} \left[ \cR_n(\cF^2(X_{1:n})) \right] \le 8  \bE_{X_{1:n}} \cR_n(\cF(X_{1:n})) \le 8\left( \bE_{X_{1:n}} \cR_n(\Phi(X_{1:n})) + \frac{\|f_0\|_\infty}{\sqrt{n}} \right) \lesssim \frac{K}{\sqrt{n}},
\]
where $\Phi(X_{1:n}):= \{ (\phi(X_1),\dots,\phi(X_n)) \in \bR^n: \phi\in \cN\cN^1(W,L,K) \}$ and we use Lemma \ref{Rademacher bound} in the last inequality. On the other hand, the Gaussian complexity can be bounded by Rademacher complexity \citep[Lemma 4]{bartlett2002rademacher}:
\[
\bE_{X_{1:n}} \bE_{\eta_{1:n}} \left[ \sup_{f\in \cF} \frac{1}{n} \sum_{i=1}^n \eta_i f(X_i) \right] \lesssim \bE_{X_{1:n}} \left[ \cR_n(\cF(X_{1:n})) \right] \log n \lesssim \frac{K \log n}{\sqrt{n}}.
\]
Hence,
\[
\bE_{S_n}\left[\cL(\widehat{\phi}_n) - \cL_n(\widehat{\phi}_n) \right] \lesssim \frac{K \log n}{\sqrt{n}}.
\]

In summary, the error decomposition (\ref{err decomposition}) implies
\[
\bE_{S_n} \left[\| \widehat{\phi}_n - f_0\|^2_{L^2(\mu)} \right] -\epsilon_{opt} \lesssim K^{-2\alpha/(d+1)} + \frac{K \log n}{\sqrt{n}}.
\]
If we choose $K \asymp n^{(d+1)/(2d+4\alpha+2)}$, then
\[
\bE_{S_n} \left[\| \widehat{\phi}_n - f_0\|^2_{L^2(\mu)} \right] -\epsilon_{opt} \lesssim n^{-\alpha/(d+2\alpha+1)} \log n. \qedhere
\]
\end{proof}

\begin{remark}
We have estimated the learning rate of the ERM in expectation (with respect to the observed samples). High probability bounds on the error $\| \widehat{\phi}_n - f_0\|^2_{L^2(\mu)}$ can be similarly derived by using concentration inequalities for random processes, see \citep{boucheron2013concentration,anthony2009neural,shalevshwartz2014understanding,mohri2018foundations} for more details.
\end{remark}

The constrained optimization problem (\ref{ERM}) may be difficult to optimize in practice. As an alternative, one can use the regularized empirical risk minimization
\begin{equation}\label{regularized ERM}
\argmin_{\phi_\theta\in \cN\cN^1(W,L)} \cL_{n,\lambda}(\phi_\theta) := \argmin_{\phi_\theta\in \cN\cN^1(W,L)} \frac{1}{n} \sum_{i=1}^n (\phi_\theta(X_i)- Y_i)^2 + \lambda\kappa(\theta), \quad \lambda> 0.
\end{equation}
Assume that $\widehat{\phi}_{n,\lambda} \in \cN\cN^1(W,L)$ parameterized by $\widehat{\theta}_{n,\lambda}$ is the output of an optimization solver, say stochastic gradient descent,   with optimization error $\epsilon_{opt}\ge 0$, i.e., $\widehat{\theta}_{n,\lambda}$ is an  $\epsilon_{opt}$-optimal solution of (\ref{regularized ERM}) satisfying 
\begin{equation}\label{phi_n,lamda}
\cL_{n,\lambda}(\widehat{\phi}_{n,\lambda}) \le \inf_{\phi_\theta\in \cN\cN^1(W,L)} \cL_{n,\lambda}(\phi_\theta) + \epsilon_{opt}.
\end{equation}
Then, for any $K\ge 0$ and $\phi_\theta \in \cN\cN^1(W,L,K)$, we have
\[
\cL_n(\widehat{\phi}_{n,\lambda}) + \lambda \kappa(\widehat{\theta}_{n,\lambda}) = \cL_{n,\lambda}(\widehat{\phi}_{n,\lambda}) \le \cL_n(\phi_{\theta}) + \lambda \kappa(\theta) + \epsilon_{opt}.
\]
Taking the infimum over all $\phi_\theta \in \cN\cN^1(W,L,K)$, we get
\begin{equation}\label{error for phi_n,lamda}
\cL_n(\widehat{\phi}_{n,\lambda}) + \lambda \kappa(\widehat{\theta}_{n,\lambda}) \le \inf_{\phi_\theta\in \cN\cN^1(W,L,K)} \cL_n(\phi_{\theta}) + \lambda K + \epsilon_{opt}.
\end{equation}
Hence, $\widehat{\phi}_{n,\lambda}$ can be regard as a solution of the constrained optimization problem (\ref{ERM}) with optimization error bounded by $\lambda K+\epsilon_{opt}$ for certain $K$. As a corollary, we show that the regularized ERM can achieve the same convergence rate of ERM in Theorem \ref{convergence rate ERM}, when there is no noise and $\lambda$ is chosen appropriately.

\begin{corollary}\label{convergence rate reg ERM}
Under the assumption of Theorem \ref{convergence rate ERM} with zero noise $\eta_i=Y_i-f_0(X_i)=0$, there exists $c>0$ such that for any
\[
W \ge c n^{(2d+\alpha)/(4d+8\alpha+4)},  \quad L \ge 2 \lceil \log_2 (d+r) \rceil+3, \quad \lambda \asymp n^{-1/2},
\]
and any estimator $\widehat{\phi}_{n,\lambda} \in \cN\cN^1(W,L)$ satisfying (\ref{phi_n,lamda}) with optimization error
\[
\epsilon_{opt} \lesssim n^{-\alpha/(d+2\alpha+1)},
\]
we have
\[
\bE_{S_n} \left[\| \widehat{\phi}_{n,\lambda} - f_0\|^2_{L^2(\mu)} \right] \lesssim n^{-\alpha/(d+2\alpha+1)} \log n.
\]
\end{corollary}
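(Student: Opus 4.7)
The plan is to turn the regularized ERM into an (approximate) constrained ERM via the deterministic inequality (\ref{error for phi_n,lamda}), and then invoke the approximation/generalization machinery already developed for Theorem \ref{convergence rate ERM}. The key calibration is to apply (\ref{error for phi_n,lamda}) with $K = K^\star \asymp n^{(d+1)/(2d+4\alpha+2)}$, i.e.\ the same optimal norm constraint identified in Theorem \ref{convergence rate ERM}. With the prescribed $\lambda \asymp n^{-1/2}$ a direct computation gives $\lambda K^\star \asymp n^{-\alpha/(d+2\alpha+1)}$, matching the target rate; the width hypothesis $W \gtrsim n^{(2d+\alpha)/(4d+8\alpha+4)}$ is exactly what is needed so that $W \gtrsim (K^\star)^{(2d+\alpha)/(2d+2)}$, which in turn is required by Theorem \ref{app bounds}.

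Next I would bound each of the three terms on the right-hand side of (\ref{error for phi_n,lamda}) with $K = K^\star$. By Theorem \ref{app bounds} (applied to $\cN\cN^1(W,L,K^\star)$, as justified by the truncation discussion at the beginning of Section \ref{sec: application}), there exists $\phi^\star \in \cN\cN^1(W,L,K^\star)$ with $\|\phi^\star - f_0\|_\infty^2 \lesssim (K^\star)^{-2\alpha/(d+1)} \asymp n^{-\alpha/(d+2\alpha+1)}$. The zero-noise assumption $Y_i = f_0(X_i)$ is crucial here: it forces $\cL_n(\phi) \le \|\phi - f_0\|_\infty^2$ for every $\phi$, so that $\inf_{\phi_\theta\in \cN\cN^1(W,L,K^\star)} \cL_n(\phi_\theta) \lesssim n^{-\alpha/(d+2\alpha+1)}$. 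Combined with $\lambda K^\star \asymp n^{-\alpha/(d+2\alpha+1)}$ and the hypothesis on $\epsilon_{opt}$, the right-hand side of (\ref{error for phi_n,lamda}) is $\lesssim n^{-\alpha/(d+2\alpha+1)}$.

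Non-negativity of each summand on the left-hand side of (\ref{error for phi_n,lamda}) then gives both $\cL_n(\widehat{\phi}_{n,\lambda}) \lesssim n^{-\alpha/(d+2\alpha+1)}$ and, after dividing by $\lambda$, a pathwise norm bound $\kappa(\widehat{\theta}_{n,\lambda}) \lesssim \lambda^{-1} n^{-\alpha/(d+2\alpha+1)} \asymp K^\star$. Writing $\tilde K$ for this deterministic upper bound, one has $\widehat{\phi}_{n,\lambda} \in \cN\cN^1(W,L,\tilde K)$ almost surely. Since $V = 0$ implies $\cL(f_0) = 0$, I would decompose
\[
\| \widehat{\phi}_{n,\lambda} - f_0\|^2_{L^2(\mu)} = \cL_n(\widehat{\phi}_{n,\lambda}) + \bigl[\cL(\widehat{\phi}_{n,\lambda}) - \cL_n(\widehat{\phi}_{n,\lambda})\bigr].
\]
The first term is already controlled, and the second is dominated by $\sup_{\phi \in \cN\cN^1(W,L,\tilde K)}[\cL(\phi) - \cL_n(\phi)]$, whose expectation the Rademacher-complexity argument from the proof of Theorem \ref{convergence rate ERM} bounds by $\lesssim \tilde K \log n / \sqrt{n} \asymp n^{-\alpha/(d+2\alpha+1)} \log n$. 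Summing the two contributions yields the claim. The main obstacle is the passage to the pathwise norm bound on $\widehat{\theta}_{n,\lambda}$, since a priori $\widehat{\phi}_{n,\lambda}$ lies only in the unconstrained class $\cN\cN^1(W,L)$: it is exactly the penalty in (\ref{regularized ERM}), together with the specific calibration $\lambda \asymp n^{-1/2}$ and the zero-noise assumption that allows $\inf \cL_n$ to be dominated by the approximation error, that makes this bound match $K^\star$ and preserves the optimal balance between approximation and statistical error.
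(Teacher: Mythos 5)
Your proposal is correct and follows essentially the same route as the paper: use the deterministic inequality (\ref{error for phi_n,lamda}) with the zero-noise assumption to obtain the pathwise norm bound $\kappa(\widehat{\theta}_{n,\lambda}) \lesssim \lambda^{-1}(K^\star)^{-2\alpha/(d+1)} + K^\star + \lambda^{-1}\epsilon_{opt} \asymp K^\star$, so that $\widehat{\phi}_{n,\lambda}$ lands in a constrained class $\cN\cN^1(W,L,\tilde K)$ with $\tilde K \asymp K^\star$, and then balance approximation against the Rademacher-complexity generalization bound exactly as in Theorem \ref{convergence rate ERM}. The only (cosmetic) difference is that you unpack the final error decomposition by hand, whereas the paper re-invokes Theorem \ref{convergence rate ERM} as a black box treating $\widehat{\phi}_{n,\lambda}$ as an approximate constrained ERM with optimization error $\lambda\tilde K + \epsilon_{opt}$.
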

\begin{proof}
By Theorem \ref{app bounds}, there exists $c_0>0$ such that for any $W\ge c_0 K^{(2d+\alpha)/(2d+2)}$ and $L \ge 2 \lceil \log_2 (d+r) \rceil+3$,
\[
\cE(\cH^\alpha, \cN\cN(W,L,K)) \lesssim K^{-\alpha/(d+1)}.
\]
Since the noise $\eta_i=0$, inequality (\ref{error for phi_n,lamda}) implies
\begin{align*}
\kappa(\widehat{\theta}_{n,\lambda}) &\le \frac{1}{\lambda} \inf_{\phi_\theta\in \cN\cN^1(W,L,K)} \frac{1}{n} \sum_{i=1}^n (\phi_\theta(X_i)- f_0(X_i))^2 + K + \frac{\epsilon_{opt}}{\lambda} \\
& \lesssim \lambda^{-1}K^{-2\alpha/(d+1)} + K + \lambda^{-1} \epsilon_{opt}.
\end{align*}
If $\lambda \asymp K^{-1} K^{-2\alpha/(d+1)}$ and $\epsilon_{opt} \lesssim \lambda K$, then $\widehat{\phi}_{n,\lambda} \in \cN\cN^1(W,L,\widetilde{K})$ with $K \le \widetilde{K} \lesssim K$. Using inequality (\ref{error for phi_n,lamda}) again, we have
\[
\cL_n(\widehat{\phi}_{n,\lambda}) \le \inf_{\phi_\theta\in \cN\cN^1(W,L,\widetilde{K})} \cL_n(\phi_{\theta}) + \lambda \widetilde{K} + \epsilon_{opt},
\]
which implies $\widehat{\phi}_{n,\lambda}$ is a solution of the constrained optimization problem with optimization error $\lambda \widetilde{K}+\epsilon_{opt}$. Now, we choose $\widetilde{K} \asymp K\asymp n^{(d+1)/(2d+4\alpha+2)}$ and
\[
W \ge c_0 \widetilde{K}^{(2d+\alpha)/(2d+2)} \asymp n^{(2d+\alpha)/(4d+8\alpha+4)}.
\]
Then, $\lambda \asymp n^{-1/2}$ and $\epsilon_{opt} \lesssim \lambda K \lesssim n^{-\alpha/(d+2\alpha+1)}$. Therefore, Theorem \ref{convergence rate ERM} implies
\[
\bE_{S_n} \left[\| \widehat{\phi}_{n,\lambda} - f_0\|^2_{L^2(\mu)} \right] \lesssim n^{-\alpha/(d+2\alpha+1)} \log n + \epsilon_{opt} + \lambda \widetilde{K} \lesssim n^{-\alpha/(d+2\alpha+1)} \log n,
\]
which completes the proof.
\end{proof}

\begin{remark}
Thanks to the norm constraint, both Theorem \ref{convergence rate ERM} and Corollary \ref{convergence rate reg ERM} hold with no requirement on the upper bound of the size of network. As a consequence, we can allow the width $W$ and depth $L$ large enough such that the number of weights is greater than the number of samples, i.e.,  over-parameterization is allowed. 
Although the regularized optimization problem of the form (\ref{regularized ERM}) is highly nonconvex, for over-parameterized models, the optimization error $\epsilon_{opt}$ of stochastic gradient descent decays linearly to zero as the number of iterations increase under certain conditions \citep{allenzhu2019convergence, du2019gradient,nguyen2021proof,liu2022loss}.
Hence, with the help of the approximation results with norm constraint in this paper, it may be possible to close the gap between the current theory of approximation, generalization and optimization and further demystify why over-parameterized neural networks work well in practice.
\end{remark}

\subsection{Generative adversarial networks}

Suppose we have $n$ i.i.d. samples $S_n = \{X_i\}_{i=1}^n$ from an unknown probability distribution $\mu$ supported on $[0,1]^d$. Generative adversarial networks implicitly estimate the data distribution $\mu$ by training a generator $g:\bR^k\to [0,1]^d$ and a discriminator $f:[0,1]^d \to \bR$ against each other. To be concrete, we choose an easy-to-sample source distribution $\nu$ on $\bR^k$ (for example, uniform or Gaussian distribution) and compute the generator $g$ by minimizing the distance between the empirical distribution $\widehat{\mu}_n = \frac{1}{n}\sum_{i=1}^n \delta_{X_i}$ and the push-forward distribution $g_\# \nu$:
\begin{equation}\label{GAN}
\argmin_{g\in \cG} d_{\cF}(\widehat{\mu}_n, g_\# \nu) := \argmin_{g\in \cG} \sup_{f\in \cF} \bE_{\widehat{\mu}_n}[f] - \bE_{g_\# \nu} [f],
\end{equation}
where $d_\cF$ is the Integral Probability Metric (IPM, \citet{muller1997integral}) with respect to the discriminator class $\cF$, and the push-forward measure $g_\# \nu$ of a measurable set $S\subseteq [0,1]^d$ is defined by $g_\# \nu(S) = \nu(g^{-1}(S))$. In practice, the generator and discriminator classes are often parameterized by neural networks. If the training is successful, $g_\# \nu$ should be close to the target distribution $\mu$ in some sense. In general, we can evaluate the performance by another IPM with respect to the evaluation class $\cH$
\[
d_\cH(\mu,g_\# \nu):= \sup_{h\in \cH} \bE_{\mu}[h] - \bE_{g_\# \nu} [h].
\]
For instance, in the Wasserstein GAN \citep{arjovsky2017wasserstein}, $\cH =\Lip 1$ is the $1$-Lipschitz class and $d_\cH = \cW_1$ is the Wasserstein distance by Kantorovich-Rubinstein duality \citep{villani2008optimal}. In Sobolev GAN \citep{mroueh2018sobolev}, $\cH$ is a Sobolev class.

Assume that $\widehat{g}_n\in \cG$ is a solution of the problem (\ref{GAN}) with optimization error $\epsilon_{opt}\ge 0$:
\begin{equation}\label{g_n}
d_{\cF}(\widehat{\mu}_n, (\widehat{g}_n)_\# \nu) \le \argmin_{g\in \cG} d_{\cF}(\widehat{\mu}_n, g_\# \nu) + \epsilon_{opt}.
\end{equation}
Similar to the analysis for regression, we have the following error decomposition for GANs.

\begin{lemma}[\citet{huang2022error}, Lemma 9]\label{GAN error decomposition}
Assume that $\cF$ is symmetric ($f\in \cF$ implies $-f\in \cF$), $\mu$ and $g_\#\nu$ are supported on $[0,1]^d$ for all $g\in \cG$. Then, for any $\widehat{g}_n \in \cG$ satisfying (\ref{g_n}),
\[
d_{\cH^\alpha}(\mu,(\widehat{g}_n)_\# \nu) \le  2\cE(\cH^\alpha,\cF)  + \inf_{g \in \cG} d_\cF(\widehat{\mu}_n,g_\# \nu) + [d_{\cF}(\mu,\widehat{\mu}_n) \land d_{\cH^\alpha}(\mu,\widehat{\mu}_n)] + \epsilon_{opt}.
\]
\end{lemma}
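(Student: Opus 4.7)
The plan is to bound $d_{\cH^\alpha}(\mu,(\widehat g_n)_\#\nu)$ by a sequence of triangle inequalities, where the main nontrivial step replaces the evaluation class $\cH^\alpha$ by the discriminator class $\cF$ at the price of $2\cE(\cH^\alpha,\cF)$.

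The first step will be to prove the following auxiliary inequality: for any two Borel probability measures $P,Q$ supported on $[0,1]^d$,
\[
d_{\cH^\alpha}(P,Q) \le 2\,\cE(\cH^\alpha,\cF) + d_{\cF}(P,Q).
\]
To see this, fix $\eta>0$; for each $h\in \cH^\alpha$ choose $f_h\in \cF$ with $\|h-f_h\|_{C([0,1]^d)}\le \cE(\cH^\alpha,\cF)+\eta$. Then
\[
\bE_P[h]-\bE_Q[h] = \bE_P[h-f_h] + (\bE_P[f_h]-\bE_Q[f_h]) + \bE_Q[f_h-h],
\]
and the first and third terms are each bounded by $\cE(\cH^\alpha,\cF)+\eta$ because both $P$ and $Q$ are supported on $[0,1]^d$. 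Since $\cF$ is symmetric, the middle term is at most $\sup_{f\in\cF}(\bE_P[f]-\bE_Q[f])=d_\cF(P,Q)$. Taking supremum over $h$ and sending $\eta\to 0$ gives the claim.

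Next I will derive two bounds for $d_{\cH^\alpha}(\mu,(\widehat g_n)_\#\nu)$ and take their minimum. On the one hand, applying the auxiliary inequality with $P=\mu$, $Q=(\widehat g_n)_\#\nu$, then the triangle inequality for $d_\cF$, and finally the near-optimality (\ref{g_n}) of $\widehat g_n$ gives
\[
d_{\cH^\alpha}(\mu,(\widehat g_n)_\#\nu) \le 2\,\cE(\cH^\alpha,\cF) + d_\cF(\mu,\widehat\mu_n) + \inf_{g\in\cG} d_\cF(\widehat\mu_n,g_\#\nu) + \epsilon_{opt}.
\]
On the other hand, applying the triangle inequality for $d_{\cH^\alpha}$ first, then the auxiliary inequality with $P=\widehat\mu_n$, $Q=(\widehat g_n)_\#\nu$, and again (\ref{g_n}), yields
\[
d_{\cH^\alpha}(\mu,(\widehat g_n)_\#\nu) \le d_{\cH^\alpha}(\mu,\widehat\mu_n) + 2\,\cE(\cH^\alpha,\cF) + \inf_{g\in\cG} d_\cF(\widehat\mu_n,g_\#\nu) + \epsilon_{opt}.
\]
Taking the smaller of these two bounds produces exactly the stated inequality.

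There is no real obstacle here; the whole argument is a bookkeeping exercise built on one substantive observation, namely the auxiliary inequality that converts $\cH^\alpha$-discrepancy into $\cF$-discrepancy plus twice the approximation error. The only point demanding a little care is keeping track of which quantity is measured by which IPM, so that the two triangle-inequality chains assemble correctly and the $\inf_{g\in\cG} d_\cF(\widehat\mu_n,g_\#\nu)$ term appears via the near-optimality condition rather than by taking an infimum inside a larger bound.
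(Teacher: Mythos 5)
Your proof is correct. The paper itself gives no proof of this lemma --- it is imported verbatim from \citet{huang2022error} (Lemma 9) --- and your argument is essentially the standard one used there: an auxiliary inequality converting $d_{\cH^\alpha}$ into $d_\cF$ at the cost of $2\cE(\cH^\alpha,\cF)$, followed by two triangle-inequality chains whose minimum yields the $d_{\cF}(\mu,\widehat{\mu}_n) \land d_{\cH^\alpha}(\mu,\widehat{\mu}_n)$ term. The bookkeeping with the near-optimality condition (\ref{g_n}) is handled correctly in both chains.
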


Note that the error $d_{\cH^\alpha}(\mu,(\widehat{g}_n)_\# \nu)$ is decomposed into four error terms: (1) discriminator approximation error $\cE(\cH^\alpha,\cF)$ measuring how well the discriminator $\cF$ approximates the evaluation class $\cH^\alpha$; (2) generator approximation error $\inf_{g \in \cG} d_\cF(\widehat{\mu}_n,g_\# \nu)$ measuring the approximation capacity of the generator; (3) statistical error $d_{\cF}(\mu,\widehat{\mu}_n) \land d_{\cH^\alpha}(\mu,\widehat{\mu}_n)$ due to the fact that we only have finite samples; and (4) the optimization error $\epsilon_{opt}$. When $\cF = \cN\cN(W,L,K)$ is a class of norm constrained neural networks, Theorem \ref{app bounds} provides an upper bound on the discriminator approximation error. Since any function $f\in \cN\cN(W,L,K)$ is $K$-Lipschitz, the generator approximation error can be bounded by
\[
\inf_{g \in \cG} d_\cF(\widehat{\mu}_n,g_\# \nu) \le K \inf_{g \in \cG} \cW_1 (\widehat{\mu}_n,g_\# \nu),
\]
where $\cW_1 = d_{\Lip 1}$ is the Wasserstein distance. The approximation capacity of generative networks in Wasserstein distance have been studied recently by \citep{perekrestenko2020constructive,perekrestenko2021high,yang2022capacity}. Finally, the statistical error can be bounded using empirical process theory.

\begin{theorem}\label{GAN convergence rate}
Let $\mu$ be a probability distribution supported on $[0,1]^d$ and $\alpha = r+\beta>0$, where $r\in \bN_0 $ and $\beta\in (0,1]$. Assume that the generator $\cG$ and source distribution $\nu$ satisfy $\inf_{g \in \cG} \cW_1 (\widehat{\mu}_n,g_\# \nu)=0$ for any samples $S_n = \{X_i\}_{i=1}^n$. There exists $c>0$ such that, if the discriminator is chosen as $\cF=\cN\cN(W,L,K)$ with
\[
W\ge c K^{(2d+\alpha)/(2d+2)},\quad  L \ge 2 \lceil \log_2 (d+r) \rceil +2, \quad K \asymp n^{(d+1)/d},
\]
then, for any GAN estimator $\widehat{g}_n \in \cG$ satisfying (\ref{g_n}),
\[
\bE_{S_n}[d_{\cH^\alpha}(\mu,(\widehat{g}_n)_\# \nu)] - \epsilon_{opt} \lesssim n^{-\alpha/d} \lor n^{-1/2} (\log n)^\tau,
\]
where $\tau =1$ if $2\alpha =d$, and $\tau =0 $ otherwise.
\end{theorem}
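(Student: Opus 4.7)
The plan is to apply the error decomposition in Lemma \ref{GAN error decomposition} and bound each of its four summands in turn. Since $\cF = \cN\cN(W,L,K)$ is symmetric (multiplication by $-1$ is absorbed into $A_L$ without changing $\kappa(\theta)$) and every generated distribution lies in $[0,1]^d$, the hypotheses of the lemma are satisfied, so
\[
d_{\cH^\alpha}(\mu,(\widehat g_n)_\#\nu) \le 2\cE(\cH^\alpha,\cF) + \inf_{g\in\cG} d_\cF(\widehat\mu_n, g_\#\nu) + \bigl[d_\cF(\mu,\widehat\mu_n)\land d_{\cH^\alpha}(\mu,\widehat\mu_n)\bigr] + \epsilon_{opt}.
\]

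First, I would bound the \emph{discriminator approximation error} directly by the upper half of Theorem \ref{app bounds}. The hypotheses on $W,L$ are exactly what that theorem requires, so $\cE(\cH^\alpha,\cF) \lesssim K^{-\alpha/(d+1)}$, and the choice $K\asymp n^{(d+1)/d}$ yields a contribution of order $n^{-\alpha/d}$. Next, for the \emph{generator approximation error}, I would use that every $f\in\cN\cN(W,L,K)$ is $K$-Lipschitz (since the composition of the layers has operator norm bounded by $\kappa(\theta) \le K$ and ReLU is $1$-Lipschitz). Thus $d_\cF(\widehat\mu_n, g_\#\nu)\le K\cW_1(\widehat\mu_n,g_\#\nu)$, and the assumption $\inf_{g\in\cG}\cW_1(\widehat\mu_n,g_\#\nu)=0$ immediately kills this term.

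The heart of the argument is the \emph{statistical error}, and I would use the $d_{\cH^\alpha}(\mu,\widehat\mu_n)$ side of the minimum in order to get a bound that is independent of $K$ (the $d_\cF$ side would re-introduce the factor $K$ and would be too large). This is a standard empirical process bound for an IPM indexed by a Hölder class: via symmetrization
\[
\bE_{S_n}\bigl[d_{\cH^\alpha}(\mu,\widehat\mu_n)\bigr] \lesssim \bE_{S_n,\xi_{1:n}}\Bigl[\sup_{h\in\cH^\alpha}\tfrac{1}{n}\textstyle\sum_{i=1}^n \xi_i h(X_i)\Bigr],
\]
followed by Dudley's entropy integral applied to the classical metric-entropy estimate $\log\cN_c(\cH^\alpha,\|\cdot\|_\infty,\epsilon)\asymp \epsilon^{-d/\alpha}$. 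Splitting the integral at the right scale gives the three-regime rate $n^{-\alpha/d}$ when $2\alpha<d$, $n^{-1/2}\log n$ at the critical exponent $2\alpha=d$, and $n^{-1/2}$ when $2\alpha>d$, which is exactly $n^{-\alpha/d}\lor n^{-1/2}(\log n)^\tau$ in the notation of the statement.

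Assembling the four contributions, the dominant terms are $n^{-\alpha/d}$ from the discriminator approximation and $n^{-\alpha/d}\lor n^{-1/2}(\log n)^\tau$ from the statistical part, so their maximum governs the bound and the proof is complete after taking expectations. The main obstacle is balancing the two uses of $K$: it must be large enough to drive $K^{-\alpha/(d+1)}$ down to the statistical rate $n^{-\alpha/d}$, which forces $K\asymp n^{(d+1)/d}$; at the same time, one must \emph{not} use the $K$-dependent bound $d_\cF(\mu,\widehat\mu_n)\le K\cW_1(\mu,\widehat\mu_n)$ for the statistical term, because $K\cdot n^{-1/d}$ would be catastrophic, which is precisely why Lemma \ref{GAN error decomposition} is stated with the minimum against $d_{\cH^\alpha}(\mu,\widehat\mu_n)$.
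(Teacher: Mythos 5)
Your proposal is correct and follows essentially the same route as the paper: it invokes Lemma \ref{GAN error decomposition}, bounds the discriminator approximation error by Theorem \ref{app bounds} with $K\asymp n^{(d+1)/d}$, eliminates the generator approximation error via the $K$-Lipschitz property of $\cN\cN(W,L,K)$ and the assumption $\inf_{g}\cW_1(\widehat\mu_n,g_\#\nu)=0$, and controls the statistical term through the $d_{\cH^\alpha}$ branch of the minimum using symmetrization, Dudley's entropy integral, and the classical bound $\log\cN_c(\cH^\alpha,\|\cdot\|_\infty,\epsilon)\lesssim\epsilon^{-d/\alpha}$. Your remark explaining \emph{why} the $d_{\cH^\alpha}$ side of the minimum is essential (the $d_\cF$ side would reintroduce a factor of $K$) is a useful clarification that the paper leaves implicit.
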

\begin{proof}
By Theorem \ref{app bounds} and our choice of $W$ and $L$, the discriminator approximation error satisfies
\[
\cE(\cH^\alpha, \cF) \lesssim K^{-\alpha/(d+1)}.
\]
If we choose $K \asymp n^{(d+1)/d}$, then $\cE(\cH^\alpha, \cF) \lesssim n^{-\alpha/d}$. Since any $f\in \cF$ is $K$-Lipschitz,
\[
\inf_{g \in \cG} d_\cF(\widehat{\mu}_n,g_\# \nu) = \inf_{g \in \cG} \sup_{f\in \cF} \bE_{\widehat{\mu}_n}[f] - \bE_{g_\# \nu} [f] \le K \inf_{g \in \cG} \cW_1 (\widehat{\mu}_n,g_\# \nu) = 0,
\]
by assumption. Using a standard symmetrization argument (similar to step 2 in the proof of Theorem \ref{explicit lower bound}), the statistical error $\bE_{S_n}[d_{\cH^\alpha}(\mu,\widehat{\mu}_n)]$ can be bounded by Rademacher complexity, which can be further bounded by Dudley’s
entropy integral (see \citet[Lemma 12]{huang2022error} for more details):
\[
\bE_{S_n}[d_{\cH^\alpha}(\mu,\widehat{\mu}_n)] \lesssim \inf_{0<\delta<1/2} \left( \delta + \frac{1}{\sqrt{n}} \int_\delta^{1/2}  \sqrt{\log \cN_c(\cH^\alpha,\|\cdot\|_\infty,\epsilon)} d\epsilon \right).
\]
By \citet{kolmogorov1961}, we have the following bound for the covering number
\[
\log \cN_c(\cH^\alpha,\|\cdot\|_\infty,\epsilon) \lesssim \epsilon^{-d/\alpha}.
\]
Then, a simple calculation shows (see \citet[Page 9]{huang2022error})
\[
\bE_{S_n}[d_{\cH^\alpha}(\mu,\widehat{\mu}_n)] \lesssim n^{-\alpha/d} \lor n^{-1/2} (\log n)^\tau.
\]
The conclusion then follows from Lemma \ref{GAN error decomposition}.
\end{proof}

\begin{remark}
The assumption that the generator approximation error is zero can be fulfilled by sufficiently large neural network class $\cG=\cN\cN(W_1,L_1)$. More precisely, it was shown in \citep{yang2022capacity,huang2022error} that if $\nu$ is absolutely continuous and $n \lesssim W_1^2 L_1$ then $\inf_{g \in \cG} \cW_1 (\widehat{\mu}_n,g_\# \nu)=0$ for any samples $S_n = \{X_i\}_{i=1}^n$.
\end{remark}

\begin{remark}
For nonparametric density estimation, \citet{liang2021how,singh2018nonparametric} established the minimax optimal rate $\cO(n^{-(\alpha+\beta)/(2\beta+d)} \lor n^{-1/2})$ for learning distributions in a Sobolev class with smoothness $\beta$, when the evaluation class is another Sobolev class with smoothness $\alpha$. The learning rate in Theorem \ref{GAN convergence rate} matches this optimal rate with $\beta=0$ up to a logarithmic factor, without making any assumptions on the regularity of the target distribution.

\end{remark}

\begin{remark}
The optimization problem (\ref{GAN}) implicitly assume that we can compute the expectation $\bE_{g_\# \nu} [f] = \bE_{\nu} [f \circ g]$. This expectation can be estimated by the empirical average $\bE_{\widehat{\nu}_m} [f \circ g]$, where $\widehat{\nu}_m = \frac{1}{m}\sum_{i=1}^m \delta_{Z_i}$ is the empirical distribution of $m$ random samples $\{Z_i\}_{i=1}^m$ from $\nu$. Since $\nu$ is easy to sample, we can take $m$ as large as we want. Hence, in stead of (\ref{GAN}), one can use
\[
\argmin_{g\in \cG} d_{\cF}(\widehat{\mu}_n, g_\# \widehat{\nu}_m) := \argmin_{g\in \cG} \sup_{f\in \cF} \bE_{\widehat{\mu}_n}[f] - \bE_{g_\# \widehat{\nu}_m} [f].
\]
Suppose $\widehat{g}_{n,m}\in \cG$ is a solution with optimization error $\epsilon_{opt}$. Using the argument in \citet{huang2022error}, one can show that $\widehat{g}_{n,m}$ achieves the same rate as $\widehat{g}_n$ in Theorem \ref{GAN convergence rate}, if $m$ is sufficiently large.
\end{remark}

It has been demonstrated that Lipschitz continuity of the discriminator is a key condition for a stable training of GANs \citep{arjovsky2017towards,arjovsky2017wasserstein}. In the original Wasserstein GAN \citep{arjovsky2017wasserstein}, the Lipschitz constraint on the discriminator is implemented by weight clipping. In the follow-up works, several regularization methods have been proposed to enforce Lipschitz condition, such as gradient penalty \citep{gulrajani2017improved,petzka2018regularization}, weight normalization \citep{miyato2018spectral} and weight penalty \citep{brock2019large}. In Theorem \ref{GAN convergence rate}, the Lipschitz constant is controlled by the norm constraint $\kappa(\theta) \le K$. We can also estimate the convergence rate of the corresponding GAN estimator regularized by weight penalty:
\begin{equation}\label{regularized GAN}
\argmin_{g\in \cG} d_{\cF,\lambda}(\widehat{\mu}_n, g_\# \nu) := \argmin_{g\in \cG} \sup_{\phi_\theta\in \cF} \bE_{\widehat{\mu}_n}[\phi_\theta] - \bE_{g_\# \nu} [\phi_\theta] - \lambda \kappa(\theta)^2, \quad \lambda> 0,
\end{equation}
where $\cF=\cN\cN(W,L)$ is a neural network class. The following proposition explains the relation between the regularized problem (\ref{regularized GAN}) and the constrained optimization problem (\ref{GAN}).

\begin{proposition}\label{regularized IPM}
For any probability distributions $\mu$ and $\nu$ defined on $\bR^d$, any $\lambda,K>0$,
\[
d_{\cF,\lambda}(\mu,\nu) = \frac{d_{\cF_K}(\mu,\nu)^2}{4\lambda K^2},
\]
where $\cF=\cN\cN(W,L)$ and $\cF_K:=\cN\cN(W,L,K)$.
\end{proposition}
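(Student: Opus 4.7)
Here is my plan.

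The key observation is that the ReLU network is positively homogeneous in the final weight matrix: if $\theta = ((A_0,\Bb_0),\dots,(A_{L-1},\Bb_{L-1}),A_L)$ and I set $\theta_c := ((A_0,\Bb_0),\dots,(A_{L-1},\Bb_{L-1}),cA_L)$ for any $c\in\bR$, then $\phi_{\theta_c}=c\,\phi_\theta$ and, directly from the definition (\ref{norm constraint}), $\kappa(\theta_c)=|c|\,\kappa(\theta)$. So rescaling the output matrix gives me an exact one-parameter family inside $\cF$ along which both the functional $A(\phi):=\bE_\mu[\phi]-\bE_\nu[\phi]$ and the penalty $\kappa^2$ behave in a controlled way.

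Using this, I would rewrite the regularized problem as a double sup. Since the map $\theta\mapsto\theta_c$ takes $\cF$ into $\cF$ for every $c$,
\[
d_{\cF,\lambda}(\mu,\nu)=\sup_{\theta}\sup_{c\in\bR}\bigl[c\,A(\phi_\theta)-\lambda c^2\kappa(\theta)^2\bigr].
\]
For fixed $\theta$ with $\kappa(\theta)>0$ the inner problem is a one-variable quadratic, with maximum $A(\phi_\theta)^2/(4\lambda\kappa(\theta)^2)$ attained at $c^\star=A(\phi_\theta)/(2\lambda\kappa(\theta)^2)$; when $\kappa(\theta)=0$ one has $A_L=0$, hence $\phi_\theta\equiv0$ and the contribution is $0$. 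Therefore
\[
d_{\cF,\lambda}(\mu,\nu)=\frac{1}{4\lambda}\sup_{\kappa(\theta)>0}\frac{A(\phi_\theta)^2}{\kappa(\theta)^2}.
\]

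It remains to identify this ratio-sup with $d_{\cF_K}(\mu,\nu)^2/K^2$. Applying the rescaling $\theta\mapsto\theta_{K/\kappa(\theta)}$ sends $\{\kappa>0\}$ onto $\{\kappa=K\}$ while preserving $A(\phi_\theta)/\kappa(\theta)=A(\phi_{\theta'})/K$, so
\[
\sup_{\kappa(\theta)>0}\frac{A(\phi_\theta)^2}{\kappa(\theta)^2}=\frac{1}{K^2}\sup_{\kappa(\theta')=K}A(\phi_{\theta'})^2.
\]
Because $\cF$ is closed under $\theta\mapsto\theta_{-1}$, the set of values $\{A(\phi_{\theta'}):\kappa(\theta')=K\}$ is symmetric about $0$, so the sup of $A^2$ on it equals $(\sup A)^2$. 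Finally I need $\sup_{\kappa(\theta')=K}A(\phi_{\theta'})=d_{\cF_K}(\mu,\nu)$: the inequality $\le$ is immediate, and for $\ge$ I use that $d_{\cF_K}(\mu,\nu)\ge 0$ (the zero function lies in $\cF_K$) and scale any $\theta'$ with $0<\kappa(\theta')\le K$ and $A(\phi_{\theta'})>0$ up to norm exactly $K$, which only enlarges $A$; degenerate cases ($\kappa(\theta')=0$, $A\le0$) contribute at most $0$. Combining the three displays yields the claimed identity.

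The only subtle step is the last one, making sure the reduction from $\sup_{\kappa\le K}$ to $\sup_{\kappa=K}$ does not lose anything — that is where one has to invoke both the non-negativity of $d_{\cF_K}$ and the symmetry of $\cF$ under the sign flip in $A_L$. The rest is the one-dimensional completion-of-the-square, which is routine once the rescaling trick is in place.
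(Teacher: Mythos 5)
Your proof is correct and rests on exactly the same two ingredients as the paper's: the homogeneity of both $\phi_\theta$ and $\kappa(\theta)$ under rescaling of the output matrix $A_L$, and the one-variable completion of the square $\sup_{a}(ca-\lambda a^2)=c^2/(4\lambda)$. The only difference is organizational — you optimize the scale parameter per network first and then relate the resulting ratio-sup to $d_{\cF_K}$, whereas the paper first reduces $d_{\cF_K}$ to $K\sup_{\kappa(\theta)=1}(\bE_\mu[\phi_\theta]-\bE_\nu[\phi_\theta])$ and then optimizes over the scale — so this is essentially the same argument.
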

\begin{proof}
Observe that, for any $a\ge 0$,
\[
\sup_{\phi_\theta\in \cF, \kappa(\theta)=a} \bE_\mu[\phi_\theta] - \bE_\nu[\phi_\theta] = a \sup_{\phi_\theta\in \cF, \kappa(\theta)=1} \bE_\mu[\phi_\theta] - \bE_\nu[\phi_\theta],
\]
because if $\phi_\theta$ is parameterized by $\theta= ((A_0,\Bb_0),\dots,(A_{L-1},\Bb_{L-1}), A_L)$, then $a\phi_\theta$ can be parameterized by $\theta'= ((A_0,\Bb_0),\dots,(A_{L-1},\Bb_{L-1}), a A_L)$ and $\kappa(\theta')= a \kappa(\theta)$. Thus,
\[
d_{\cF_K}(\mu,\nu) = \sup_{0\le a\le K} \sup_{\phi_\theta\in \cF, \kappa(\theta)=a} \bE_\mu[\phi_\theta] - \bE_\nu[\phi_\theta] = K \sup_{\phi_\theta\in \cF, \kappa(\theta)=1} \bE_\mu[\phi_\theta] - \bE_\nu[\phi_\theta].
\]
Therefore,
\begin{align*}
d_{\cF,\lambda}(\mu,\nu) &= \sup_{\phi_\theta\in \cF} \bE_{\mu}[\phi_\theta] - \bE_{\nu} [\phi_\theta] - \lambda \kappa(\theta)^2 \\
&= \sup_{a\ge 0} \sup_{\phi_\theta\in \cF, \kappa(\theta)=a} \bE_\mu[\phi_\theta] - \bE_\nu[\phi_\theta] - \lambda a^2 \\
&= \sup_{a\ge 0} \frac{a}{K} d_{\cF_K}(\mu,\nu) - \lambda a^2 \\
&= \frac{d_{\cF_K}(\mu,\nu)^2}{4\lambda K^2},
\end{align*}
where the supremum is achieved at $a=\frac{1}{2\lambda K} d_{\cF_K}(\mu,\nu)$ in the last equality.
\end{proof}

Combining Proposition \ref{regularized IPM} with Theorem \ref{GAN convergence rate}, we can obtain the learning rate of the solution of the regularized optimization problem (\ref{regularized GAN}).

\begin{corollary}\label{GAN convergence rate reg}
Under the assumption of Theorem \ref{GAN convergence rate}, let $W,L,K$ be the parameters in Theorem \ref{GAN convergence rate} and $\lambda = \frac{1}{4K^2} \asymp n^{-2(d+1)/d}$, then for any GAN estimator $\widehat{g}_{n,\lambda} \in \cG$ satisfying
\[
d_{\cF,\lambda}(\widehat{\mu}_n, (\widehat{g}_{n,\lambda})_\# \nu) \le \argmin_{g\in \cG} d_{\cF,\lambda}(\widehat{\mu}_n, g_\# \nu) + \epsilon_{opt},
\]
where $\cF=\cN\cN(W,L)$, we have
\[
\bE_{S_n}[d_{\cH^\alpha}(\mu,(\widehat{g}_{n,\lambda})_\# \nu)] - \sqrt{\epsilon_{opt}} \lesssim n^{-\alpha/d} \lor n^{-1/2} (\log n)^\tau,
\]
where $\tau =1$ if $2\alpha =d$, and $\tau =0 $ otherwise.
\end{corollary}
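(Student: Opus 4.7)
The plan is to reduce the regularized problem to the constrained one via Proposition \ref{regularized IPM} and then invoke Theorem \ref{GAN convergence rate} directly. With the specific choice $\lambda = 1/(4K^2)$, Proposition \ref{regularized IPM} gives the identity
\[
d_{\cF,\lambda}(\mu',\nu') = d_{\cF_K}(\mu',\nu')^2
\]
for any probability measures $\mu',\nu'$, where $\cF_K = \cN\cN(W,L,K)$. Thus the regularized objective is exactly the square of the constrained IPM, and the regularized and constrained problems share the same set of minimizers.

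Next I would translate the $\epsilon_{opt}$-approximate regularized optimality of $\widehat{g}_{n,\lambda}$ into an approximate constrained optimality. By the identity above, the hypothesis on $\widehat{g}_{n,\lambda}$ is equivalent to
\[
d_{\cF_K}(\widehat{\mu}_n, (\widehat{g}_{n,\lambda})_\# \nu)^2 \le \inf_{g\in\cG} d_{\cF_K}(\widehat{\mu}_n, g_\# \nu)^2 + \epsilon_{opt}.
\]
Applying $\sqrt{a+b} \le \sqrt{a} + \sqrt{b}$ on the right-hand side yields
\[
d_{\cF_K}(\widehat{\mu}_n, (\widehat{g}_{n,\lambda})_\# \nu) \le \inf_{g\in\cG} d_{\cF_K}(\widehat{\mu}_n, g_\# \nu) + \sqrt{\epsilon_{opt}},
\]
so $\widehat{g}_{n,\lambda}$ is a $\sqrt{\epsilon_{opt}}$-approximate minimizer of the original constrained GAN problem (\ref{GAN}) with discriminator class $\cF_K$.

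Finally, since the parameters $W,L,K$ are exactly those of Theorem \ref{GAN convergence rate}, I would apply that theorem with optimization error $\sqrt{\epsilon_{opt}}$ in place of $\epsilon_{opt}$. This directly gives
\[
\bE_{S_n}[d_{\cH^\alpha}(\mu,(\widehat{g}_{n,\lambda})_\# \nu)] - \sqrt{\epsilon_{opt}} \lesssim n^{-\alpha/d} \lor n^{-1/2} (\log n)^\tau,
\]
as claimed, and the stated scaling $\lambda \asymp n^{-2(d+1)/d}$ is just a restatement of $\lambda = 1/(4K^2)$ with $K \asymp n^{(d+1)/d}$. The whole argument is essentially bookkeeping; the only mildly nontrivial step is the square-root conversion of the optimization error, which is why the statement features $\sqrt{\epsilon_{opt}}$ rather than $\epsilon_{opt}$. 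No genuine obstacle arises, because Proposition \ref{regularized IPM} does all the structural work of identifying the regularized and constrained formulations.
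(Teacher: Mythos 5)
Your proposal is correct and follows essentially the same route as the paper: specialize Proposition \ref{regularized IPM} at $\lambda=1/(4K^2)$ to get $d_{\cF,\lambda}=d_{\cF_K}^2$, convert the $\epsilon_{opt}$-optimality via $\sqrt{a+b}\le\sqrt{a}+\sqrt{b}$ into $\sqrt{\epsilon_{opt}}$-optimality for the constrained problem, and invoke Theorem \ref{GAN convergence rate}. No gaps.
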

\begin{proof}
Since $\lambda= \frac{1}{4K^2}$, by Proposition \ref{regularized IPM},
\begin{align*}
d_{\cF_K}(\widehat{\mu}_n, (\widehat{g}_{n,\lambda})_\# \nu)^2 &= d_{\cF,\lambda}(\widehat{\mu}_n, (\widehat{g}_{n,\lambda})_\# \nu) \le \argmin_{g\in \cG} d_{\cF,\lambda}(\widehat{\mu}_n, g_\# \nu) + \epsilon_{opt} \\
&= \argmin_{g\in \cG} d_{\cF_K}(\widehat{\mu}_n, g_\# \nu)^2 + \epsilon_{opt},
\end{align*}
where we denote $\cF_K=\cN\cN(W,L,K)$. As a consequence,
\[
d_{\cF_K}(\widehat{\mu}_n, (\widehat{g}_{n,\lambda})_\# \nu) \le \sqrt{\argmin_{g\in \cG} d_{\cF_K}(\widehat{\mu}_n, g_\# \nu)^2 + \epsilon_{opt}} \le \argmin_{g\in \cG} d_{\cF_K}(\widehat{\mu}_n, g_\# \nu) + \sqrt{\epsilon_{opt}},
\]
which means $\widehat{g}_{n,\lambda} \in \cG$ is a solution of (\ref{GAN}) with discriminator $\cF_K$ and optimization error $\sqrt{\epsilon_{opt}}$. Hence, we can apply Theorem \ref{GAN convergence rate}.
\end{proof}

\section{Conclusions and future work}\label{sec: conclusion}

This paper has established upper and lower approximation bounds for ReLU neural networks with norm constraint on the weights. We used these bounds to analyze the convergence rate of estimating H\"older continuous functions by norm constrained neural networks. In particular, our results can be applied to over-parameterized neural networks, which are widely used in practice. We also showed that GAN can achieve optimal rate of learning probability distributions, when the discriminator is a properly chosen norm constrained neural network. Our results provide statistical guarantees on the performance of norm constrained neural networks.

Norm constrained or regularized neural networks have been widely used in practical applications \citep{neyshabur2015path,miyato2018spectral,brock2019large}. But the theory of their approximation and generalization capacity is still very limited. We hope that this work can motivate more study on this field. In the following, we list some possible directions for future research.
\begin{itemize}[parsep=0pt]
\item There is a gap between the upper and lower bounds in Theorem \ref{app bounds}. In \citep{yarotsky2018optimal,shen2020deep}, the optimal approximation rates, in terms of the numbers of weights and neurons, are derived through the so-called bit extraction technique \citep{bartlett2019nearly}. By using this technique, one can approximately discretize the input and reduce the approximation problem to an interpolation problem \citep{shen2020deep,lu2021deep}. This helps us avoid computing the outer summation in the local Taylor approximation (\ref{taylor}). Hence, we think it is worth to explore whether one can apply bit extraction technique to construct norm constrained neural networks that have better approximation rates.

\item The lower bound in Theorem \ref{app bounds} is derived through the upper bound for Rademacher complexity in Lemma \ref{Rademacher bound}. This upper bound is independent of the width, but depends on the depth. It is still unclear whether it is possible to obtain size-independent bounds without further assumption on the weights of neural networks.

\item In the definition of norm constraint (\ref{norm constraint}), we restrict ourselves to the operator norm induced by $\|\cdot\|_\infty$ for the weight matrices. It will be interesting to extend the results to other norms. A more fundamental question is how different norms affect the approximation and generalization capacity?
\end{itemize}

\section*{Acknowledgments}

The work of Y. Jiao is supported in part by the National Natural Science Foundation of China under Grant 11871474 and by the research
fund of KLATASDSMOE. The research of Y. Wang is supported by the HK RGC grant 16308518, the HK Innovation Technology Fund Grant  ITS/044/18FX and the Guangdong-Hong Kong-Macao Joint Laboratory for Data Driven Fluid Dynamics and Engineering Applications (Project 2020B1212030001). Y. Yang is grateful for the support from Huawei. We thank the anonymous reviewers for their helpful comments and suggestions.

\bibliographystyle{myplainnat}
\bibliography{References}

\begin{thebibliography}{58}
\providecommand{\natexlab}[1]{#1}
\providecommand{\url}[1]{\texttt{#1}}
\expandafter\ifx\csname urlstyle\endcsname\relax
  \providecommand{\doi}[1]{doi: #1}\else
  \providecommand{\doi}{doi: \begingroup \urlstyle{rm}\Url}\fi

\bibitem[Allen-Zhu et~al.(2019)Allen-Zhu, Li, and
  Song]{allenzhu2019convergence}
Zeyuan Allen-Zhu, Yuanzhi Li, and Zhao Song.
\newblock A convergence theory for deep learning via over-parameterization.
\newblock In \emph{Proceedings of the 36th International Conference on Machine
  Learning}, pages 242--252. 2019.

\bibitem[Anthony and Bartlett(2009)]{anthony2009neural}
Martin Anthony and Peter~L. Bartlett.
\newblock \emph{Neural network learning: Theoretical foundations}.
\newblock Cambridge University Press, 2009.

\bibitem[Arjovsky and Bottou(2017)]{arjovsky2017towards}
Martin Arjovsky and L{\'{e}}on Bottou.
\newblock Towards principled methods for training generative adversarial
  networks.
\newblock In \emph{5th International Conference on Learning Representations},
  2017.

\bibitem[Arjovsky et~al.(2017)Arjovsky, Chintala, and
  Bottou]{arjovsky2017wasserstein}
Martin Arjovsky, Soumith Chintala, and L{\'e}on Bottou.
\newblock {W}asserstein generative adversarial networks.
\newblock In \emph{Proceedings of the 34th International Conference on Machine
  Learning}, pages 214--223. 2017.

\bibitem[Barron(1993)]{barron1993universal}
Andrew~R. Barron.
\newblock Universal approximation bounds for superpositions of a sigmoidal
  function.
\newblock \emph{IEEE Transactions on Information Theory}, 39\penalty0
  (3):\penalty0 930--945, 1993.

\bibitem[Barron and Klusowski(2019)]{barron2019complexity}
Andrew~R. Barron and Jason~M. Klusowski.
\newblock Complexity, statistical risk, and metric entropy of deep nets using
  total path variation.
\newblock \emph{arXiv: 1902.00800}, 2019.

\bibitem[Bartlett(1998)]{bartlett1998sample}
Peter~L. Bartlett.
\newblock The sample complexity of pattern classification with neural networks:
  the size of the weights is more important than the size of the network.
\newblock \emph{IEEE Transactions on Information Theory}, 44\penalty0
  (2):\penalty0 525--536, 1998.

\bibitem[Bartlett and Mendelson(2002)]{bartlett2002rademacher}
Peter~L. Bartlett and Shahar Mendelson.
\newblock Rademacher and {G}aussian complexities: Risk bounds and structural
  results.
\newblock \emph{Journal of Machine Learning Research}, 3:\penalty0 463--482,
  2002.

\bibitem[Bartlett et~al.(2017)Bartlett, Foster, and
  Telgarsky]{bartlett2017spectrally}
Peter~L. Bartlett, Dylan~J. Foster, and Matus Telgarsky.
\newblock Spectrally-normalized margin bounds for neural networks.
\newblock In \emph{Advances in Neural Information Processing Systems}, pages
  6240--6249. 2017.

\bibitem[Bartlett et~al.(2019)Bartlett, Harvey, Liaw, and
  Mehrabian]{bartlett2019nearly}
Peter~L. Bartlett, Nick Harvey, Christopher Liaw, and Abbas Mehrabian.
\newblock Nearly-tight {VC}-dimension and {P}seudodimension bounds for
  piecewise linear neural networks.
\newblock \emph{Journal of Machine Learning Research}, 20\penalty0
  (63):\penalty0 1--17, 2019.

\bibitem[B\"olcskei et~al.(2019)B\"olcskei, Grohs, Kutyniok, and
  Petersen]{boelcskei2019optimal}
Helmut B\"olcskei, Philipp Grohs, Gitta Kutyniok, and Philipp Petersen.
\newblock Optimal approximation with sparsely connected deep neural networks.
\newblock \emph{{SIAM} Journal on Mathematics of Data Science}, 1\penalty0
  (1):\penalty0 8--45, 2019.

\bibitem[Boucheron et~al.(2013)Boucheron, Lugosi, and
  Massart]{boucheron2013concentration}
St{\'{e}}phane Boucheron, G{\'{a}}bor Lugosi, and Pascal Massart.
\newblock \emph{Concentration inequalities: {A} nonasymptotic theory of
  independence}.
\newblock Oxford University Press, 2013.

\bibitem[Brock et~al.(2019)Brock, Donahue, and Simonyan]{brock2019large}
Andrew Brock, Jeff Donahue, and Karen Simonyan.
\newblock Large scale {GAN} training for high fidelity natural image synthesis.
\newblock In \emph{7th International Conference on Learning Representations},
  2019.

\bibitem[Cisse et~al.(2017)Cisse, Bojanowski, Grave, Dauphin, and
  Usunier]{cisse2017parseval}
Moustapha Cisse, Piotr Bojanowski, Edouard Grave, Yann Dauphin, and Nicolas
  Usunier.
\newblock Parseval networks: {I}mproving robustness to adversarial examples.
\newblock In \emph{Proceedings of the 34th International Conference on Machine
  Learning}, pages 854--863. 2017.

\bibitem[Cybenko(1989)]{cybenko1989approximation}
George Cybenko.
\newblock Approximation by superpositions of a sigmoidal function.
\newblock \emph{Mathematics of Control, Signals, and Systems}, 2\penalty0
  (4):\penalty0 303--314, 1989.

\bibitem[Du et~al.(2019)Du, Lee, Li, Wang, and Zhai]{du2019gradient}
Simon Du, Jason Lee, Haochuan Li, Liwei Wang, and Xiyu Zhai.
\newblock Gradient descent finds global minima of deep neural networks.
\newblock In \emph{Proceedings of the 36th International Conference on Machine
  Learning}, pages 1675--1685. 2019.

\bibitem[Golowich et~al.(2020)Golowich, Rakhlin, and Shamir]{golowich2020size}
Noah Golowich, Alexander Rakhlin, and Ohad Shamir.
\newblock Size-independent sample complexity of neural networks.
\newblock \emph{Information and Inference: A Journal of the {IMA}}, 9\penalty0
  (2):\penalty0 473--504, 2020.

\bibitem[Goodfellow et~al.(2014)Goodfellow, Pouget-Abadie, Mirza, Xu,
  Warde-Farley, Ozair, Courville, and Bengio]{goodfellow2014generative}
Ian Goodfellow, Jean Pouget-Abadie, Mehdi Mirza, Bing Xu, David Warde-Farley,
  Sherjil Ozair, Aaron Courville, and Yoshua Bengio.
\newblock Generative adversarial nets.
\newblock In \emph{Advances in Neural Information Processing Systems}, pages
  2672--2680. 2014.

\bibitem[Gulrajani et~al.(2017)Gulrajani, Ahmed, Arjovsky, Dumoulin, and
  Courville]{gulrajani2017improved}
Ishaan Gulrajani, Faruk Ahmed, Mart{\'{\i}}n Arjovsky, Vincent Dumoulin, and
  Aaron~C. Courville.
\newblock Improved training of {Wasserstein GANs}.
\newblock In \emph{Advances in Neural Information Processing Systems}, pages
  5767--5777. 2017.

\bibitem[Haagerup(1981)]{haagerup1981best}
Uffe Haagerup.
\newblock The best constants in the {Khintchine} inequality.
\newblock \emph{Studia Mathematica}, 70\penalty0 (3):\penalty0 231--283, 1981.

\bibitem[Hornik(1991)]{hornik1991approximation}
Kurt Hornik.
\newblock Approximation capabilities of multilayer feedforward networks.
\newblock \emph{Neural Networks}, 4\penalty0 (2):\penalty0 251--257, 1991.

\bibitem[Huang et~al.(2022)Huang, Jiao, Li, Liu, Wang, and
  Yang]{huang2022error}
Jian Huang, Yuling Jiao, Zhen Li, Shiao Liu, Yang Wang, and Yunfei Yang.
\newblock An error analysis of generative adversarial networks for learning
  distributions.
\newblock \emph{Journal of Machine Learning Research}, 23\penalty0
  (116):\penalty0 1--43, 2022.

\bibitem[Huster et~al.(2019)Huster, Chiang, and Chadha]{huster2019limitations}
Todd Huster, Cho-Yu~Jason Chiang, and Ritu Chadha.
\newblock Limitations of the lipschitz constant as a defense against
  adversarial examples.
\newblock In \emph{ECML PKDD 2018 Workshops}, pages 16--29. 2019.

\bibitem[Kolmogorov and Tikhomirov(1961)]{kolmogorov1961}
Andrey~N. Kolmogorov and Vladimir~M. Tikhomirov.
\newblock $\epsilon$-entropy and $\epsilon$-capacity of sets in functional
  spaces.
\newblock \emph{American Mathematical Society Translations: Series 2},
  17:\penalty0 277--364, 1961.

\bibitem[Ledoux and Talagrand(1991)]{ledoux1991probability}
Michel Ledoux and Michel Talagrand.
\newblock \emph{Probability in {Banach} spaces: isoperimetry and processes}.
\newblock Springer, 1991.

\bibitem[Liang(2021)]{liang2021how}
Tengyuan Liang.
\newblock How well generative adversarial networks learn distributions.
\newblock \emph{Journal of Machine Learning Research}, 22\penalty0
  (228):\penalty0 1--41, 2021.

\bibitem[Liu et~al.(2022)Liu, Zhu, and Belkin]{liu2022loss}
Chaoyue Liu, Libin Zhu, and Mikhail Belkin.
\newblock Loss landscapes and optimization in over-parameterized non-linear
  systems and neural networks.
\newblock \emph{Applied and Computational Harmonic Analysis}, 59:\penalty0
  85--116, 2022.

\bibitem[Lu et~al.(2021)Lu, Shen, Yang, and Zhang]{lu2021deep}
Jianfeng Lu, Zuowei Shen, Haizhao Yang, and Shijun Zhang.
\newblock Deep network approximation for smooth functions.
\newblock \emph{SIAM Journal on Mathematical Analysis}, 53\penalty0
  (5):\penalty0 5465--5506, 2021.

\bibitem[Maiorov and Ratsaby(1999)]{maiorov1999degree}
Vitaly Maiorov and Joel Ratsaby.
\newblock On the degree of approximation by manifolds of finite
  pseudo-dimension.
\newblock \emph{Constructive Approximation}, 15\penalty0 (2):\penalty0
  291--300, 1999.

\bibitem[Miyato et~al.(2018)Miyato, Kataoka, Koyama, and
  Yoshida]{miyato2018spectral}
Takeru Miyato, Toshiki Kataoka, Masanori Koyama, and Yuichi Yoshida.
\newblock Spectral normalization for generative adversarial networks.
\newblock In \emph{6th International Conference on Learning Representations},
  2018.

\bibitem[Mohri et~al.(2018)Mohri, Rostamizadeh, and
  Talwalkar]{mohri2018foundations}
Mehryar Mohri, Afshin Rostamizadeh, and Ameet Talwalkar.
\newblock \emph{Foundations of machine learning}.
\newblock {MIT} Press, 2018.

\bibitem[Montanelli et~al.(2021)Montanelli, Yang, and Du]{montanelli2021deep}
Hadrien Montanelli, Haizhao Yang, and Qiang Du.
\newblock Deep {ReLU} networks overcome the curse of dimensionality for
  bandlimited functions.
\newblock \emph{Journal of Computational Mathematics}, 39\penalty0
  (6):\penalty0 801--815, 2021.

\bibitem[Mroueh et~al.(2018)Mroueh, Li, Sercu, Raj, and
  Cheng]{mroueh2018sobolev}
Youssef Mroueh, Chun{-}Liang Li, Tom Sercu, Anant Raj, and Yu~Cheng.
\newblock Sobolev {GAN}.
\newblock In \emph{6th International Conference on Learning Representations},
  2018.

\bibitem[M{\"u}ller(1997)]{muller1997integral}
Alfred M{\"u}ller.
\newblock Integral probability metrics and their generating classes of
  functions.
\newblock \emph{Advances in Applied Probability}, pages 429--443, 1997.

\bibitem[Nair and Hinton(2010)]{nair2010rectified}
Vinod Nair and Geoffrey~E. Hinton.
\newblock Rectified linear units improve restricted {B}oltzmann machines.
\newblock In \emph{Proceedings of the 27th International Conference on Machine
  Learning}, pages 807--814, 2010.

\bibitem[Nakada and Imaizumi(2020)]{nakada2020adaptive}
Ryumei Nakada and Masaaki Imaizumi.
\newblock Adaptive approximation and generalization of deep neural network with
  intrinsic dimensionality.
\newblock \emph{Journal of Machine Learning Research}, 21\penalty0
  (174):\penalty0 1--38, 2020.

\bibitem[Neyshabur et~al.(2015{\natexlab{a}})Neyshabur, Salakhutdinov, and
  Srebro]{neyshabur2015path}
Behnam Neyshabur, Ruslan Salakhutdinov, and Nathan Srebro.
\newblock Path-{SGD}: Path-normalized optimization in deep neural networks.
\newblock In \emph{Advances in Neural Information Processing Systems}, pages
  2422--2430. 2015{\natexlab{a}}.

\bibitem[Neyshabur et~al.(2015{\natexlab{b}})Neyshabur, Tomioka, and
  Srebro]{neyshabur2015norm}
Behnam Neyshabur, Ryota Tomioka, and Nathan Srebro.
\newblock Norm-based capacity control in neural networks.
\newblock In \emph{Proceedings of the 28th Conference on Learning Theory},
  pages 1376--1401. 2015{\natexlab{b}}.

\bibitem[Neyshabur et~al.(2018)Neyshabur, Bhojanapalli, and
  Srebro]{neyshabur2018pac}
Behnam Neyshabur, Srinadh Bhojanapalli, and Nathan Srebro.
\newblock A {PAC-Bayesian} approach to spectrally-normalized margin bounds for
  neural networks.
\newblock In \emph{6th International Conference on Learning Representations},
  2018.

\bibitem[Nguyen(2021)]{nguyen2021proof}
Quynh Nguyen.
\newblock On the proof of global convergence of gradient descent for deep
  {ReLU} networks with linear widths.
\newblock In \emph{Proceedings of the 38th International Conference on Machine
  Learning}, pages 8056--8062. 2021.

\bibitem[Perekrestenko et~al.(2020)Perekrestenko, M{\"{u}}ller, and
  B{\"{o}}lcskei]{perekrestenko2020constructive}
Dmytro Perekrestenko, Stephan M{\"{u}}ller, and Helmut B{\"{o}}lcskei.
\newblock Constructive universal high-dimensional distribution generation
  through deep {R}e{LU} networks.
\newblock In \emph{Proceedings of the 37th International Conference on Machine
  Learning}, pages 7610--7619. 2020.

\bibitem[Perekrestenko et~al.(2021)Perekrestenko, Eberhard, and
  B{\"o}lcskei]{perekrestenko2021high}
Dmytro Perekrestenko, L{\'e}andre Eberhard, and Helmut B{\"o}lcskei.
\newblock High-dimensional distribution generation through deep neural
  networks.
\newblock \emph{Partial Differential Equations and Applications}, 2\penalty0
  (5):\penalty0 1--44, 2021.

\bibitem[Petersen and Voigtlaender(2018)]{petersen2018optimal}
Philipp Petersen and Felix Voigtlaender.
\newblock Optimal approximation of piecewise smooth functions using deep {ReLU}
  neural networks.
\newblock \emph{Neural Networks}, 108:\penalty0 296--330, 2018.

\bibitem[Petzka et~al.(2018)Petzka, Fischer, and
  Lukovnikov]{petzka2018regularization}
Henning Petzka, Asja Fischer, and Denis Lukovnikov.
\newblock On the regularization of {Wasserstein GANs}.
\newblock In \emph{6th International Conference on Learning Representations},
  2018.

\bibitem[Pinkus(1999)]{pinkus1999approximation}
Allan Pinkus.
\newblock Approximation theory of the {MLP} model in neural networks.
\newblock \emph{Acta Numerica}, 8:\penalty0 143--195, 1999.

\bibitem[Schmidt-Hieber(2020)]{schmidthieber2020nonparametric}
Johannes Schmidt-Hieber.
\newblock Nonparametric regression using deep neural networks with {ReLU}
  activation function.
\newblock \emph{The Annals of Statistics}, 48\penalty0 (4):\penalty0
  1875--1897, 2020.

\bibitem[Schmidt-Hieber(2021)]{schmidthieber2021kolmogorov}
Johannes Schmidt-Hieber.
\newblock The {K}olmogorov-{A}rnold representation theorem revisited.
\newblock \emph{Neural Networks}, 137:\penalty0 119--126, 2021.

\bibitem[Shalev-Shwartz and Ben-David(2014)]{shalevshwartz2014understanding}
Shai Shalev-Shwartz and Shai Ben-David.
\newblock \emph{Understanding machine learning: From theory to algorithms}.
\newblock Cambridge University Press, 2014.

\bibitem[Shen et~al.(2020)Shen, Yang, and Zhang]{shen2020deep}
Zuowei Shen, Haizhao Yang, and Shijun Zhang.
\newblock Deep network approximation characterized by number of neurons.
\newblock \emph{Communications in Computational Physics}, 28\penalty0
  (5):\penalty0 1768--1811, 2020.

\bibitem[Singh et~al.(2018)Singh, Uppal, Li, Li, Zaheer, and
  P{\'{o}}czos]{singh2018nonparametric}
Shashank Singh, Ananya Uppal, Boyue Li, Chun{-}Liang Li, Manzil Zaheer, and
  Barnab{\'{a}}s P{\'{o}}czos.
\newblock Nonparametric density estimation under adversarial losses.
\newblock In \emph{Advances in Neural Information Processing Systems}, pages
  10246--10257. 2018.

\bibitem[Vapnik and Chervonenkis(1971)]{vapnik1971uniform}
Vladimir~N. Vapnik and Alexey~Ya. Chervonenkis.
\newblock On the uniform convergence of relative frequencies of events to their
  probabilities.
\newblock \emph{Theory of Probability and Its Applications}, 16\penalty0
  (2):\penalty0 264--280, 1971.

\bibitem[Vershynin(2018)]{vershynin2018high}
Roman Vershynin.
\newblock \emph{High-dimensional probability: An introduction with applications
  in data science}.
\newblock Cambridge University Press, 2018.

\bibitem[Villani(2008)]{villani2008optimal}
C{\'e}dric Villani.
\newblock \emph{Optimal transport: old and new}, volume 338.
\newblock Springer Science \& Business Media, 2008.

\bibitem[Yang et~al.(2022{\natexlab{a}})Yang, Li, and
  Wang]{yang2022approximation}
Yunfei Yang, Zhen Li, and Yang Wang.
\newblock Approximation in shift-invariant spaces with deep {ReLU} neural
  networks.
\newblock \emph{Neural Networks}, 153:\penalty0 269--281, 2022{\natexlab{a}}.

\bibitem[Yang et~al.(2022{\natexlab{b}})Yang, Li, and Wang]{yang2022capacity}
Yunfei Yang, Zhen Li, and Yang Wang.
\newblock On the capacity of deep generative networks for approximating
  distributions.
\newblock \emph{Neural Networks}, 145:\penalty0 144--154, 2022{\natexlab{b}}.

\bibitem[Yarotsky(2017)]{yarotsky2017error}
Dmitry Yarotsky.
\newblock Error bounds for approximations with deep {ReLU} networks.
\newblock \emph{Neural Networks}, 94:\penalty0 103--114, 2017.

\bibitem[Yarotsky(2018)]{yarotsky2018optimal}
Dmitry Yarotsky.
\newblock Optimal approximation of continuous functions by very deep {ReLU}
  networks.
\newblock In \emph{Proceedings of the 31st Conference on Learning Theory},
  pages 639--649. 2018.

\bibitem[Yarotsky and Zhevnerchuk(2020)]{yarotsky2020phase}
Dmitry Yarotsky and Anton Zhevnerchuk.
\newblock The phase diagram of approximation rates for deep neural networks.
\newblock In \emph{Advances in Neural Information Processing Systems}, pages
  13005--13015. 2020.

\end{thebibliography}
\end{document}